%% file: iclr2027_conference.tex
\documentclass{article}
\usepackage[T1]{fontenc}
\usepackage{iclr2027_conference,times} \iclrfinalcopy
\usepackage{amsmath,amssymb,amsthm,mathtools}
\usepackage{graphicx,booktabs,longtable,array,tabularx}
\usepackage{float}
\usepackage{hyperref,url,listings,seqsplit}
\hypersetup{pdftitle={Representable but Unlearned: Encoding Rank and the Interaction-Prediction Floor},pdfauthor={Anonymous},colorlinks=true,linkcolor=blue,citecolor=blue,urlcolor=blue}
\newtheorem{proposition}{Proposition}[section]
\newtheorem{definition}[proposition]{Definition}
\newtheorem{corollary}[proposition]{Corollary}
\newcommand{\R}{\mathbb R}
\newcommand{\1}{\mathbf 1}
\newcommand{\MSE}{\operatorname{MSE}}
\newcommand{\Var}{\operatorname{Var}}

\title{Representable but Unlearned:\\Encoding Rank and the\\Interaction-Prediction Floor}
\author{ Zahra Khodagholi \\ University of Central Florida \\ Orlando, Florida, USA \\ \texttt{zahra.khodagholi@ucf.edu} \And Niloofar Yousefi \\ University of Central Florida \\ Orlando, Florida, USA \\ \texttt{niloofar.yousefi@ucf.edu} }
\begin{document}
\newcommand{\NRows}{4,766}
\newcommand{\NEnsi}{2,927}
\newcommand{\NApp}{1,839}
\newcommand{\NSseven}{118}
\newcommand{\NAppPools}{17}
\newcommand{\NBpairs}{156}
\newcommand{\NBbackgrounds}{26}
\newcommand{\NBcomponents}{12}
\newcommand{\NBthreeConditions}{165}
\newcommand{\NBthreeRectangles}{140}
\newcommand{\NNewFits}{2,312}
\newcommand{\NNewUpdates}{1,407,967}
\newcommand{\NSelectedUpdates}{406,367}

\maketitle
\fancyhead{}
\renewcommand{\headrulewidth}{0pt}

\begin{abstract}
Input encodings can restrict which measured contrasts a predictor can jointly reproduce, even when no single contrast is forced to vanish. We compute the attainable contrast space from an encoder's equivalence classes and a fixed contrast design, without labels, loss, or a fitted model; projecting the recorded contrasts onto that space gives an empirical error floor for any unrestricted decoder on those classes. On a 140-rectangle siRNA interaction panel, a graph neural network's training-only feature mask merges 165 endpoint states into 90 classes and cuts the rank of the 140 interaction contrasts to 72. The resulting floor is 0.009980, which is 14.6\% of the fitted model's interaction squared error; the fitted model reaches 0.068335, slightly worse than a control predicting no interaction at all. A minimum of three restored chemistry columns recovers full rank. Refitting without the mask removes the floor entirely, yet interaction MSE improves by only 0.000017 under the reported protocol, and the restored columns remain absent from every training input. On a released RNA-splicing predictor, whose encoding is injective on the measured states, the same computation returns the full design rank of 1,986 and a floor of exactly zero. These results separate what an encoding permits from what a fitted model achieves; they do not identify what limits the remaining error. The rank check needs no fits and bounds what any amount of training under a fixed encoding can recover. The project repository is available at
\url{https://github.com/shadi97kh/REPRESENTABLE-BUT-UNLEARNED}.
\end{abstract}
\section{Introduction}
Chemically modified small interfering RNAs (siRNAs) are designed by changing sugars, backbones and bases at named positions, so the quantity of interest is a \emph{contrast} between modified conditions on a fixed background. Pooled error or correlation on heterogeneous activity tables evaluates endpoint prediction, not contrasts between matched chemical conditions; each claim needs its own evaluation.
Our scope is the datasets and predictors examined here. Neither published modified-siRNA route passed its readiness gate (Section~\ref{sec:measured}: running natively end to end on these evaluation states), so the only new fits on measured biological datasets are two interventions: removing the training-only mask of the audited graph neural network (GNN), and restoring training support in one published splicing predictor. The historical training campaigns and the present analysis are kept separate.

\paragraph{Contributions.} We develop an exact finite-panel diagnostic for a fixed encoding and contrast design. First, we characterize the joint contrast vectors attainable by unrestricted deterministic decoders, exposing restrictions that individual-contrast cancellation checks can miss. The rank calculation is label-independent; projecting recorded contrasts yields an empirical squared-error floor attained by an unrestricted decoder. Second, we localise the siRNA panel\textquotesingle s rank loss to the training-only feature mask and identify minimum-size column restorations that recover the full design rank. Third, we evaluate the distinction between representability and fitted accuracy: mask-restoration refits produce little improvement under the reported protocol, while a released splicing predictor with an injective encoding preserves the full design contrast space and has zero encoding floor.
\begin{figure}[t]
\centering\includegraphics[width=\textwidth]{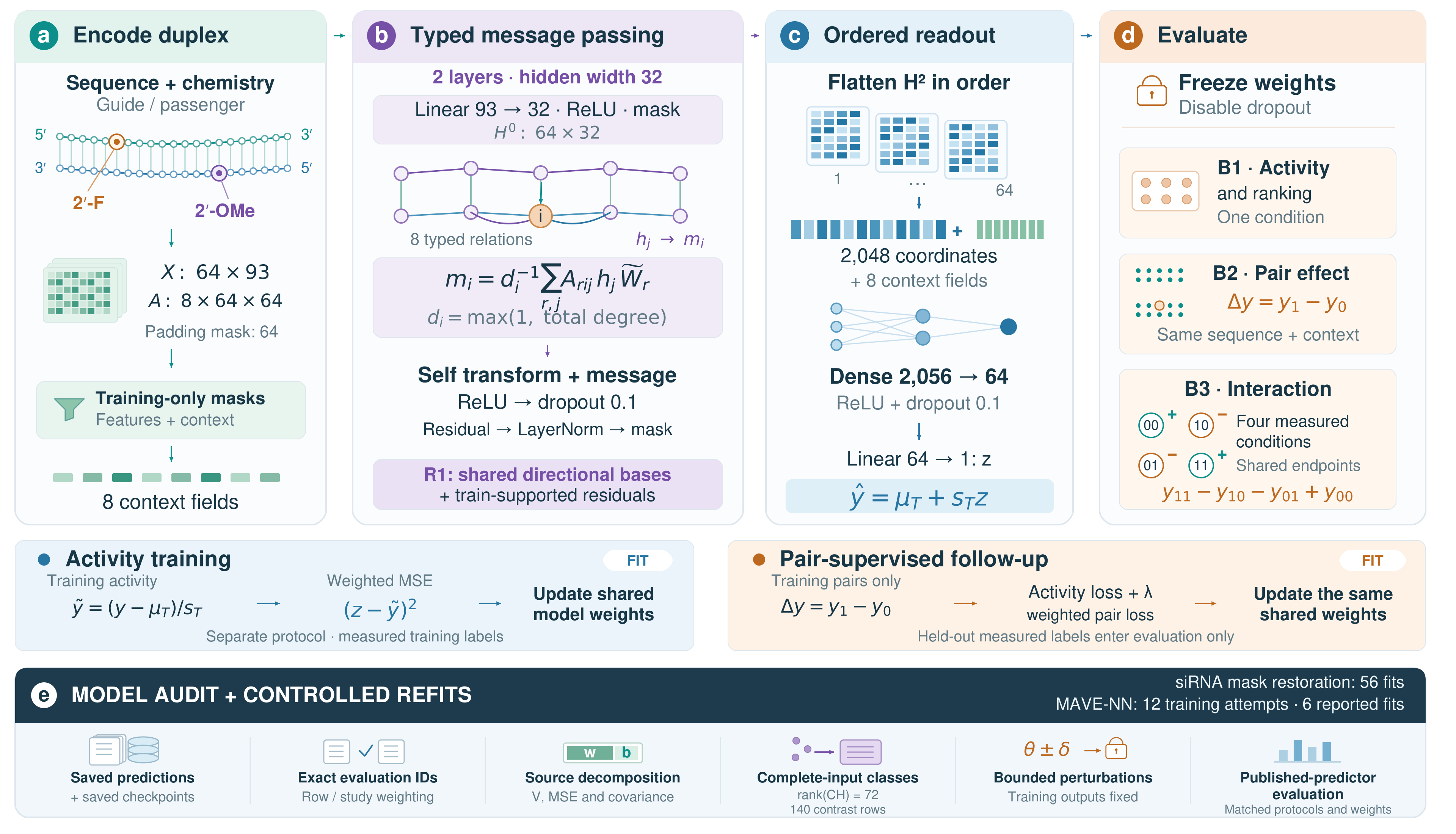}
\caption{Architecture and evaluation (Appendix~\ref{app:architecture}). Panels~(a)--(d) show the
siRNA representation, typed message passing, ordered readout, and evaluation with
frozen weights. The two supervision bands distinguish activity training from
pair-supervised follow-up; held-out labels enter evaluation only. Band~(e) summarizes
saved-model auditing, published-predictor evaluation and controlled refits: 56 siRNA
mask-restoration fits and 12 MAVE-NN attempts, six of which supply reported results.}\label{fig:workflow}
\end{figure}
\subsection{Benchmarks and cohorts}\label{sec:setup}
Three cohorts supply measurements. The \emph{grouped ENsiRNA benchmark} is our eligible subset of the activity table released with ENsiRNA \citep{ensi2025}: 2,927 of its 3,527 released rows meet the admission rules of Appendix~\ref{app:data}, in 150 sequence-linked and ten study-linked components never split between training and evaluation. Of the admitted rows, 1,924 come from \citet[PubMed 19282453]{bramsen2009} and 594 from the patent family US20120088815A1/EP2415869A1. The \emph{APP cohort} holds 1,839 measurement rows against the amyloid-beta precursor protein gene (APP) from two Alnylam patent families, curated in CMsiRNAdb \citep{cmsirna2026} and measured in 17 overlapping assay pools. \emph{Davis S7} is the 118-row Supplementary Table 7 of \citet{davis2025}. Activity models are trained and selected on the grouped benchmark only, so APP and S7 are held out; saved models come from campaigns v3 (released labels) and v4 (Bramsen excluded or relabelled from its primary assay).

Three benchmarks use them. \emph{B1} is activity and ranking: held-out knockdown prediction on all three cohorts, and top-five selection within each APP assay pool. \emph{B2} is a measured chemistry bundle: 156 APP pairs over 26 duplex backgrounds in twelve sequence components, where guide positions 6, 8 and 9 change from 2-Fluoro to 2-O-Methyl while position 7 changes to glycol nucleic acid (GNA), so no single change is isolated. \emph{B3} is a complete antisense-by-sense sub-rectangle we extract from the 48-by-45 (2,160-duplex) screen of \citet{bramsen2009}; the 15-by-11 selection is ours, not a design that screen reports. Its 15 antisense and 11 sense chemistries on one background give 165 measured states, 140 four-condition rectangles, and 14 antisense and 10 sense marginal effects against a shared reference.
\section{An exact representational restriction}\label{sec:structure}

Fix a finite panel with distinct raw endpoint states $s_1,\dots,s_m$, signed contrast coefficients $C\in\R^{q\times m}$, and an encoding indicator $H\in\{0,1\}^{m\times k}$ sending each state to one of $k$ complete-input classes. An unrestricted deterministic decoder on those classes is an arbitrary $g\in\R^{k}$ and its contrast vector is $CHg$. Representability here means attainability by an unrestricted decoder on these classes, not by a fitted family. The next two statements hold for any finite panel and encoding, with no labels, loss or fitted model.
\begin{proposition}\label{prop:attain}
The contrast vectors attainable by unrestricted deterministic decoders on the encoding classes are exactly $\operatorname{col}(CH)$. A fixed linear functional $u^{\top}$ of the $q$ contrasts vanishes for every such decoder if and only if $u^{\top}CH=0$, so the forced-zero functionals form a space of dimension $q-\operatorname{rank}(CH)$. If in addition $C\1_m=0$ then $\operatorname{rank}(CH)\le k-1$.
\end{proposition}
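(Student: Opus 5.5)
The proof is pure linear algebra on the map $g\mapsto CHg$. We take the three claims in order: attainability, forced-zero functionals, and the rank bound.

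For attainability, the definition says an unrestricted decoder is any $g\in\R^k$. The state-level prediction vector is then $Hg$, since each row of $H$ is the indicator of the class of $s_i$, so $(Hg)_i=g_{c(i)}$. The contrast vector is $C(Hg)=(CH)g$. As $g$ ranges over all of $\R^k$, the set $\{CHg\}$ is by definition the column space $\operatorname{col}(CH)$. This gives both inclusions at once. The one point to make explicit is that every function on the $k$ classes is realised by some $g$, which holds because the decoder is unrestricted and the class set is finite.

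For the forced-zero functionals, $u^{\top}CHg=0$ for all $g\in\R^k$ if and only if the row vector $u^{\top}CH$ is zero; for the forward direction, test against the standard basis vectors $g=e_j$. Hence the forced-zero set is $\{u:(CH)^{\top}u=0\}=\ker\bigl((CH)^{\top}\bigr)=\operatorname{col}(CH)^{\perp}\subseteq\R^q$. By rank--nullity applied to the $k\times q$ matrix $(CH)^{\top}$, this space has dimension $q-\operatorname{rank}\bigl((CH)^{\top}\bigr)=q-\operatorname{rank}(CH)$.

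For the rank bound, each state lies in exactly one class, so every row of $H$ has a single $1$ and $H\1_k=\1_m$. If $C\1_m=0$, then $CH\1_k=C\1_m=0$, so $\1_k\neq 0$ lies in $\ker(CH)$. Rank--nullity on the $q\times k$ matrix $CH$ then gives $\operatorname{rank}(CH)=k-\dim\ker(CH)\le k-1$.

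The only step needing care is $H\1_k=\1_m$. It relies on reading "sending each state to one of $k$ complete-input classes" as a partition, with one $1$ per row. It is the one place where the structure of $H$, beyond linearity, is used. The remaining steps are routine.
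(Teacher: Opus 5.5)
Your proof is correct and follows the paper's argument step for step. Attainability is the definition of $\operatorname{col}(CH)$; testing $g=e_j$ identifies the forced-zero functionals with the left null space, which has dimension $q-\operatorname{rank}(CH)$ by rank--nullity; and $H\1_k=\1_m$, which holds because the classes partition the states, gives $CH\1_k=C\1_m=0$ and hence $\operatorname{rank}(CH)\le k-1$.
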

\begin{corollary}\label{cor:factor}
If the classes factor as $a$ antisense by $b$ sense classes and the panel is the full antisense-by-sense product with every rectangle anchored at one shared reference state, the interaction rank is exactly $(a-1)(b-1)$.
\end{corollary}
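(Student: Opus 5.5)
We read the corollary in the setting of Proposition~\ref{prop:attain}. Index the antisense classes by $i\in\{0,\dots,a-1\}$ and the sense classes by $j\in\{0,\dots,b-1\}$, with the shared reference state in class $(0,0)$. Since the panel is the full product and the encoding factors, the class map sends state $(i,j)$ to class $(i,j)$, so $k=ab$. Each rectangle anchored at the reference has interaction coefficients $e_{ij}-e_{i0}-e_{0j}+e_{00}$ for some $i\ge1$, $j\ge1$. The plan is to show that the rows of $CH$ span exactly the space of these $(a-1)(b-1)$ interaction functionals on $\R^{ab}$, and that this space has that dimension.

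\emph{Step 1 (row space of $CH$).} If two states lie in the same class, $H$ merges their columns, so each row of $CH$ is the corresponding rectangle functional written in class coordinates: $f_{ij}=e_{ij}-e_{i0}-e_{0j}+e_{00}\in\R^{ab}$. The rank of $CH$ is the dimension of $\operatorname{span}\{f_{ij}\}$. If the panel lists several rectangles that map to the same class rectangle, they give duplicate rows and leave the span unchanged. Full product coverage guarantees that every pair $(i,j)$ with $i,j\ge1$ occurs.

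\emph{Step 2 (upper bound).} The vector $f_{ij}$ has coordinate $1$ at $(i,j)$. For $(i',j')\neq(i,j)$ with $i',j'\ge1$, its coordinate at $(i',j')$ is $0$. So the $(a-1)\times(a-1)(b-1)$ block of coordinates indexed by the non-reference interior cells is the identity. This shows the $f_{ij}$ are linearly independent, which gives rank at least $(a-1)(b-1)$. Because there are exactly $(a-1)(b-1)$ distinct such vectors, the rank is also at most $(a-1)(b-1)$. The two bounds give equality.

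\emph{Step 3 (consistency).} As a check against the rank bound in Proposition~\ref{prop:attain}, note that every $f_{ij}$ annihilates $g_{ij}=\alpha_i+\beta_j$. The row space therefore has dimension at most $ab-(a+b-1)=(a-1)(b-1)$, which matches.

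The main obstacle is Step~1. We must justify that the antisense and sense class labels act coordinatewise, so that merging states never carries a rectangle onto a degenerate one. We also need that "anchored at one shared reference state" forces the form $f_{ij}$, and does not allow arbitrary $2\times2$ rectangles. Even if it did allow arbitrary rectangles, each general rectangle $f_{ij}-f_{i'j}-f_{ij'}+f_{i'j'}$ lies in the same span, so the rank would still be $(a-1)(b-1)$. Degenerate rectangles, where $i=i'$ as classes, contribute the zero row, and for those $(i,j)$ the corresponding anchored rectangle must still be present, which full coverage supplies.
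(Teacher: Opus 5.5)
Your proof is correct and is essentially the paper's argument. The paper writes $CH=RK$, where $K$ maps decoder values to the $(a-1)(b-1)$ nonreference interaction coordinates $t_{ij}=g_{ij}-g_{i0}-g_{0j}+g_{00}$ and is surjective by the same identity block on nonreference cells that you use, and $R$ is a repetition matrix whose columns have disjoint nonempty supports because full coverage hits every coordinate; this is the column-side form of your row-span argument. One slip: the identity block in Step~2 is $(a-1)(b-1)\times(a-1)(b-1)$, not $(a-1)\times(a-1)(b-1)$.
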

With $0$ the reference state's class, each rectangle's contrast is one interaction coordinate $t_{ab}=g_{ab}-g_{a0}-g_{0b}+g_{00}$ or zero; decoders set the $(a-1)(b-1)$ nonreference coordinates independently and the full product hits each, so the rank is exactly their number. Both $\operatorname{rank}(CH)$ and the forced-zero dimension follow by exact rational elimination on integer matrices from the encoder and design alone, so the procedure applies to any categorical panel; proofs are in Appendices~\ref{app:contrastrank} and~\ref{app:chemicalfactor}. For the recorded contrast vector $\widetilde I\in\R^q$ the encoding floor is
\[F=\min_{g\in\R^k}\tfrac1q\lVert\widetilde I-CHg\rVert^2=\tfrac1q\lVert(\mathrm{Id}-P)\widetilde I\rVert^2,\qquad P\text{ the orthogonal projector onto }\operatorname{col}(CH).\]
The rank needs no labels, whereas the floor is empirical, since it uses the recorded contrasts; ``achievable'' means attained by an unrestricted class decoder, not necessarily by the fitted family.

\begin{figure}[t]\centering\includegraphics[width=\textwidth]{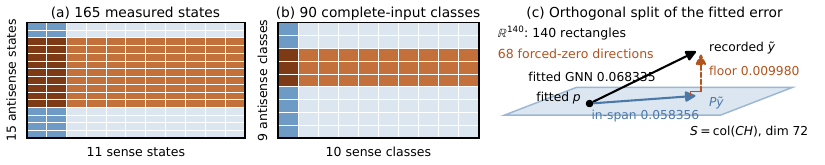}
\caption{How the encoding collapses the panel, and what the collapse costs. (a)~The panel is a 15 by 11 antisense-by-sense product, 165 states. (b)~The mask zeroes 44 node-feature columns, merging three antisense triples and one sense pair (rust rows, blue columns) into 9 by 10 classes. A minimum of three restored chemistry columns recovers full rank. (c)~The 140 contrasts then span only $S=\operatorname{col}(CH)$, of dimension 72, leaving 68 forced-zero directions; the fitted error splits orthogonally into an in-span part and a residual off $S$ whose mean square is the floor.}\label{fig:mechanism}
\end{figure}
Everything below is specific to the audited panel (Fig.~\ref{fig:mechanism}). With $q=140$, $m=165$ and $k=90$, Proposition~\ref{prop:attain} bounds $\operatorname{rank}(CH)\le89$ and exact elimination gives 72: the encodings impose 68 independent linear constraints on the joint \emph{predicted} contrast vector, which the recorded vector need not satisfy. No row of $CH$ is zero, so the complete-input test finds no forced cancellation in any of the 140 rectangles. The partition, shared by all 40 audited instances and verified on all 27,225 state pairs, factors into 9 antisense and 10 sense classes, so Corollary~\ref{cor:factor} gives $(9-1)(10-1)=72$ and duplicate-row equalities generate all 68 constraints. Distinct sugar chemistries merge within each matched position pattern; all four families share this preprocessing, so the finding does not extend to unrelated published pipelines. On a released splicing predictor the same computation returns a floor of exactly zero (Section~\ref{sec:mavenn}).

Every fitted contrast vector lies in $S=\operatorname{col}(CH)$ to within $10^{-16}$, so the fitted error splits orthogonally into floor and in-span parts, per family and method in Table~\ref{tab:floor}. For the GNN the floor is $38.1\%$ of fitted squared error on antisense marginals, $0.4\%$ on sense marginals and $14.6\%$ on interactions, each against its own family\textquotesingle s fitted error. The rank loss localises to the mask alone: parsed chemistry, the pre-mask tensors and the mask-restored encoding all keep 165 classes and rank 140 (appendix Fig.~\ref{fig:encoding}). A target contrast vector is representable exactly when it lies in $\operatorname{col}(CH)$, and every contrast attainable from the raw design is preserved exactly when $\operatorname{rank}(CH)=\operatorname{rank}(C)$; representability does not imply learnability, which Section~\ref{sec:support} examines separately. These numbers are specific to one panel and one background; Appendix~\ref{app:counterexample} gives the complete explainer counterexample.
\begin{table}[ht]
\centering\small
\begin{tabular}{@{}lrrrrrr@{}}\toprule
Family & Method & $q$ & Floor & In-span & $\mathrm{MSE}_{q}$ & Ratio\\\midrule
Antisense & R1 GNN & 14 & 0.018972 & 0.030885 & 0.049856 & 2.627909\\
Antisense & R1 no-message & 14 & 0.018972 & 0.031574 & 0.050546 & 2.664273\\
Antisense & Chemistry tree & 14 & 0.018972 & 0.023931 & 0.042903 & 2.261392\\
Antisense & Token CNN & 14 & 0.018972 & 0.030339 & 0.049311 & 2.599177\\
Sense & R1 GNN & 10 & $1.478\times10^{-5}$ & 0.003320 & 0.003335 & 225.665139\\
Sense & R1 no-message & 10 & $1.478\times10^{-5}$ & 0.003221 & 0.003236 & 218.972986\\
Sense & Chemistry tree & 10 & $1.478\times10^{-5}$ & 0.008408 & 0.008423 & 569.941309\\
Sense & Token CNN & 10 & $1.478\times10^{-5}$ & 0.004025 & 0.004040 & 273.371096\\
Interaction & R1 GNN & 140 & 0.009980 & 0.058356 & 0.068335 & 6.847534\\
Interaction & R1 no-message & 140 & 0.009980 & 0.058301 & 0.068281 & 6.842116\\
Interaction & Chemistry tree & 140 & 0.009980 & 0.061959 & 0.071939 & 7.208643\\
Interaction & Token CNN & 140 & 0.009980 & 0.058429 & 0.068409 & 6.854911\\
\bottomrule\end{tabular}
\caption{Encoding-floor decomposition of the recorded B3 error, equal weights: Floor is $q^{-1}\lVert(\mathrm{Id}-P)\widetilde I\rVert^2$ with $P$ the projector onto $\operatorname{col}(CH)$ for that family, In-span is $q^{-1}\lVert P\widetilde I-p\rVert^2$, and $\mathrm{MSE}_{q}$ their sum, exact before rounding. The floor is shared across methods in a family because every fitted contrast vector lies in $\operatorname{col}(CH)$; denominators are the 14 and 10 marginal contrasts and the 140 rectangles. All entries are rounded from stored values; floor, in-span and ratio are each computed before rounding. The tree is extremely randomized trees, the token CNN a convolutional baseline.}\label{tab:floor}
\end{table}

\section{Evaluation framework and identifiability}\label{sec:audit}
Sections~\ref{sec:audit} and~\ref{sec:data} fix the conventions and provenance rules used later; results begin in Section~\ref{sec:activity}.
We distinguish three claims. \textbf{(C1)}~An explanation is \emph{faithful} if it describes the fixed predictor it is computed from; \textbf{(C2)}~a dependence is \emph{training-supported} if the training computation constrains it; \textbf{(C3)}~a prediction \emph{agrees with a measured effect} if it matches a comparable recorded intervention. A faithful input derivative answers C1 alone; a good pooled score answers none.

Four definitions make these operational. A \emph{complete-input encoding class} groups raw states whose audit signatures, defined once in Appendix~\ref{app:inputcollision}, are byte-identical. An \emph{evaluation identity} hashes identifiers, labels, weights and response convention, so two numbers compare only when these match. A \emph{weighting} is one unit per row, or inverse study-component size normalised to average one; the two define different estimands. A \emph{training-absent block} provably cannot change any training prediction, for every parameter value, not merely at one gradient snapshot.

The duplex occupies 64 ordered slots with 93 features; two width-32 layers over eight directed relation types feed a 2,056--64--1 ordered readout with degree-normalised typed messages (Appendix~\ref{app:architecture}). R0 (original GNN) uses independent $W_r$; R1 (training-gated GNN) shares directional bases and gates residuals by positive training occurrence. Four relation types are absent from every training graph in each layer, 8,192 independently parameterised scalars: support changes which routes receive data gradients, not the parameter count. Appendices~\ref{app:architecture} and~\ref{app:training} give the forward map and training protocol.
\section{Evidence, provenance and exact evaluation identities}\label{sec:data}

In the grouped benchmark the Bramsen rows are 65.73\% of the total in one sequence component and the patent family\textquotesingle s 594 rows span 132 components. Source identifiers are categories, not independent studies, and released per-row assay provenance is incomplete, so no assay-level partition is claimed. Denominators matter. One 20-row source has label variance 0.000711 and MSE 0.06739, so its $R^2$ is about $-93.78$ even though its absolute error is not large (\texttt{variance\_sources}; Appendices~\ref{app:metrics} and~\ref{app:decomposition}).

Two uses of the same 1,003 scored rows must not be confused: rescoring the v3 models, trained and selected on data including Bramsen, gives tree $R^2=+0.002992$ and GNN $R^2=-0.1142$, while Table~\ref{tab:activity} reports models trained \emph{and selected} with that source excluded, giving $-0.000692$ and $-0.2115$ on the same rows. Equal row counts establish neither equal labels nor equal model identities, so each comparison carries its identity hash.

Primary-source adjudication is preserved and unresolved: parsing the printed chemistry links 1,604 of 1,924 admitted Bramsen rows to a unique primary measurement, quarantines 320, and leaves 299 differing from the released labels by more than 0.05, maximum 1.0852 \,---\, unresolved discrepancies, not certified label errors. The strand orientation of Davis Supplementary Table S1 stays quarantined, APP spans 92 sequence components, and rows, pairs and states are different units that do not count replication. The 792 focused v4 fits are historical (Appendix~\ref{app:fitledger}).
\section{Activity results and training-fitted controls}\label{sec:activity}
\begin{table}[H]
\centering\small
\begin{tabular}{@{}llrrrrrr@{}}\toprule
\multicolumn{8}{@{}l}{Grouped benchmark: 1,924 of 2,927 rows (65.73\%) from one source in one sequence component}\\\midrule
Sensitivity & Method & \shortstack{Grouped\\MSE} & $R^2$ & \shortstack{APP\\MSE} & $R^2$ & \shortstack{S7\\MSE} & $R^2$\\\midrule
Source-excluded & R1 GNN & 0.0641 & -0.212 & 0.1631 & -0.005 & 0.1191 & -0.324\\
 & R1 no-message & 0.0625 & -0.182 & 0.1637 & -0.009 & 0.1221 & -0.358\\
 & Chemistry tree & 0.0529 & -0.001 & 0.1655 & -0.020 & 0.0974 & -0.082\\
 & Token CNN & 0.0628 & -0.187 & 0.1626 & -0.002 & 0.1103 & -0.226\\
 & Training row mean & 0.0571 & -0.079 & 0.1810 & -0.115 & 0.1587 & -0.764\\
 & Oracle test mean & 0.0529 & 0.000 & 0.1623 & 0.000 & 0.0900 & 0.000\\
Primary-assay & R1 GNN & 0.0747 & 0.005 & 0.1628 & -0.003 & 0.1052 & -0.169\\
 & R1 no-message & 0.0746 & 0.007 & 0.1624 & -0.001 & 0.1073 & -0.193\\
 & Chemistry tree & 0.0692 & 0.079 & 0.1899 & -0.171 & 0.0934 & -0.039\\
 & Token CNN & 0.0753 & -0.002 & 0.1641 & -0.011 & 0.1019 & -0.132\\
 & Training row mean & 0.0759 & -0.011 & 0.1742 & -0.074 & 0.1450 & -0.612\\
 & Oracle test mean & 0.0751 & 0.000 & 0.1623 & 0.000 & 0.0900 & 0.000\\
\bottomrule\end{tabular}
\caption{Row-weighted fixed-ensemble prediction on matched rows within each sensitivity; cohorts and models are those of Section~\ref{sec:setup} and Table~\ref{tab:floor}. Grouped coverage is 1,003 and 2,607 rows; APP and S7 retain 1,839 and 118. The training row mean is a deployable constant from each training partition; the oracle test mean reads the evaluation labels and is a diagnostic, not a risk floor.}\label{tab:activity}
\end{table}

On the source-excluded grouped target the retrospective test-mean MSE is 0.05289. The tree beats both training-mean predictors under row weighting ($-0.004148$, $-0.001961$) but not under equal study-component weighting ($-0.001319$, $+0.000535$), and all four paired conditional 95\% intervals include zero over nine resampled components (Appendix~\ref{app:constantchecks}): no general advantage. The GNN has higher observed error than both constants, and on the primary-assay cohort its $R^2$ changes from $+0.004856$ by rows to $-0.3288$ by equal-component weighting.

Negative per-source $R^2$ with positive pooled $R^2$ does not force a between-source explanation: only 3.98\% of released label variance lies between source categories, and the between-source covariance is negative (Appendix~\ref{app:decomposition}). This corrects score interpretation; it is not a contribution.

\section{Calibration, protocol and ranking sensitivity}\label{sec:calibration}
Results from the v3 released-label campaign are in Table~\ref{tab:calibration}. On the v3 APP evaluation squared bias is $99.53\%$ of the tree-minus-GNN gap, which describes it without explaining it, and GNN correlation $-0.018$ with prediction SD 0.0072 against label SD 0.4028 means a relative MSE advantage does not establish useful discrimination. Comparisons are protocol-specific: the GNN-minus-no-message difference on the same 118 S7 rows changes sign across protocols, while the APP tree comparison favours the GNN throughout; interval endpoints must not be subtracted to infer the uncertainty of a change. Ranking is descriptive for the same reason: populations, support rules, optimisation and selection changed jointly, so a reversal identifies no single cause (Appendix~\ref{app:results}).
\section{Measured chemistry contrasts}\label{sec:measured}
\looseness=-1 For B2 (Section~\ref{sec:setup}), both endpoints of every pair and their sequence groups are excluded from each outer training and selection population. Pair-GNN equal-component MSE minus the training-fold mean is $-0.0125$ $[-0.0707,0.0464]$ and its within-assay correlation $0.266$ $[-0.045,0.493]$, both unresolved; all pair predictions carry the same operational sign, giving 111/137 correct signs, matching the majority rule under the $\pm0.02$ band. Sequence-specific measured chemistry prediction is not established.
\begin{table}[H]
\centering\small\setlength{\tabcolsep}{4pt}
\begin{tabular}{@{}lrrrrrrrr@{}}\toprule
Panel/model & Inputs & Forced & $\mathrm{MSE}_{165}$ & $(\Delta\mu)^2$ & AS & SS & NA & Pred.\,NA\\\midrule
Measured panel & 165 & -- & 0.079894 & -- & 0.041341 & 0.020415 & 0.018138 & --\\
R1 GNN & 90 & 0 & 0.077350 & 0.000041 & 0.042930 & 0.016196 & 0.018183 & 0.000002\\
R1 no-message & 90 & 0 & 0.077353 & 0.000005 & 0.042811 & 0.016371 & 0.018166 & 0.000002\\
Chemistry tree & 90 & 0 & 0.078546 & 0.000012 & 0.043859 & 0.016079 & 0.018596 & 0.000146\\
Token CNN & 90 & 0 & 0.077276 & 0.000211 & 0.042819 & 0.016072 & 0.018174 & 0.000005\\
\bottomrule\end{tabular}
\caption{Primary B3 on the complete 15 by 11 panel, equal weights, reusing source-held-out endpoints. Exact complete inputs among the 165 states force no cancellation in any of the 140 rectangles, for all ten preprocessing instances. $\mathrm{MSE}_{165}$ is endpoint MSE over 165 states, not the contrast $\mathrm{MSE}_{q}$ of Table~\ref{tab:floor}; AS, SS and NA are antisense, sense and non-additive components, and row one, being centered energy, has no grand-mean term. The four components sum to $\mathrm{MSE}_{165}$ exactly; Pred.\,NA is each model\textquotesingle s own predicted nonadditive energy. These component sums hold to $10^{-15}$.}\label{tab:B3}
\end{table}

\begin{figure}[H]\centering\includegraphics[width=\textwidth]{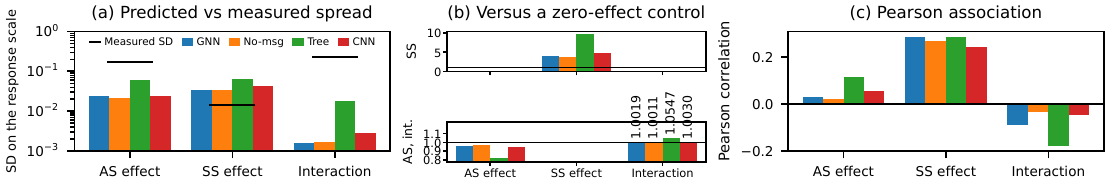}
\caption{Source-held-out predictions on the primary panel, from the ten saved seeds. (a)~Predicted against measured spread per family. (b)~Each family\textquotesingle s error over its zero-effect control, unity being the control, in two sub-panels because sense reaches 9.85; interaction ratios are printed. (c)~Pearson correlation with the measured effect.}\label{fig:b3diag}\end{figure}
Endpoint prediction on the same panel is weak: GNN $R^2=0.0318$, Pearson 0.1848, predicted SD 0.04026 against recorded 0.28266, so this is no interaction-specific failure of an otherwise strong predictor. Nor is every effect attenuated: sense-marginal predicted SD is 0.03447 against recorded 0.01395, while antisense and interaction predictions have too little spread (Fig.~\ref{fig:b3diag}).

The 140 contrasts $I_{AB}=\mu_{11}-\mu_{10}-\mu_{01}+\mu_{00}$ share states and one background, so no independent-rectangle interval is computed. All 140 GNN ensemble contrasts are nonzero with seed magnitudes above $10^{-4}$, their SD 0.0015893 against a measured 0.2224421, and MSE 0.068335 sits only 0.19\% above the 0.068205 zero-interaction control, which matters less than the near-constant predictions and weak effect recovery. Splitting the panel into a grand mean, antisense and sense main effects and a nonadditive residual, measured energy divides 51.7\%, 25.6\% and 22.7\%, and each model\textquotesingle s error splits exactly (Table~\ref{tab:B3}). For the GNN ensemble it is 0.04293 antisense and 0.01620 sense against 0.01818 nonadditive: the deficit is in the main effects, not specifically interactions, and every nonadditive error exceeds the zero-residual control. This is descriptive algebra on the recorded scale, a different object from the encoding-class projection of Section~\ref{sec:structure} (Appendix~\ref{app:b3decomp}).

Fixed-prediction B3 sensitivities qualify that comparison: reference shifts of one reported SD either way preserve the ordering, none of 14 antisense omissions reverses it, and one of 10 sense omissions (JC5) does, with GNN-minus-zero MSE -0.000009. These are dependent influence diagnostics, not confidence limits or new folds (Appendix~\ref{app:b3sensitivity}).

\paragraph{Reproducing the published predictors.} The published modified-siRNA predictors ENsiRNA-mod \citep{ensi2025} and MEG-mod \citep{meg2026} were re-examined natively. MEG-mod\textquotesingle s forward assembles modification tokens from variables its release never defines; ENsiRNA-mod needs licence-gated per-duplex Rosetta geometry that is absent. All 156 B2 pairs, 165 B3 endpoints and 140 contrasts therefore stay unassessed, with unknown supervised overlap (Appendix~\ref{app:independentattempt}).
\section{Training-support diagnostics}\label{sec:support}

A relation absent from every training graph contributes zero whatever its transform, so a layer-by-layer induction proves its parameter block cannot change any training prediction. The induction (Appendix~\ref{app:trainingindependence}) concerns the training computation, not one zero gradient, and licenses an outcome-independent perturbation test that fits nothing. It does not imply independence of validation selection, regularisation or pretraining, and AdamW\textquotesingle s decoupled weight decay can still move a zero-gradient parameter: these are not never-updated weights. The 8,192 scalars belong to the original architecture; the corrected models' 8,192 false-gated residual scalars are a different object.

\looseness=-1 The perturbation protocol used three saved seeds per arm, fixed Rademacher directions and scales $0$ and $\pm0.25$ of each block\textquotesingle s root-mean-square magnitude. All 2,518 training predictions per checkpoint stayed bitwise equal. In the original model (R0) the largest of the 1,839 APP prediction changes is 0.003300 and 1,601 to 1,721 global ranks move, yet top-five selection within the seventeen assay pools changes in only 15/102 cases (minimum overlap 0.800), against 0/102 for R1; overlapping pools are not independent trials. The largest raw input-gradient change, 0.005870 on sixteen queries, is a local sensitivity, not a feasible chemical intervention. R1\textquotesingle s gated residuals give the expected null, zero-scale controls pass and B3 changes are exactly zero, so this mechanism does not explain the B3 attenuation.
\begin{figure}[ht]\centering\includegraphics[width=\textwidth]{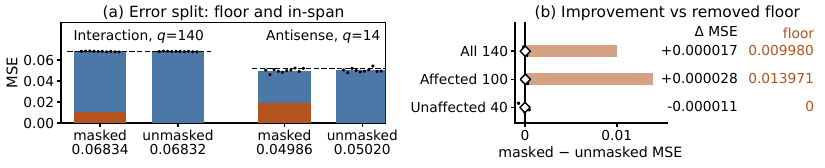}
\caption{Mask-restoration follow-up on the R1 GNN, ten paired seeds; interaction $q=140$,
antisense $q=14$. (a) Ensemble floor (rust) and in-span error (blue), each
seed\textquotesingle s total a dot, control dashed. The restored encoding has zero oracle floor. (b) Improvement
per seed (dots) and ensemble (diamond) against the masked floor per subset (bar);
the 100 affected rectangles hold all of the floor.}\label{fig:mask}
\end{figure}

\textbf{Mask-restoration follow-up.}
Refitting the R1 GNN without the training-only node-support mask restores 165 classes and rank 140, reducing the unrestricted oracle floor from $0.009980$ to zero. 
Architecture, optimizer, selection rule, splits and seeds were unchanged; the re-run grid chose the same configuration, and the masked refit reproduced saved predictions to $6\times10^{-17}$.
MSE moved from $0.068335$ to $0.068318$, an improvement of $0.000017$
or $0.17\%$ of the removed floor. The original protocol hypothesized an improvement no larger than the removed floor. This was not a valid general upper bound, because refitting can also change error within the original contrast space. We retain the removed floor only as a descriptive reference scale. The observed paired-seed changes ranged
from $-0.000176$ to $+0.000262$ (Fig.~\ref{fig:mask}), inside the protocol\textquotesingle s
near-zero category of $|\Delta|<0.004990$.
The mask zeroes 44 of the 93 node-feature columns, all zero on every training node, so their embeddings receive no training-data gradient; only those separating otherwise-distinct chemistries collapse states, so the rank loss localises to a few named columns
(Appendix Table~\ref{tab:app_collisions}). This does not by
itself imply unchanged weights under AdamW; validation selection and
weight decay remain distinct from the training-data term.
Removing the encoding restriction did not materially improve accuracy under this refit protocol. The restored columns remained zero on every training node, so restoration did not add direct training support for those feature coordinates. These observations do not identify what limited the remaining accuracy. These 56 fits (21,513 optimizer updates) are the only siRNA fits in this work (Appendix~\ref{app:maskrestore}).
Neither published modified-siRNA pipeline passed its readiness gate (Section~\ref{sec:measured}), so generalisation needs a predictor that runs end to end; MAVE-NN provides one.
\section{Generalization test: a complete published predictor}\label{sec:mavenn}
MAVE-NN \citep{mavenn2022}, a framework for multiplex assays of variant effect, ships a model of the BRCA2 exon-17 subset of the massively parallel splicing assay (MPSA) of \citet{Wong2018SpliceSites}, which measures 5-prime splice-site selection, here as log10 percent spliced in (PSI): not modified siRNA, and nothing here closes that gap.

The two encoders fall on opposite sides of the diagnostic: the siRNA encoder
removes 68 of the B3 design\textquotesingle s 140 contrast directions, leaving rank 72
and a floor of 0.009980, $14.60\%$ of its fitted interaction error, whereas the
splicing encoder removes none, leaving
$\operatorname{rank}(CH)=\operatorname{rank}(C)=1{,}986$ of 2,083 and a floor of
exactly zero. Under the complete-input rule of Section~\ref{sec:structure} its
one-hot input keeps all 3,696 endpoint states distinct. The forced-zero functionals differ in kind: the siRNA panel\textquotesingle s 68 are encoding dependencies from merging 165 states into 90 classes, which the recorded contrasts do not satisfy, so the shortfall is unreachable error; the splicing panel\textquotesingle s 97 are design dependencies the measured contrasts satisfy too, costing nothing. Only the first kind is paid for,
and all of the equal-weight interaction MSE 0.3334 is in-span; the same
2,083 rectangles under the nested primary weights of Table~\ref{tab:mavenn}
give 0.3617, a different weighting of the same contrasts rather than a
different result (library 2: rank 1,856 of 1,947, 91 design dependencies,
floor zero; Fig.~\ref{fig:mavenn}b).
This null second instance shows that the diagnostic distinguishes the two declared encodings and does not automatically assign a positive floor to a new panel. A frozen-protocol extension finds the same split (Appendix~\ref{app:breadth}): a wider panel from the same Bramsen screen loses 730 of its 1,845 contrast directions, while a protein-binding panel under the same one-hot encoding loses none; neither adds an independent encoder.

\looseness=-1 The released pairwise global-epistasis (GE) checkpoint records $N=24{,}405$,
matching the released training-plus-validation count, but the metadata do not
list all training-sequence identities, so the count alone does not certify
held-out provenance. Our native evaluation gave $I_{\mathrm{var}}=0.3059$ and $I_{\mathrm{pred}}=0.3424$ against tutorial values $0.3066$ and $0.3580$:
not bitwise reproduction (Appendix~\ref{app:mavenn}).

\begin{table}[ht]
\centering\small
\begin{tabular}{@{}lrrrrr@{}}\toprule
Quantity & $N$ & Model MSE & Matched control & Difference & Pearson\\\midrule
Endpoints, released test split & 6078 & 0.084224 & 0.259250 & -0.175026 & 0.822247\\
Endpoints, rectangle population & 3696 & 0.089748 & 0.246508 & -0.156759 & 0.797282\\
Single substitutions & 12486 & 0.155262 & 0.302181 & -0.146919 & 0.697187\\
Interactions (primary) & 2083 & 0.361690 & 0.456701 & -0.095011 & 0.462959\\
Interactions, library 2 & 1947 & 0.327946 & 0.455823 & -0.127877 & 0.531564\\
\bottomrule\end{tabular}
\caption{Released MAVE-NN pairwise GE predictor on the released MPSA test split, in
$\log_{10}$ PSI; the checkpoint-provenance limitation is stated in
Appendix~\ref{app:mavenn}. Endpoint controls are the constant fitted to the released training-plus-validation population;
contrast controls are matched zero-effect predictions on identical identities and
weights. Primary rectangle weights give equal mass to each position pair, then each
background within a pair, then each rectangle within a background; for library~2 they
are restricted to surviving identities and renormalized. Negative differences favour the model; differences are computed before rounding.}\label{tab:mavenn}
\end{table}
\begin{figure}[ht]\centering\includegraphics[width=\textwidth]{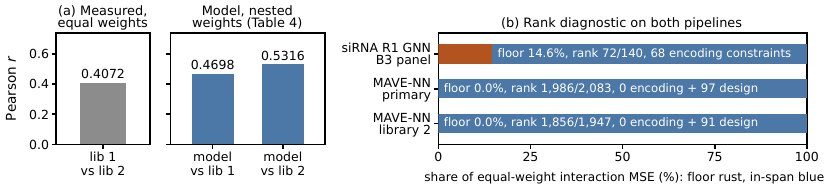}
\caption{MAVE-NN on RNA splicing, not modified-siRNA efficacy. (a) Two separate comparisons on the $1{,}947$ shared rectangles: measured library agreement under equal rectangle weights, model agreement under the nested weights of Table~\ref{tab:mavenn}. Neither is a noise ceiling or a bound on model error. (b) Encoding floor (rust) and in-span error (blue) for both pipelines, as
reported in the text.}\label{fig:mavenn}
\end{figure}
\looseness=-1 Every quantity in Table~\ref{tab:mavenn} beats its matched control and the direction
holds unweighted ($-0.07940$).
Measured-effect performance is not implied by endpoint performance: interaction Pearson is $0.4630$ against $0.7973$ on the linked endpoint population, and predicted interaction SD $0.3678$ against measured $0.6756$.
The replicate comparison shows limited agreement between measured interactions but does not apportion the model--measurement discrepancy between prediction error, measurement error and shared bias.

\paragraph{Fixed-size support intervention.}
At training size $14{,}238$ and fixed architecture, the restored-support
arm improved affected held-out interaction MSE by $0.04415$ relative to
the restricted arm; the matched zero-effect MSE was $0.3756$.
Unaffected-interaction MSE increased by $0.01441$, and all three paired
seeds showed improvement on the affected subset (Appendix Fig.~\ref{fig:support}).
These concern one replacement design for one feature with unmatched backgrounds; they do not isolate feature support from other consequences of replacing training observations.
Twelve attempts ran, six supplying the retained results and $62{,}424$ optimizer updates (Appendix~\ref{app:support}). The withheld coefficients receive no data gradient but remain under the L2 penalty, so prediction independence and regularized-objective independence stay distinct; no training-independent block is claimed.
\section{Related work}\label{sec:related}
Faithfulness and agreement with a scientific effect are distinct targets \citep{chen2020}. The closest prior art is \citet{kuskova2026real}, who prove population identifiability of normalized minimal gated neural additive autoregression decompositions under distributional support conditions. Their continuous-support population theorem does not apply to this finite discrete panel and supplies no finite-panel encoding floor; we compute an exact rational rank and an achievable empirical floor over unrestricted class decoders. \citet{lengerich2020pure} canonicalise a fitted model's terms against a specified feature distribution; ours is reachability under a fixed encoder, without a population-distribution assumption or a fitted predictor. The statistical ancestor is factorial estimability \citep{boxhunter1961,wuchen1992}; here the raw endpoint design permits all 140 contrasts, and the encoding introduces the additional restriction when it merges 165 states into 90 classes. \citet{bilodeau2024} show complete, linear attributions can fail to beat chance at spurious-feature and recourse tasks; full rank of the finite contrast map neither establishes nor refutes their hypotheses, and we claim no formal separation. \citet{azzolin2025,azzolin2026} study architectural limitations and degenerate explanations in regular and self-explainable GNNs. Our result characterises which contrast vectors a predictor can attain on the measured panel under a fixed complete-input encoding. Gradient correction on the one-hot simplex removes normal components, not every direction absent from support \citep{koo2023}; \citet{mavenn2022} model genotype--phenotype and measurement maps separately. Chemistry-aware predictors already include ENsiRNA-mod and MEG-mod \citep{ensi2025,meg2026}; sequence determinants and message passing are established \citep{reynolds2004,khvorova2003,gilmer2017,rgcn2018}, and our interaction-index and derivative conventions follow \citet{grabisch1999} and \citet{saliency2013}. Underspecification can give similar in-domain performance with different deployment behaviour \citep{damour2022}; ours proves training-prediction independence for specified blocks, not equal validation performance.
\section{Discussion and limitations}\label{sec:limitations}
This study separates input distinguishability, training support and measured-effect accuracy. It explains the finite-panel rank restriction and measures training-independent prediction changes. The input-collision restriction leaves $85.40\%$ of GNN squared error inside the permitted contrast space. Removing the mask changes MSE only slightly, and the reference-independent decomposition places the larger error components in the main effects. These observations do not identify the cause of the remaining error; what they do establish is that rank and support are distinct checks on representation and observed training support, and that on this panel opening the first left the second untouched.

Independent validation of modified-siRNA efficacy remains incomplete: neither the splicing result nor the support intervention transfers to chemical modification. B3 has one background, B2 jointly changes four positions, APP and S7 are retrospective with related sources, and shared-control covariance, the 299 primary/released discrepancies, Davis Table S1 orientation and dose units stay unresolved. Seeds are not biological replication; no wet-lab, causal or clinical validation is claimed, and no broadly useful reliability score or independent generalisation is established.
\label{main:end}
\clearpage

\label{references:start}
\bibliographystyle{iclr2027_conference}
\bibliography{references}
\label{references:end}\clearpage
\input{appendix}
\end{document}

%% file: appendix.tex
\appendix\renewcommand{\thesection}{A\arabic{section}}\numberwithin{equation}{section}
\label{appendix:start}
\section{Data, chemistry, experiment, and grouping}\label{app:data}
\paragraph{Appendix roadmap and evidence map.}
Data, chemistry and partitions: Section~\ref{app:data}; full forward map, losses and derivatives: Sections~\ref{app:architecture}--\ref{app:training}; metrics and uncertainty: Section~\ref{app:metrics}; primary-panel evidence: Section~\ref{app:b3}; adjudication: Section~\ref{app:robustness}. Table~\ref{tab:activity} also uses the complete decomposition proof (\ref{app:decomposition}). Table~\ref{tab:B3} and Figures~\ref{fig:b3diag}, \ref{fig:encoding} resolve to the endpoint/marginal tables, exact rank proof (\ref{app:contrastrank}), chemical factorization (\ref{app:chemicalfactor}) and sensitivities (\ref{app:b3sensitivity}). The perturbation diagnostics quoted in Section~\ref{sec:support} resolve to the induction in Section~\ref{app:trainingindependence}, the numerical utility protocol in Section~\ref{app:utility} and rows of \texttt{perturbation\_summary}. Figure~\ref{fig:workflow} resolves to the architecture; the focused grouped sensitivity plot in the canonical store carries the matched activity and exact metric decomposition. Table~\ref{tab:utility} and Figure~\ref{fig:utility} resolve to the complete utility protocol (\ref{app:utility}); the independent attempt is in Section~\ref{app:independentattempt}. The canonical report supplies each object's store/CSV mapping; every required mathematical argument is written here.
\subsection{Notation and the observed experiment}
\begin{definition}[Observation and endpoint]
An observation $i$ consists of source identifier $u_i$, guide and passenger sequences $s_i=(s_i^g,s_i^p)$ in reported $5'$--$3'$ order, chemistry state $m_i$, assay metadata $a_i$, and a reported activity $y_i$. Chemistry includes positional modification names, available linkage types, terminal groups, and reported stereochemistry. Metadata include target, accession, cell/species, concentration, time, delivery, table, and response normalization where recorded. Unavailable fields remain missing. A parsed observation is not assumed to be an independent biological replicate.
\end{definition}
\begin{longtable}{p{.19\textwidth}p{.72\textwidth}}\toprule Symbol & Meaning\\\midrule\endfirsthead\toprule Symbol & Meaning\\\midrule\endhead
$y_i$ & Inhibition fraction, by cohort: the grouped benchmark uses the released inhibition fraction; APP uses $(100-\text{primary mean percent mRNA remaining})/100$; the primary B3 panel uses $1-\text{relative eGFP}$ (Appendix~\ref{app:b3}); Davis S7 is kept on its original response scale. No clipping.\\
$\mu(s,m,a)$ & Population mean of that assay endpoint; not directly known from a reported sample mean.\\
$\hat f_k(x_i)$ & Committed prediction of model $k$ from the permitted molecule/context features $x_i$.\\
$g(i),q(i)$ & Study/source-linked component and sequence-only component, respectively.\\
$w_i$ & Within-partition observation weight $n/(G n_{g(i)})$, where $G$ is the number of study components and $n_g$ is its size.\\
$X,A,b,C$ & Node features, relation adjacency tensor, node mask, and training-masked numeric context; here $C$ is the context tensor, not the contrast matrix $C$ of Section~\ref{sec:structure}.\\
$L,M,F,R,d$ & Maximum strand length 32; total node slots 64; 93 node features; 8 relation types; hidden width 32.\\
$a,S_a,S_a^*$ & Ranking pool, five model-selected candidates, and five highest measured candidates.\\
$d_j,\hat d_j$ & Measured and predicted differences for matched chemistry pair $j$.\\
$B$ & Number of sequence-component bootstrap draws, fixed at 10,000 in the current revision.\\\bottomrule
\caption{Notation and conventions.}\\
\end{longtable}
This is a retrospective, heterogeneous observation collection. No shared latent Gaussian experiment, independent five-stratum sampling model, private-anchor law, pathwise differentiability, or unknown-profile search class is assumed. Ordinary finite moments are enough to define empirical losses because all admitted numeric values and saved predictions are finite. We do not infer population MSE bounds from those finite empirical sums.

\subsection{Row eligibility and identity reconciliation}
All raw rows are preserved before exclusions. ENsiRNA source identifiers must be seven-to-nine-digit numeric publication identifiers or the explicit US20120088815A1/EP2415869A1 family string. Other release identifiers remain unresolved. Guide and passenger bases must consist of uppercase A, C, G, U, T. Source, both complete parsed chemistry states, reported concentration, and outcome define an exact-release duplicate; repeated observations with distinct outcomes remain measurements but are grouped together for splitting. Two exact duplicate records are removed.

Each concentration must be positive and each outcome finite. Each strand must admit the graph alignment specified below. Chemistry annotations with inconsistent modification/position arity, indices outside the strand, unknown base changes, or unresolved geometry-changing entries are excluded. The strings ``Mutation'', ``Inverted abasic'', and ``Assymetric siRNA (3-End Overhang)'' are not silently translated into canonical bases or guessed duplex structures.

APP rows are first excluded if the cell field is ``Transgenic mice'' or administration time is not exactly 24h. Empty chemistry is not unmodified RNA. Three description/base disagreements are rejected, as are unsupported base strings, nonpositive dose, nonfinite outcome, or failed duplex alignment. Exclusions are assigned the first failing reason in this order, so counts are mutually exclusive within the parser. A row with several deficiencies is not counted several times.

The first reconciliation admits 5,033 rows: 2,927 ENsiRNA and 2,106 APP. The primary-table step subsequently excludes 267 APP rows for unreconciled exact chemical strings, leaving 4,766. Its separate status supersedes the intentionally preserved earlier-stage ``pending'' B2 field. No scientific fit used the earlier population.

\subsection{Complete chemical parser}
For ENsiRNA, \texttt{anti raw seq}/\texttt{sense raw seq} give bases, \texttt{anti mod}/\texttt{sense mod} list modification names separated by an asterisk, and matching position fields list comma-separated one-based indices within each modification group. A position token is converted to an integer and checked against strand length. The literal zero/zero pair denotes ``unmodified as annotated''; an empty field does not. Multiple entries at a position become a sorted set, and an otherwise untagged position is recorded as unmodified within the release's annotation scope. This is not a claim of complete physical chemical characterization.

Only explicit aliases are merged: 2-O-Methyl ribose and 2-Methoxy become 2-O-Methyl; 2-Deoxy-2-Fluoro becomes 2-Fluoro; 2-Deoxyribonucleotide and 2-Deoxythymidine become 2-Deoxy; UnLocked nucleic acid becomes Unlocked nucleic acid; 5-phosphate ribose becomes 5-Phosphate. Other names remain distinct, including source spelling. Indexed phosphorothioate and boranophosphate tags remain node features with an unresolved direction flag, and ENsiRNA backbone edges are encoded as unresolved under their annotation-only PO default, so an indexed tag is never promoted to a resolved directed PS bond. Terminal annotations are retained for 5-Phosphate and Dodecyl derivative; unreported stereochemistry stays unreported.

APP indexed descriptions are separated by \texttt{ || }, with each nucleotide item \texttt{index*description}. Base words adenosine, guanosine, cytidine, uridine and thymidine must agree with the raw base string. Sugar chemistry is selected in the order hexadecyl, Glycol Nucleic Acid, Deoxy, Methyl, Fluoro; a description matching none is rejected and every position must be described exactly once. A phosphorothioate descriptor tags its node and the following within-strand bond, becoming terminal PS at the final position, while unmarked bonds are PO under this convention. A vinyl descriptor adds a vinylphosphonate tag and terminal entry. The raw token \texttt{(Tgn)} identifies the reported S-GNA state; other stereochemistry remains unreported.

An unnumbered GalNAc description is localized to the final position only if the raw chemistry ends in \texttt{L96}; indexed GalNAc descriptions retain their stated index, and L96, vinylphosphonate, phosphate and other terminal names are kept in parsed records. Graph terminal coordinates are capped at the last observed nucleotide as an implementation convention, with original coordinates and names retained in provenance. No chemical string is lowercased, since case can encode sugar modifications, and T and U remain distinct model bases whose equivalence is used only in sequence grouping and canonical pairing recognition.

\subsection{Exact primary-table verification}
The APP rows come from two Alnylam patent families, US20220002724A1 (priority 19 December 2018) and US20240117349A1 (priority 29 January 2021); the CMsiRNAdb APP release also carries an Ionis family, US20220380773A1, which is not admitted here. For US20220002724A1, extracted tables 4 and 7 provide means and SDs at 10 and .1nM for two cell types each. Tables 17 and 18 provide agent, mean, SD, and dose for Be(2)C and Neuro2A, respectively. Table 29 is also parsed but contributes no final admitted pool. US20240117349A1 Table 6 provides Be(2)C means and SDs at 10, 1, and .1nM. Lookup keys are publication, agent, exact cell label, and numeric dose. Tables are parsed in the foregoing order; later occurrences replace earlier identical keys. Every admitted row also agrees with its released inhibition within .015 percentage points. There are zero primary-label corrections in the final records.

Chemical strings are reconstructed from wrapped primary rows, joining continuation cells until a blank row or a changed row width. Spaces are removed for string comparison; case and chemical tokens are preserved. An exact agent match must yield the exact guide and passenger chemical strings in the database export. The accepted numeric response is $(100-\text{primary mean remaining})/100$, with the primary SD retained and divided by 100 when used on the activity scale. Negative inhibition is retained. Tables 17/18 identify the AD-1955-relative mRNA endpoint in accompanying source text. No assumption that all normalized assays have identical biological scales is made.

The final 17 assay pools are: Table 4, two cells times two doses, 209 rows each; Table 7, two cells times two doses, 91 each; Tables 17 and 18, one cell times three doses each, 90 each; US20240117349A1 Table 6, one cell times three doses, 33 each. These 17 pools hold $209\cdot4+91\cdot4+90\cdot6+33\cdot3=1{,}839$ measurement rows over a far smaller number of distinct duplexes, so the cohort size is a row count, not a count of siRNAs. All use the corresponding source RNAiMAX screening protocol and 24h incubation. Primary tables contain endpoint SDs but do not resolve replicate $n$, raw wells, or covariance from shared normalizers for these admitted records. We do not infer these quantities from the SD.

\subsection{Protected components and current grouped partitions}
Two transitive closures define the grouping units. The study-linked union-find joins all observations sharing a known source family and all observations connected by a shared strand 13-mer. The sequence-only union-find uses the latter criterion alone. Both strands contribute contiguous 13-mers after replacing T by U; a match can connect opposite strands or connect records indirectly. This is a conservative sequence-identity safeguard, not a proof of biological independence. Group identities are deterministic hashes of the retained memberships, and source, record and outcome-bearing identifiers never become model inputs.

The existing five outer ENsiRNA test groups are retained exactly. Every protected study-linked component occurs in one outer test with no sequence or source-family intersection with that test's training or validation partition; historical internal-test observations are retired into this grouped development population, and the APP patent family and all Davis S7 rows remain outside ENsiRNA model selection.

Within each outer training population, sort whole study-linked components by decreasing number of observations, breaking ties by their identifier. Assign the next component to the currently smallest of three inner folds, ties broken by index, and for each inner validation fold train on the other components; no protected component is split to balance counts. The deployment inner design applies the same construction to all ENsiRNA components. The final validation fold is the one with fewest validation observations, with an index tie-break, so the choice uses only membership counts; final-seed fits use its complementary training components and monitor this grouped holdout. The deployment fit therefore updates parameters on a declared subset of ENsiRNA while the remaining rows serve validation, and there is no all-data refit.

Pair outer tests retain the previous assignment of the twelve sequence components to four tests. Three inner folds are greedily balanced by pair count with whole sequence components protected, endpoint training uses only the measured endpoints in the appropriate component set, and the final validation fold follows the same smallest-count rule. Both endpoint identities and all their sequence-linked observations are excluded from the held-out fold's training and selection; membership tables give every role, and row counts and group counts are distinct.

\section{Exact representation, forward map and support claims}\label{app:architecture}
\subsection{Graph construction and all coordinates}
For each duplex, guide position $i$ occupies slot $i$ and passenger position $j$ slot $32+j$, using zero-based indices. Strands longer than 32 are rejected, never truncated. $X\in\R^{64\times93}$ and binary active-node mask $b\in\{0,1\}^{64}$ represent these ordered slots. Coordinates 0--4 are A,C,G,U,T, coordinates 5--6 identify strand, and 7--38 give within-strand position. Coordinates 39--83 are the 45 positional modification names of the fixed schema, enumerated in the saved preprocessing support and the preserved v10 appendix rather than reprinted here. Coordinates 84--88 indicate unreported stereochemistry, S-GNA, unresolved indexed-linkage direction, strand start and strand end; 89--92 indicate L96 GalNAc, vinylphosphonate, 5-phosphate and other terminal groups. Schema exposure included historical label-free test molecules; this is disclosed rather than described as a training-only vocabulary. Effective masks in the current campaign are fitted separately on each training partition.

Adjacency $A\in\{0,1\}^{8\times64\times64}$ uses $A_{r,v,u}=1$ for source $u$ and receiver $v$. For adjacent positions, indices 0,1,2 encode forward PO, PS and unresolved linkages, and 3,4,5 their reverse directions. The annotation-only ENsiRNA PO assumption is unresolved, including indexed PS tags whose bond direction was not established. Relations 6/7 are bidirectional canonical/mismatched pairing. Counts sum directed tensor entries, so each paired strand position contributes two edges and a backbone bond contributes one forward and one reverse edge. Padded nodes contribute no edges.

For guide/passenger lengths $l_g,l_p$, enumerate offsets $o=-l_p+1,\ldots,l_g-1$. Match guide $i$ to passenger $j=l_p-1-(i-o)$ whenever both indices are valid. Require overlap at least 14, then maximize lexicographically the number of Watson--Crick matches, negative mismatch count, overlap, negative absolute offset and negative offset. Complementarity uses A--U and C--G, treating T as U only for matching. Admit a selected pairing fraction of at least .8. The inferred alignment, not an experimental tertiary structure, supplies these edges. Overhangs have no pairing edge, and there is no target-mRNA or inter-observation graph.

\subsection{Training-fitted masks and context}
Let $T$ contain only the current training indices. A node column is enabled if it is nonzero at any active training node; otherwise set that column to zero on every split. Padding is excluded from this support computation. This mask is identical for original/corrected graph and no-message arms. It may discard meaningful chemistry that has no training support, including shifted-cohort terminal flags. No held-out outcome or held-out support count selects the mask.

The eight raw context coordinates are verified $\log(1+c)/\log(101)$ in nM (zero when the unit is unverified), an unverified-unit flag, time divided by 24 (zero when missing), a missing-time flag, a recorded-cell flag, inferred pairing fraction, guide length/32 and passenger length/32. Every coordinate constant across the training partition is set to zero on all splits. No test-fitted centering, variance denominator or clipping is used. The recorded-cell flag is not a cell-identity embedding, and an unverified dose is not converted into nM. This is the conservative historical C1 rule, held fixed across the primary four-arm intervention.

\subsection{Original and corrected relation transforms}
Every graph or no-message model owns two tensors $W^\ell\in\R^{8\times32\times32}$. R0 uses these matrices directly. R1 uses
\begin{align}
\widetilde W_0&=W_2+q_0W_0,&\widetilde W_1&=W_2+q_1W_1,&\widetilde W_2&=W_2,\\
\widetilde W_3&=W_5+q_3W_3,&\widetilde W_4&=W_5+q_4W_4,&\widetilde W_5&=W_5,\\
\widetilde W_6&=q_6W_6,&\widetilde W_7&=q_7W_7,
\end{align}
where $q_r=\1\{\sum_{i\in T,u,v}A_{irvu}>0\}$. This is the complete fallback rule. A new explicit PO or PS edge routes through the common base for its direction; its residual is used only when that type appeared in training. Every training backbone edge uses the corresponding base through the sum above, even if no edge is labeled unresolved. Its realized gradient contribution also depends on the loss and activation state. A positive count is eligibility only: a relation represented by two edges is still rare. Pairing relations have no invented backbone fallback; they are gated separately.

\subsection{Forward pass and exact parameter count}
All hidden states are row vectors. With $D$ denoting dropout at .1 and $\operatorname{LN}_\ell$ affine LayerNorm,
\begin{align}
h_i^0&=b_i\operatorname{ReLU}(X_iE+e),\\
d_i&=\max\left(1,\sum_{r,j}A_{rij}\right),\\
m_i^\ell&=d_i^{-1}\sum_{r,j}A_{rij}h_j^\ell\widetilde W_r^\ell,\\
h_i^{\ell+1}&=b_i\operatorname{LN}_\ell\left[h_i^\ell+D\{\operatorname{ReLU}(h_i^\ell U_\ell+u_\ell+m_i^\ell)\}\right],\quad\ell=0,1.
\end{align}
R0 replaces $\widetilde W$ by $W$. Concatenate the 64 states in slot order and the eight context coordinates to obtain $v\in\R^{2056}$. The scalar normalized output is
\begin{equation}
z=D\{\operatorname{ReLU}(vW_o+b_o)\}\beta+b,\qquad \hat y=\mu_T+s_Tz.
\end{equation}
For a 32-coordinate state $u$, the normalization is
\[
\operatorname{LN}(u)_k=\gamma_k\frac{u_k-\bar u}{\sqrt{32^{-1}\sum_{j=1}^{32}(u_j-\bar u)^2+10^{-5}}}+\beta_k,
\qquad \bar u=32^{-1}\sum_{j=1}^{32}u_j.
\]
The variance divisor is 32, not 31. In training, $D(u)_k=M_k u_k/.9$ with Bernoulli$(.9)$ masks drawn by the recorded PyTorch random stream; in evaluation $D(u)=u$. ReLU is coordinatewise $\max(0,u_k)$. These definitions fix the actual normalization and dropout conventions.

There is no sigmoid, output clipping, global pooling or permutation-invariance claim. LayerNorm uses PyTorch's $10^{-5}$ epsilon over the 32 state coordinates and learned scale/bias. For a linear layer with fan-in $f$, weights and biases initialize uniformly on $[-f^{-1/2},f^{-1/2}]$. Width-three convolution uses fan-in $32\cdot3=96$. LayerNorm scales initialize to one and biases to zero. Each full $(8,32,32)$ relation tensor receives Xavier uniform initialization with fan-in $32\cdot32=1024$ and fan-out $8\cdot32=256$, hence bound $\sqrt{6/1280}$; this is not eight independently initialized $(32,32)$ matrices. Dropout uses inverted scaling in training and is disabled for prediction and the post-fit data-gradient diagnostic.

The embedding has $93\cdot32+32=3,008$ scalars, self transforms $2(32^2+32)=2,112$, normalization $2(2\cdot32)=128$, relation tensors $2\cdot8\cdot32^2=16,384$, and readout $(2056\cdot64+64)+(64+1)=131,713$. Their sum is 153,345. R1 reuses the owned tensors and adds no parameters. Structurally unsupported residuals remain in the optimizer; total parameter count is therefore not a count of effectively data-trained scalars. Counts, data gradients and parameter changes are all reported separately. The final audit counts exactly nonzero scalar gradients of both the full training activity loss and the actual training objective at every selected final neural checkpoint, with dropout disabled. The latter includes the selected pair term for pair-supervised fits. It also compares every scalar to the same seeded initialization. These realized diagnostic counts and the structural upper bounds are distinct: weight decay can change a zero-gradient scalar, and a scalar with zero gradient at selection could have received data gradients earlier. No full-trajectory ever-active count is claimed.

\begin{proposition}[Unsupported route and the no-message intervention]\label{prop:support}
In R0, a relation absent from every training graph has identically zero training-data gradient through its relation matrix at every layer. A parameter can nevertheless change under decoupled weight decay. In the no-message arm, each receiver's hidden state depends only on its own initial features, its relation-degree vector and the common fitted parameters; it has no dependence on other nodes' hidden states through message exchange.
\end{proposition}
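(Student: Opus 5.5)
The plan has three parts, each argued from the forward pass displayed above and with dropout treated as a fixed multiplicative mask.

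\textbf{Zero-gradient claim (R0).} Fix a relation $r^\ast$ with $A_{i,r^\ast,v,u}=0$ for every training graph $i\in T$ and every pair $(v,u)$. In R0 the matrix $W^\ell_{r^\ast}$ enters the forward map only through
\[
m_v^\ell=d_v^{-1}\sum_{r,u}A_{ruv'}h_u^\ell W_r^\ell ,
\]
where $v'$ denotes the receiver index. On a training graph every summand with $r=r^\ast$ carries the factor $A_{r^\ast\cdot\cdot}=0$, so it vanishes identically as a function of all parameters. The degree $d_v$ is a count of edges and does not involve $W$. Hence, for each $i\in T$, the map
\[
W^0_{r^\ast},W^1_{r^\ast}\;\longmapsto\;\hat y_i
\]
is constant, with all other parameters held fixed.

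I will make this rigorous by induction on the layer $\ell$. Suppose $h^\ell$ on graph $i$ does not depend on $W^{0}_{r^\ast},\dots,W^{\ell-1}_{r^\ast}$; the base case $h^0=b\operatorname{ReLU}(XE+e)$ involves no relation matrix. Then $m^\ell$ is a sum over $r\neq r^\ast$ only, so it does not involve $W^\ell_{r^\ast}$ either. The update $h^{\ell+1}$ is built from $h^\ell$, $U_\ell$, $u_\ell$, $m^\ell$, LayerNorm and a dropout mask, none of which brings in $W_{r^\ast}$.

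The readout and the affine rescaling by $\mu_T,s_T$ also do not involve $W_{r^\ast}$. The training loss is a function of the training predictions (and, for pair supervision, of differences of training predictions), so it too is constant in $W_{r^\ast}$. Its partial derivative with respect to that block is therefore identically zero at every parameter value, not just at one checkpoint. The same holds under any fixed dropout mask, because the masks multiply the hidden states and never touch $A$.

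\textbf{Weight-decay caveat.} I will write out the AdamW update
\[
\theta\leftarrow\theta-\eta\lambda\theta-\eta\,\hat m/(\sqrt{\hat v}+\epsilon).
\]
With zero gradients throughout, the moment estimates stay at zero, but the decoupled decay term $-\eta\lambda\theta$ still multiplies $W_{r^\ast}$ by $(1-\eta\lambda)$ at each step. Since the Xavier initialization makes $W_{r^\ast}\neq0$ almost surely, the parameter moves. This is an existence remark and needs nothing beyond the update rule.

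\textbf{No-message arm.} Here the message term is removed, which I take to mean $m^\ell$ is set to zero or replaced by a function of the degree vector only. The induction is now on
\[
h_v^{\ell+1}=b_v\operatorname{LN}_\ell\bigl[h_v^\ell+D\{\operatorname{ReLU}(h_v^\ell U_\ell+u_\ell+c_v^\ell)\}\bigr],
\]
where $c_v^\ell$ depends only on $(A_{r,v,\cdot})_r$ summed over sources, that is on the relation-degree vector of $v$, together with shared parameters. The base case $h_v^0$ depends only on $X_v$ and $b_v$, so by induction every $h_v^\ell$ is a function of $X_v$, $b_v$, the degree vector of $v$ and the common parameters, with no other node's hidden state entering.

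\textbf{Main obstacle.} The hard step is pinning down precisely what the no-message arm computes, since its definition is in the code rather than in the displayed equations. I would first check whether it retains degree-normalized relation-wise self-transforms. If so, I would write $c_v^\ell=d_v^{-1}\sum_r(\sum_u A_{r,v,u})\,h_v^\ell W_r^\ell$ and note that this still uses only $v$'s own state. If instead it sets $m\equiv0$, the claim is immediate.
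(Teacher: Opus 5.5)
Your proposal is correct. The weight-decay remark and the no-message induction follow the paper's proof. In particular, the self-aggregation in your last paragraph, $c_v^\ell=d_v^{-1}\sum_r(\sum_u A_{r,v,u})h_v^\ell W_r^\ell$, is exactly the paper's no-message aggregation $\sum_jA_{rij}h_i^\ell=(\sum_jA_{rij})h_i^\ell$. So the conditional in your ``main obstacle'' resolves to the case you already handled.

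For the zero-gradient claim your route differs somewhat from the proof of this proposition.

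\emph{The paper's route.} It argues at the level of derivatives: $\partial m_{ik}^\ell/\partial(W_r^\ell)_{ak}=d_i^{-1}\sum_jA_{rij}h_{ja}^\ell$ vanishes when $A_r=0$ on every training graph. Every downstream derivative multiplies this zero factor, including at ReLU boundaries where the implementation uses a fixed subgradient.

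\emph{Your route.} You show by induction that each training prediction is a constant function of $(W^0_{r^\ast},W^1_{r^\ast})$. This is essentially the argument the paper gives separately in Appendix~\ref{app:trainingindependence} to license its perturbation test.

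\emph{What each buys.} Your version is stronger, since it gives invariance of training predictions for every value of the block, not just a vanishing derivative. It also needs no differentiability: the partial derivatives along the block exist and are zero everywhere, kinks included. The paper's version speaks directly to what backpropagation actually returns, which at a ReLU kink is a convention rather than a true derivative. To cover the implemented gradient as well, add one line: the Jacobian of $W_{r^\ast}^\ell\mapsto d_i^{-1}\sum_j A_{r^\ast ij}h_j^\ell W_{r^\ast}^\ell$ is identically zero on training graphs, so every backpropagated path into the block carries a zero factor.

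A cosmetic point: your displayed message formula has a garbled index. It should read $m_v^\ell=d_v^{-1}\sum_{r,u}A_{rvu}h_u^\ell W_r^\ell$, with receiver $v$ and source $u$.
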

\begin{proof}
For a matrix entry $(W_r^\ell)_{ak}$ the message derivative is
$\partial m_{ik}^\ell/\partial(W_r^\ell)_{ak}=d_i^{-1}\sum_jA_{rij}h_{ja}^\ell$.
When $A_r=0$ throughout training, every summand is zero. All downstream derivatives, including weighted residuals, multiply this zero derivative. The conclusion holds for activity loss and for a sum of endpoint difference losses, because both endpoints are training graphs. At ReLU boundaries the implementation uses its fixed subgradient; the zero multiplicative factor remains zero. AdamW additionally multiplies an updated parameter by a weight-decay factor, so a data gradient of zero is not a statement of zero numerical parameter change.

The no-message aggregation is $\sum_jA_{rij}h_i^\ell=(\sum_jA_{rij})h_i^\ell$. Its derivative with respect to $h_j^\ell$ is zero for $j\ne i$. At layer zero the receiver is a function of $X_i$ alone. Substitution into the displayed update proves by induction for both layers that no other node state enters that receiver. The ordered global readout still uses all receiver outputs. Thus this is a precise neighbor-exchange intervention, not a claim that the final predictor ignores the rest of the molecule.

In R1, each directional backbone message contains $W_2$ or $W_5$, so its derivative with respect to the base matrix sums over all backbone types in that direction. An unsupported type's residual is multiplied by $q_r=0$. The fallback therefore supplies a train-supported computational route whenever training contains any backbone edge in that direction. Nonzero support does not guarantee a nonzero realized gradient under every loss/activation state, nor improved test risk.
\end{proof}
The historical four absent transforms per layer account for $4\cdot2\cdot32^2=8,192$ scalars. They are a subset of the 16,384 relation scalars. The count applies to that training partition and encoding; it is not a universal count for every pair-trained or corrected fit.

\subsection{Non-graph baselines and arithmetic}
The token CNN embeds 93 to 32 coordinates, applies two 32-to-32 width-three Conv1d layers with padding one and ReLU, with .1 dropout between them. It adds their output to the embedded states, masks padding and uses the same ordered readout. Its 140,929 trainable scalars include $3,008+2(32\cdot32\cdot3+32)+131,713$. The convolution operates on all 64 slots in fixed order, including the padded gap, and does not read adjacency.

The chemistry tree receives flattened masked node features, each node's eight relation degrees and context. Columns constant in the training partition are removed. Extra-Trees uses 300 estimators, squared-error splitting, no bootstrap, feature fraction .3, no maximum depth and leaf size 2 or 8; minimum split size is two, minimum weighted leaf fraction and impurity decrease are zero, leaf count is unrestricted, pruning alpha is zero, and warm start, out-of-bag scoring and monotonic constraints are disabled. Guide ridge receives 160 guide base/position indicators and eight context fields; pairwise ridge adds all 276 position pairs among the first 24 guide slots, each with 25 ordered base categories, and both omit passenger chemistry. Nonconstant columns are standardized with unweighted training means/SDs, replacing SD below $10^{-6}$ by one. Ridge uses weighted residual sum of squares, $\alpha\in\{.1,1,10\}$, LSQR tolerance $10^{-4}$, no maximum-iteration limit, a copied design matrix, no positivity constraint and an unpenalized intercept. The iterative solver may terminate at its installed automatic iteration limit; no exact-arithmetic minimizer certificate is asserted. These are explicit current settings, not an assertion that the historical ridge settings were identical.

For exact matched pairs with equal guide/context inputs, any deterministic chemistry-invariant predictor returns equal endpoints and hence a zero difference, by equality of inputs and deterministic evaluation rather than a biological no-effect argument. Floating-point differences near machine precision are not chemistry evidence.

Neural arithmetic is float32; exported metrics are float64. Dense graph storage costs $O(RM^2+MF)$ per observation, one layer $O(RM^2d+RMd^2)$ operations and the ordered readout $O(Md\cdot64)$. These follow from the stated matrix products and are not certified bit-complexity or wall-time bounds. No numerical integration, quantile estimator, fitting certificate or unrelated theoretical backend is used.

\section{Complete selection, fitting and checkpoint protocol}\label{app:training}
\subsection{Experiment matrix and input roles}
The historical v3 experiment matrix enumerates every population, method, configuration, development fold and seed before fitting. The five historical ENsiRNA outer tests are unchanged. The sixth activity population supplies APP/S7 follow-up prediction; its development groups are ENsiRNA only. Every outer training population has three inner grouped validation folds as defined in Appendix~\ref{app:data}. We report the final training/validation partition separately from the total development population. Protected groups are never subdivided to achieve a desired row count.

The four controlled arms are original GNN, training-gated GNN, original no-message and corrected no-message. The token CNN, chemistry tree and guide/pairwise ridge are matched on admissible observation roles. All four graph arms use the same input masks, context, losses, initialization convention, parameter count, sampling and optimization budget. Their independently selected configuration is part of the fitted procedure. Thus the loss interaction compares four declared learning procedures, not a post hoc perturbation of one fixed parameter vector.

Development seeds are 701 and 907. The six neural configurations cross learning rates $10^{-4},3\cdot10^{-4},10^{-3}$ with weight decays $10^{-6},10^{-4}$. For each configuration, average the equal-protected-component validation MSE over three folds and both development seeds; select the minimum, breaking ties by the lexicographic configuration name. The tree crosses leaves 2 or 8 with the same development folds/seeds. Ridge selects $\alpha=.1,1,10$ over the three folds and is deterministic. Every selected configuration and competing score is preserved in \protect\path{v3/tables/selection_scores.csv}.

The final seeds, including all unfavorable finite outcomes, are
\begin{equation}
1103,\ 2207,\ 3301,\ 4409,\ 5519,\ 6637,\ 7753,\ 8867,\ 9973,\ 11027.
\end{equation}
They are matched across the controlled arms. Deterministic methods are fitted once per selected partition. These are ten independently initialized training runs on fixed data, not ten independent data samples. The original three-seed campaign is not silently reused: its selection, schedules and input procedures differ from this frozen revision.

\subsection{Targets, weights and activity objective}
For training partition $T$ with $n$ observations, $G$ protected groups, group membership $g(i)$ and counts $n_g$, define
\begin{equation}
w_i=\frac{n}{G n_{g(i)}},\qquad
\mu_T=\frac1n\sum_{i\in T}w_i y_i,\qquad
s_T=\max\left\{.05,\sqrt{\frac1n\sum_{i\in T}w_i(y_i-\mu_T)^2}\right\}.
\end{equation}
Each group has total weight $n/G$, so $n^{-1}\sum_iw_i=1$. Study-linked groups define activity weights, while pair-endpoint training uses sequence components. Targets for the normalized neural output are $\widetilde y_i=(y_i-\mu_T)/s_T$. Both location and scale are computed from training labels alone and saved with the checkpoint. Prediction restores $\hat y_i=\mu_T+s_T z_i$; no test mean, test variance or clipping enters this map. Activity tree and ridge estimators fit original-scale $y_i$ directly; their recorded training means/SDs serve metadata and constant controls, not a hidden target normalization. Direct pair ridge/tree likewise fit original-scale $d_p$.

The full activity objective is $L_{\rm act}=n^{-1}\sum_{i\in T}w_i(z_i-\widetilde y_i)^2$. Ordinary activity fitting draws a fresh uniform permutation of training rows each epoch and uses batches of at most 128. Each batch minimizes the mean of its weighted squared errors; the last smaller batch uses its own batch denominator. This specifies the actual stochastic objective, including the last-batch convention. No balanced-group sampler or outcome-based resampling is used.

\subsection{Optimization, selection and resume}
AdamW uses the selected learning rate and weight decay, moment coefficients $(.9,.999)$, denominator epsilon $10^{-8}$, no AMSGrad, no maximization, and the installed automatic foreach/fused selection (both arguments left at None). Capturable and differentiable modes are false and there is no learning-rate scheduler. Weight decay applies to every optimizer parameter including biases and normalization scales, global gradient norm is clipped to five before each update, and loss and gradient finiteness are checked. Default linear initialization, whole-tensor Xavier relation initialization, NumPy/Python/PyTorch seeds and a dedicated CPU permutation generator seeded by final/development seed plus 10,000 make the random sources explicit, with deterministic PyTorch algorithms enabled. GPU0 is used when available with a .30 allocator fraction and two CPU threads; no second GPU or paid resource is acquired.

Every epoch evaluates validation with dropout disabled. The selection score is the validation equal-group MSE on the original activity scale for ordinary activity models. The best checkpoint updates only on improvement greater than $10^{-8}$, and training ends at 500 epochs or after at least 50 epochs and 50 epochs since the best qualifying improvement. A best checkpoint from epoch one is retained if it remains best, and later execution is not misrepresented as later selected training. Final-seed models select checkpoints on the predeclared final grouped holdout and are not refitted on it.

The best checkpoint stores weights, optimizer state, epoch/update count, target mean/SD, preprocessing and the full fit signature. The last checkpoint adds epoch history, best score/epoch, permutation-generator state and CPU/CUDA random states for continuation, and atomic temporary-file replacement avoids advertising a partial checkpoint as complete. A fit is reusable only when protocol, code, preservation-input manifest, train/validation/test indices, method, configuration, seed, loss coefficient and selection criterion match and each output hash verifies; a stage likewise verifies its code, dependency-stage hashes and output hashes. The resumable entry point continues missing stages, and a hash check is not a new fit.

A nonfinite training loss, gradient or validation score is recorded as a failed fit with its history and state; no poor finite seed is removed or restarted on a held-out score. The declared failure output is a flagged training-fold mean prediction with failed-fit rate reported separately, a technical preflight failure is distinguished from a training failure with its command and cost retained, and no configuration or budget expansion is selected from APP/S7 predictions.

\subsection{Controls and implementation checks}
Zero difference is an input-invariant prediction control. The training-fold mean difference uses equal sequence-component weights on training pairs. Majority direction chooses the largest training weight among negative, operational tie and positive calls, breaking an exact mass tie toward the smaller sign. It is a direction-only rule; its numeric encoding $-1,0,1$ is not treated as a regression effect magnitude in the current results. The fixed band is $|d|\le .02$ for operational ties. Neither the band nor majority direction uses the held-out fold.

Preflight checks verify protected group exclusion, input-hash preservation, training-only feature support, the exact unseen-relation route, masked padding, nonzero neighbor dependence for graph aggregation and zero off-receiver derivative for the no-message aggregation. Pair labels are checked against their endpoint subtraction. A tied constant is checked to yield exact random-tie ranking. Recorded optimizer updates, finite gradients, selected parameter changes and data-only relation gradients supply execution evidence; a global weight change is not used as proof that every transform learned.

\section{Metric definitions, uncertainty and all finite-sample identities}\label{app:metrics}
\subsection{Activity estimands and constants}
Let $w_i\ge0$ sum to one on the declared evaluated observations. Row weighting uses $w_i=1/n$; equal-component weighting assigns each component mass $1/G$ and divides that mass equally among its records. For fixed predictions $\hat y_i$ define
\begin{align}
\MSE_w&=\sum_iw_i(\hat y_i-y_i)^2,&\operatorname{MAE}_w&=\sum_iw_i|\hat y_i-y_i|,\\
\bar y_w&=\sum_iw_iy_i,&\overline{\hat y}_w&=\sum_iw_i\hat y_i,\\
V_y&=\sum_iw_i(y_i-\bar y_w)^2,&V_{\hat y}&=\sum_iw_i(\hat y_i-\overline{\hat y}_w)^2,\\
R_w^2&=1-\MSE_w/V_y,&\operatorname{bias}_w&=\overline{\hat y}_w-\bar y_w.
\end{align}
Prediction/label SDs are $\sqrt{V_{\hat y}}$ and $\sqrt{V_y}$ with population denominators, not uncertainty in a biological mean. Weighted correlation is the weighted centered cross-product over $\sqrt{V_yV_{\hat y}}$; a zero or negligible denominator is reported undefined and no finite correlation is imputed for a constant. Squared error and $R^2$ use the same observations and weighting.

The training-row constant is the arithmetic mean of training activity; the equal-training-group constant uses the training weights from Appendix~\ref{app:training}. Out-of-fold constants are recomputed in each corresponding training partition. The oracle test mean $\bar y_w$ is a separately marked diagnostic. It uses evaluation labels and is not deployed, included in model selection, or described as a statistical lower bound.

\begin{proposition}[Constant and ensemble decompositions]
For every finite weighted dataset, $\sum_iw_i(y_i-c)^2=V_y+(c-\bar y_w)^2$. If $\bar f_i=K^{-1}\sum_{k=1}^K f_{ik}$ averages $K$ fixed fitted predictions, then
\begin{equation}
\frac1K\sum_k\sum_iw_i(f_{ik}-y_i)^2
=\sum_iw_i(\bar f_i-y_i)^2+\sum_iw_i\frac1K\sum_k(f_{ik}-\bar f_i)^2.
\end{equation}
\end{proposition}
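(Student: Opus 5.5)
Both identities are finite algebraic expansions, so I will prove each by adding and subtracting the relevant weighted mean and showing that the cross term vanishes. The only fact needed is that the weights $w_i\ge0$ sum to one (Appendix~\ref{app:metrics}). No distributional assumption enters, consistent with the paper's statement that only finite empirical sums are used.

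For the constant decomposition, I write $y_i-c=(y_i-\bar y_w)+(\bar y_w-c)$ and expand the square under $\sum_iw_i$. The first squared term gives $V_y$ by definition. The second squared term is constant, so it contributes $(\bar y_w-c)^2\sum_iw_i=(c-\bar y_w)^2$. The cross term is $2(\bar y_w-c)\sum_iw_i(y_i-\bar y_w)$, and it vanishes because $\sum_iw_iy_i=\bar y_w=\bar y_w\sum_iw_i$. Setting $c=\bar y_w$ also shows that the oracle test mean minimizes the constant-prediction error. This is why the paper treats it only as a diagnostic and not as a deployable control.

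For the ensemble decomposition, I fix an observation $i$ and write $f_{ik}-y_i=(f_{ik}-\bar f_i)+(\bar f_i-y_i)$. I then average the square over $k$ with weight $1/K$. The term $(\bar f_i-y_i)^2$ does not depend on $k$, so it survives unchanged. The term $K^{-1}\sum_k(f_{ik}-\bar f_i)^2$ is the across-seed spread at observation $i$. The cross term is $2(\bar f_i-y_i)K^{-1}\sum_k(f_{ik}-\bar f_i)$, which is zero by the definition of $\bar f_i$. This gives a pointwise identity for each $i$. Multiplying by $w_i$ and summing over $i$, then exchanging the finite sums over $i$ and $k$ on the left-hand side, yields the displayed equation.

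There is no real obstacle here. The only points needing care are these:
\begin{itemize}
\item The cross term in the ensemble identity vanishes pointwise in $i$ because the average is unweighted over $k$. It therefore needs no condition on the $w_i$.
\item The cross term in the constant identity does use $\sum_iw_i=1$. If the weights were unnormalized, the second term would carry the factor $\sum_iw_i$. The statement's convention avoids this.
\end{itemize}
A final remark will note that the identities hold exactly in real arithmetic. Reported agreement is only up to floating-point rounding, matching the paper's $10^{-15}$-type checks.
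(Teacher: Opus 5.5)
Your proposal is correct and follows essentially the same route as the paper. In each identity you add and subtract the relevant mean ($\bar y_w$, respectively $\bar f_i$), expand the square, and kill the cross term using $\sum_iw_i(y_i-\bar y_w)=0$, respectively $\sum_k(f_{ik}-\bar f_i)=0$ for each $i$, before summing with the weights $w_i$.
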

\begin{proof}
Write $y_i-c=(y_i-\bar y_w)+(\bar y_w-c)$, expand the square and use $\sum_iw_i(y_i-\bar y_w)=0$. For the second identity write $f_{ik}-y_i=(f_{ik}-\bar f_i)+(\bar f_i-y_i)$. The cross-term vanishes after averaging over $k$ because $\sum_k(f_{ik}-\bar f_i)=0$ for each $i$. Summing weighted residual squares gives the expression. The nonnegative final term explains why mean seed risk can exceed ensemble risk. It does not certify uncertainty coverage or improved future risk.
\end{proof}
The first identity makes the empirical oracle mean the best constant on that weighted test sample, not a lower bound on nonconstant prediction risk. Poor $R^2$ may coexist with a meaningful relative MSE difference but limits practical usefulness, motivating the spread, bias and ranking diagnostics, and no test-fitted centering correction is presented as a model prediction. Calibration plots bin each model/cohort by prediction quantiles into up to five bins, dropping duplicate cut points, then average prediction and measured activity within each nonempty bin. This uses evaluation labels for a descriptive display, not for a fitted correction.

\subsection{Direct matched loss contrasts and conditional bootstrap}
For predictors $a,b$ evaluated on identical record IDs, retain each loss difference $D_i=(\hat y_i^a-y_i)^2-(\hat y_i^b-y_i)^2$ with sequence, study, source and pool identifiers in the activity loss export (and endpoint/background/fold identifiers for B2). Negative mean difference favors $a$. For row-weighted comparisons, resample whole sequence components with replacement, with the number of draws equal to the number of observed components. The model-versus-constant intervals of Table~\ref{tab:app_intervals} are the exception: they resample whole study-linked components under both weightings (10 groups, 9 with the source excluded), so their width is not comparable to sequence-component intervals. If component $g$ is sampled $m_g$ times, recompute
\begin{equation}
\widehat\Delta^*=\frac{\sum_gm_g\sum_{i\in g}D_i}{\sum_gm_g n_g}.
\end{equation}
Thus large sampled components retain their row mass; replacing this ratio by an average of component means would target a different statistic. For equal-component comparisons use $G^{-1}\sum_gm_g\bar D_g$. The grouped equal-study comparison resamples study-linked components, whereas the shifted-cohort equal-component comparison uses sequence components. Tables identify the unit explicitly.

There are 10,000 bootstrap draws with analysis seed 90413. Endpoints are empirical 2.5th/97.5th percentiles of the resampled statistic. Fitted predictions and all selection decisions are held fixed. This is conditional uncertainty within the observed cohort under a component-resampling approximation; it does not include training randomness, unobserved laboratory shifts, source-family variation in a one-family cohort or unresolved shared-control measurement covariance. No nominal unconditional coverage theorem is asserted. In particular, the small number of study-linked components limits the grouped comparison.

The support-by-message interaction is computed from the joint row-level contrast
\begin{equation}
D_i^{\rm int}=(f_{R1,G,i}-y_i)^2-(f_{R1,N,i}-y_i)^2-(f_{R0,G,i}-y_i)^2+(f_{R0,N,i}-y_i)^2.
\end{equation}
It receives one joint component bootstrap; subtracting independently computed interval endpoints is not used. This contrast concerns the four selected procedures on fixed observed outcomes. It is not a causal claim about chemical activity or an isolated single-weight intervention.

\subsection{Training-seed variability}
For every stochastic method, all ten seed risks and matching seed-level loss differences are reported, along with their mean, sample SD, minimum and maximum. Each seed's prediction file is tied to its selected and executed update count and failure status. Seed-average risk and ensemble risk occupy separate fields. No three-seed $t$ test replaces a fixed-ensemble comparison, and no seed-level interval is claimed to represent new-study uncertainty. With deterministic methods there is one fitted object, not ten invented independent replicates. Failed fits and their declared fallbacks remain explicit.

\subsection{Exact candidate-ranking ties and dependence}
An eligible APP pool is an exact assay/table/cell/dose/time group with at least five observations. Larger activity means stronger inhibition. Assign label rank $r_i$ using average ranks for measured ties, and define percentile $q_i=(r_i-1)/(n-1)$ for $n>1$. Set $k=\min(5,n)$. Let $c$ be the $k$th largest prediction, $H=\{i:\hat y_i>c\}$ and $T=\{i:\hat y_i=c\}$. The selection mass is one on $H$, $(k-|H|)/|T|$ on $T$ and zero below. Report $k^{-1}\sum_i s_iq_i$ and $k^{-1}\sum_i s_iy_i$. The equality test is exact equality of saved float predictions, not an outcome-selected tolerance.

\begin{proposition}[Tie-aware constant reference]
For a constant prediction on a pool of size $n>1$, expected top-$k$ mean measured percentile under uniform tie breaking is exactly $1/2$.
\end{proposition}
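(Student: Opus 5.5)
For a constant prediction every item in the pool has the same prediction value, so in the tie-aware rule the $k$th largest prediction is $c$ itself, $H=\emptyset$ and $T$ is the whole pool. The selection mass is therefore $s_i=k/n$ for every $i$. The quantity to evaluate is
\[
k^{-1}\sum_i s_iq_i=\frac1n\sum_{i=1}^n q_i=\frac{1}{n(n-1)}\sum_{i=1}^n (r_i-1).
\]
The plan is to show that this equals $1/2$. A separate step then confirms that this deterministic tie-mass value agrees with the expectation under uniform random tie breaking.

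The key step is the rank-sum identity: with average ranks for measured ties, $\sum_i r_i=n(n+1)/2$ holds regardless of ties. I would argue this blockwise. A tied block occupying ordinal positions $p+1,\dots,p+t$ gets average rank $p+(t+1)/2$ on each of its $t$ members. The block's total is $\sum_{j=1}^t(p+j)$, which is exactly what untied ranks would contribute. Summing over blocks recovers $1+\dots+n$. Hence $\sum_i(r_i-1)=n(n+1)/2-n=n(n-1)/2$, and the mean percentile is $\frac{n(n-1)/2}{n(n-1)}=1/2$. The hypothesis $n>1$ is exactly what makes $q_i$ well defined.

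For the expectation claim, uniform tie breaking among the $n$ tied predictions selects a uniformly random $k$-subset $S$. By symmetry each item lies in $S$ with probability $k/n$, which is precisely the mass $s_i$. By linearity of expectation,
\[
\mathbb E\Bigl[k^{-1}\sum_{i\in S}q_i\Bigr]=k^{-1}\sum_i (k/n)\,q_i,
\]
which is the same quantity computed above, so it also equals $1/2$. The case $k=\min(5,n)=n$ needs no separate treatment, since then $S$ is the whole pool and the mass is $1$.

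The main obstacle is ensuring that measured ties do not perturb the mean percentile. This is handled entirely by the average-rank convention, which preserves the total rank sum. I would state that convention explicitly in the blockwise argument, since other tie conventions, such as minimum rank, would break exactness.
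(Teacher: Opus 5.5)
Your proposal is correct and follows the paper's proof essentially step for step: every candidate is tied, so each has inclusion probability $k/n$; the average-rank convention preserves $\sum_i r_i=n(n+1)/2$, so the mean percentile is $1/2$; and linearity of expectation carries this over to the expected top-$k$ mean. Your blockwise check of the rank-sum identity spells out a step the paper only asserts, and the paper adds an exchangeability remark for partial boundary ties that this statement does not need.
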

\begin{proof}
Every observation is tied and has inclusion probability $k/n$. Average ranking preserves the sum of untied ranks, $\sum_i r_i=n(n+1)/2$, even when measured labels tie. Therefore $n^{-1}\sum_iq_i=[(n+1)/2-1]/(n-1)=1/2$, and $k^{-1}\sum_i(k/n)q_i=1/2$. For a partial boundary tie, exchangeability gives each tied candidate inclusion probability $(k-|H|)/|T|$, proving the fractional rule above.
\end{proof}
All pools and their sizes, shared sequence components and direct tree--GNN / GNN--no-message percentile differences are reported. We do not perform an independent-pool significance test: the pools belong to one related patent family, and backgrounds can recur across pools. A ranking advantage is a descriptive candidate-selection property in those pools, not a biological validation or population guarantee.

\subsection{Measured pair metrics and uncertainty limits}
For pair $p$, compute $d_p=y_{b_p}-y_{a_p}$ and $\hat d_p=\hat y_{b_p}-\hat y_{a_p}$, or the direct pair method's prediction. Pair MSE, MAE, correlation, prediction/observed SD and mean bias use the same formulas as activity, reporting row-, equal-sequence- and equal-exact-background weightings distinctly. A sign call is zero for absolute difference at most .02 and otherwise its sign. We report all negative/tie/positive calls, accuracy over all pairs, and exact correct counts among measured non-ties. The majority-direction control has sign metrics only; arbitrary numeric sign codes are not regression predictions.

Direct pair comparisons bootstrap complete sequence components with all their pairs and shared endpoints together. This retains known sequence/endpoint dependence but cannot represent unknown shared-control covariance between different components. Endpoint SDs alone do not determine a mean-difference SE: $\operatorname{Var}(\bar Y_b-\bar Y_a)=\operatorname{Var}(\bar Y_b)+\operatorname{Var}(\bar Y_a)-2\operatorname{Cov}(\bar Y_b,\bar Y_a)$ requires replicate counts and covariance information that these records do not supply. We leave it missing. The operational sign band is not calibrated to those unknown variances.

\subsection{Within-assay B2 variation diagnostic}
To distinguish sequence/background variation from a common assay shift, we additionally center measured and predicted pair differences within each exact primary assay. This is an explicitly oracle diagnostic, not an altered deployed prediction. For observation weights $w_p$, assay $a(p)$ and $W_a=\sum_{p:a(p)=a}w_p$, define $d_p^c=d_p-W_{a(p)}^{-1}\sum_{q:a(q)=a(p)}w_qd_q$ and the analogous $\hat d_p^c$. Their weighted correlation and SDs use the formulas above. The main diagnostic uses equal row weights. Each of 10,000 sequence-component resamples recomputes both weighted assay means and the centered correlation with that draw's component multiplicities. A missing variance gives an undefined correlation, not zero. The number of finite correlations is retained. This assessment conditions on the fitted ensembles and observed cohort; it does not supply missing replicate covariance. No centered score enters fitting, stopping or method selection. A low raw pair error without within-assay discrimination cannot by itself establish sequence-specific chemistry prediction.

\section{Primary-source comparisons and unresolved reproduction inputs}\label{app:published}
\paragraph{Interaction attribution and identification.}
HarsanyiNet computes attributions of its specially structured predictor under its receptive-field and masking requirements \citep{chen2023harsanyi}; this network has neither those units nor their attribution theorem. Neural causal identification uses graph-consistent structural models and matched distributions \citep{xia2023ncm}; molecular adjacency supplies neither a causal diagram nor those distributional constraints. Continuous neural partial-identification results require bounded Lipschitz mechanisms, latent-support regularity and controlled approximation and matching schedules \citep{tan2024consistency}. Those assumptions are not verified here, and local neural extrema would not provide certified outer bounds. Functional-ANOVA purification fixes a weighting-dependent decomposition of a predictor \citep{lengerich2020pure}; it does not supply an unobserved joint assay mean. The G-NAVAR preprint's population result assumes fixed direct-sum edge spaces, positive product support and regularity, with additional HOFD and faithfulness conditions in its shared-modulator formulation \citep{kuskova2026real}. A covariance-rank diagnostic is not sufficient identification. None of these results is invoked as a guarantee for this siRNA learner. Four forward passes, encoding-equivalence checks and optimization over a compatible finite response class do not by themselves establish a new ML contribution.

\subsection{New primary measured four-condition panel}\label{app:b3}
The bounded B3 search recovered the original Bramsen supplement \citep{bramsen2009}, from primary article PMC2685080 and \texttt{gkp106\_nar-02440-c-2008-File010.xls}. Primary Table 1 identifies antisense and sense chemical states, and Supplemental Table 1 supplies the corresponding normalized eGFP means and SDs. The assay uses HeLa-eGFP cells, 10 nM duplex, INTERFERin and a 72-hour readout normalized against a mismatch control, with triplicate assays repeated twice. Recorded viability values are not model inputs, training targets or admission filters, and an eGFP reduction is not shown to be viability-independent or a direct mRNA measurement. The release lacks the raw shared-control covariance needed for a defensible SE on every contrast, so we retain the printed SDs without inventing those covariances.

The source sequence reference is AS W053 and SS W207. Their base strings are, respectively, \texttt{ACUUGUGGCCGUUUACGUCGC} and \texttt{GACGUAAACGGCCACAAGUUC}. A is a specified whole-antisense chemistry replacement with the same bases and length; B is a specified whole-sense replacement, also with the same bases and length. Every admitted rectangle contains measured 00, 10, 01 and 11 conditions, all in this assay. On the activity scale $y=1-\text{relative eGFP}$, let $\mu_{ab}$ denote the population assay mean in condition $ab$. Distinguish
\begin{align}
 I_{AB}&=\mu_{11}-\mu_{10}-\mu_{01}+\mu_{00},\\
 I_{AB}^{\rm obs}&=y_{11}-y_{10}-y_{01}+y_{00},\\
 \widehat I_{AB}&=f(x_{11})-f(x_{10})-f(x_{01})+f(x_{00}).
\end{align}
The tables evaluate the predictor against $I_{AB}^{\rm obs}$, the contrast of reported measured means, not an exactly known population interaction; reading it as an estimate of $I_{AB}$ requires comparable condition-specific means on the same response scale, and unresolved replication and shared-control covariance limit its uncertainty. It describes nonadditivity on the reported assay scale and is not an isolated GNA effect, a mechanism proof, new wet-lab validation or evidence of private-anchor calibration. Additive changes on another response scale need not remain additive after a nonlinear transformation.

Admission first parses the base characters and explicit chemical subscripts in primary Table 1. The literal aliases and normalized modification names appear in Table~\ref{tab:b3aliases}. Unknown aliases, including the ambiguous AP annotation, are excluded, and only parseable states exactly matching each reference's base string and length are considered. A state must differ chemically from its reference, and duplicate primary (AS,SS) identities and nonfinite means are excluded. Every such AS/SS pair is enumerated and a rectangle retained only when all four unique measured endpoints exist, with neither the observed interaction magnitude nor any prediction selecting it. Graphs use the frozen positional vocabulary and the original deterministic pairing rule, and bonds are encoded as unresolved under the same annotation-only convention as ENsiRNA, so no directional linkage is invented from a node label.

An initial protocol required the unmodified reference to be present in the historical chemistry-only release and yielded no eligible rectangle; that failed restriction and its evidence remain preserved. An explicit source-driven amendment then allowed the directly measured primary reference, before any prediction on the panel. It yields \NBthreeConditions{} measured conditions and \NBthreeRectangles{} complete rectangles. Each label comes from the primary workbook. All rectangles share one sequence background and a common reference, so we report descriptive errors and all seeds without an independent-rectangle confidence interval.

The complete Bramsen source belongs to the original outer-0 test component, so only final outer-0 checkpoints trained and selected without that source are used. We verify the absence of every primary-panel strand 13-mer from their training and validation records, verify saved input transformations and reuse fixed checkpoint weights; no new fit, pair supervision, calibration or model selection uses this panel. Its discovery is retrospective, and source separation does not make it a prospectively collected external replication. The original APP inventory still has no four-state block.

\begin{longtable}{@{}p{.20\textwidth}p{.70\textwidth}@{}}
\toprule
Primary subscript & Frozen name\\\midrule
\endfirsthead
\toprule
Primary subscript & Frozen name\\\midrule
\endhead
OMe & 2-O-Methyl\\
F & 2-Fluoro\\
DNA & 2-Deoxy\\
AEM & 2-Aminoethoxymethyl\\
APM & 2-Aminopropoxymethyl\\
EA & 2-Aminoethyl\\
CE & 2-Cyanoethyl\\
GE & 2-Guanidinoethyl\\
HM & 4-C-Hydroxymethyl Deoxyribonucleic acid\\
LNA & Locked nucleic acid\\
ALN & Alfa-L-Locked nucleic acid\\
ADA & 2-N-Adamant-1-yl-Methylcarbonyl-2-Amino-Locked nucleic acid\\
PYR & 2-N-Pyren-1-yl Methyl-2-Amino-Locked nucleic acid\\
OX & Oxetane-Locked nucleic acid\\
CENA & 2,4-Carbocyclic-Ethylene-bridged nucleic acid-Locked nucleic acid\\
CLNA & 2,4-Carbocyclic-Locked nucleic acid-Locked nucleic acid\\
UNA & Unlocked nucleic acid\\
ANA & Altritol nucleic acid\\
AENA & 2-Deoxy-2-N,4-C-Ethylene-Locked nucleic acid\\
HNA & Hexitol nucleic acid\\
\bottomrule
\caption{Literal source aliases used in primary B3 admission. AP was excluded from the original B3 admission; its primary sugar alias and duplicate release mappings are adjudicated in Appendix~\ref{app:robustness}.}\label{tab:b3aliases}\\
\end{longtable}

\subsection{Original primary versus released comparison}
The identity reconciliation maps exact primary base/chemical states to the historical ENsiRNA release without using prediction quality. At the first rounding tolerance of 0.000500001 activity units, some mapped means disagree; allowing 0.005000001 explains additional small differences. A substantial set differs by more than 0.05. Table~\ref{tab:source_discrepancies} gives all counts and the reconciliation scope. These discrepancies are conditional on the documented chemical identity mapping and require further primary curation. They must not all be labeled biological errors: annotation mapping and release rounding can contribute. None of the frozen labels or historical checkpoints is overwritten. We therefore describe ENsiRNA activity tables as released-label benchmark results, with separately measured primary B3 evaluation. APP primary-table agreement and S7 provenance are different evidence checks and are not negated or certified by this one-source audit.
\begin{longtable}{@{}p{.72\textwidth}r@{}}
\toprule
Reconciliation diagnostic & Count/value\\\midrule
\endfirsthead
\toprule
Reconciliation diagnostic & Count/value\\\midrule
\endhead
identity\_matched\_pairs & 1604\\
within\_half\_tenth\_percentage\_point & 1209\\
within\_half\_percentage\_point & 1258\\
greater\_than\_five\_percentage\_points & 299\\
max\_absolute\_activity\_difference & 1.0852170324990895\\
\bottomrule
\caption{Primary/release audit conditional on the stated identity mapping. Rounding tolerances are activity fractions; no frozen label is changed.}\label{tab:source_discrepancies}\\
\end{longtable}

\subsection{Prior-work hypotheses and current use}
No published asymptotic theorem supplies a risk guarantee for the trained model. The foundational demonstration of siRNA-mediated silencing in mammalian cells \citep{elbashir2001}, together with sequence-design, strand-bias and off-target work \citep{reynolds2004,uitei2004,khvorova2003,schwarz2003,jackson2003,birmingham2006}, supplies biological context and separates efficacy from specificity; chemical potency and delivery results concern their stated chemistries, assays and exposures, which we do not assume for every record \citep{allerson2005,bramsen2009,nair2014}. The early neural siRNA design paper and its corrections are provenance precedents, not additional training cohorts \citep{huesken2005,huesken2005correction,huesken2006correction}, and CMsiRNAdb, OligoGym and the Davis release motivate explicit dataset identities rather than certifying every raw measurement \citep{cmsirna2026,oligogym2025,davis2025}.

MPNNs and relational graph convolutions motivate the typed aggregation notation, while GCN and GraphSAGE make different normalisation or sampling choices \citep{gilmer2017,rgcn2018,gcn2017,graphsage2017,battaglia2018}; the ordered position-aware readout is not invariant set pooling, and expressivity results do not imply finite-data generalisation here \citep{zaheer2017,xu2019}. The token CNN is a conventional local-sequence control \citep{kim2014}, and the ridge penalty, randomized trees and their ensemble precedent use the exact objectives and library settings stated above \citep{ridge1970,extratrees2006,rf2001}. Dropout, layer normalisation and decoupled weight decay have the axes, probability and updates stated in this appendix \citep{dropout2014,layernorm2016,adam2015,adamw2019}; deep ensembles motivate retaining member variation rather than treating ten predictors as ten biological cohorts \citep{ensemble2017}. Proper-scoring distinctions concern their defined predictive targets \citep{gneiting2007} and the finite-sample identities used here are proved directly; bootstrap resampling is applied to the declared observed groups without a population-coverage theorem \citep{bootstrap1979}; leakage and shift benchmarks explain why group protection and deployment scope are recorded separately \citep{leakage2023,wilds2021}; and software citations identify the actual numerical and plotting implementations, whose installed versions are recorded independently \citep{torch2019,sklearn2011,numpy2020,scipy2020,matplotlib2007}.
\section{Preserved results and negative evidence}\label{app:results}
Table~\ref{tab:oldactivity} reports every activity procedure of the frozen released-label (v3) campaign with row weighting. It contains no new fits; cohorts, weighting and target definitions are those of the campaign. The four audited models on APP and S7 repeat Table~\ref{tab:calibration}; the original R0 arms, both ridge baselines, the three constants and the whole grouped cohort are recorded only here. On APP and S7 both ridge baselines have higher MSE than every constant. Measured B2 pair results for every procedure are in Table~\ref{tab:pairranking}, APP ranking across the seventeen assay pools in Table~\ref{tab:ranking}, and focused source-excluded and primary-assay results in Appendix~\ref{app:focusedresults}.

\begin{table}[tbp]
\centering\small
\begin{tabular}{@{}llrrrrr@{}}\toprule
Cohort & Model & MSE & $R^2$ & $r$ & Pred.\ SD & Bias\\\midrule
APP & R1 GNN & 0.1625 & -0.001 & -0.018 & 0.0072 & -0.0081\\
 & R1 no-message & 0.1630 & -0.005 & 0.015 & 0.0085 & -0.0286\\
 & R0 GNN & 0.1625 & -0.001 & -0.026 & 0.0073 & -0.0049\\
 & R0 no-message & 0.1629 & -0.004 & -0.001 & 0.0085 & -0.0241\\
 & Chemistry tree & 0.1896 & -0.168 & -0.021 & 0.0103 & -0.1644\\
 & Token CNN & 0.1644 & -0.013 & 0.080 & 0.0036 & -0.0483\\
 & Guide ridge & 0.2949 & -0.818 & -0.014 & 0.1420 & 0.3330\\
 & Pairwise ridge & 0.2064 & -0.272 & 0.029 & 0.0619 & 0.2044\\
 & Training row mean & 0.1760 & -0.085 & -- & 0.0000 & 0.1172\\
 & Training group mean & 0.1768 & -0.090 & -- & 0.0000 & 0.1207\\
 & Oracle test mean & 0.1623 & 0.000 & -- & 0.0000 & 0.0000\\
\midrule
S7 & R1 GNN & 0.1093 & -0.215 & 0.112 & 0.0059 & 0.1403\\
 & R1 no-message & 0.1047 & -0.164 & 0.117 & 0.0066 & 0.1231\\
 & R0 GNN & 0.1102 & -0.225 & 0.116 & 0.0058 & 0.1437\\
 & R0 no-message & 0.1057 & -0.175 & 0.110 & 0.0067 & 0.1270\\
 & Chemistry tree & 0.0919 & -0.021 & 0.017 & 0.0089 & -0.0441\\
 & Token CNN & 0.1017 & -0.130 & 0.108 & 0.0034 & 0.1092\\
 & Guide ridge & 0.7815 & -7.686 & 0.123 & 0.1150 & 0.8287\\
 & Pairwise ridge & 0.9698 & -9.780 & 0.031 & 0.0619 & 0.9366\\
 & Training row mean & 0.1488 & -0.654 & -- & 0.0000 & 0.2426\\
 & Training group mean & 0.1505 & -0.673 & -- & 0.0000 & 0.2461\\
 & Oracle test mean & 0.0900 & 0.000 & -- & 0.0000 & 0.0000\\
\midrule
Grouped & R1 GNN & 0.0712 & 0.011 & 0.163 & 0.0574 & 0.0307\\
 & R1 no-message & 0.0712 & 0.010 & 0.158 & 0.0575 & 0.0293\\
 & R0 GNN & 0.0712 & 0.011 & 0.163 & 0.0574 & 0.0307\\
 & R0 no-message & 0.0712 & 0.010 & 0.158 & 0.0575 & 0.0293\\
 & Chemistry tree & 0.0656 & 0.087 & 0.312 & 0.0756 & 0.0250\\
 & Token CNN & 0.0717 & 0.003 & 0.143 & 0.0680 & 0.0205\\
 & Guide ridge & 0.0938 & -0.304 & 0.056 & 0.1172 & -0.1078\\
 & Pairwise ridge & 0.0737 & -0.024 & 0.058 & 0.0535 & -0.0234\\
 & Training row mean & 0.0724 & -0.006 & -0.073 & 0.0088 & 0.0031\\
 & Training group mean & 0.0769 & -0.069 & -0.095 & 0.0250 & 0.0553\\
 & Oracle test mean & 0.0719 & 0.000 & -- & 0.0000 & 0.0000\\
\bottomrule\end{tabular}
\caption{All original v3 ensemble activity procedures with row weighting on the full cohorts: APP 1,839 rows, S7 118 and the grouped benchmark 2,927, with label SD 0.4028, 0.3000 and 0.2682. R1 is the training-gated and R0 the original GNN. Training means are constants fitted on each training partition; the grouped cohort pools five outer test folds, so they vary there and have a defined correlation. The oracle test mean reads the evaluation labels and is a diagnostic, not a risk floor. A dash marks a correlation undefined for a constant prediction; bias is mean prediction minus mean label. Alternate weightings and further metrics: \protect\path{v3/tables/activity_metrics.csv}.}\label{tab:oldactivity}
\end{table}
\section{Historical reproduction, arithmetic and resources}\label{app:reproduction}
\subsection{Actual resource accounting and failures}
\begin{longtable}{@{}p{.62\textwidth}p{.28\textwidth}@{}}
\toprule
Quantity and scope & Actual value\\\midrule
\endfirsthead
Core fit objects & 2312\\
Executed optimizer updates & 1407967\\
Selected-checkpoint updates & 406367\\
Summed core-fit CPU seconds & 12523.769\\
GPU training-region elapsed seconds & 11689.879\\
Completed stage child CPU seconds & 13202.199\\
Maximum reported stage child RSS, KiB & 3306840\\
Exact published siRNA predictors fitted & 0\\
Published splicing-architecture fits in this work & 12 attempts, 6 reported\\
siRNA mask-restoration fits in this work & 56 (21,513 updates)\\
Administrative inspection/authoring & Additional unmetered nonzero cost\\
\bottomrule
\caption{Resource accounting with distinct scopes. The core-fit and process-cost entries summarize the historical campaign; the intervention rows separately report new work. These entries do not form one cumulative resource total. Fit-region, stage and GPU-region time measurements overlap and are not added. The MAVE-NN attempt count includes both invocations; the retained-fit update count and incomplete cumulative accounting are described in Appendix~\ref{app:support}.}\\
\end{longtable}

Selected-checkpoint updates describe the saved predictor. Executed updates describe optimization work, including the trajectory after that checkpoint. Development fits are part of executed work. GPU fitting-region elapsed time is a synchronized host measurement around training; it overlaps stage wall time and is not GPU kernel time, energy or a measure of independent work. CPU process time, child CPU time and aggregate command wall time likewise have different scopes. They must not be summed as if disjoint.

Direct filesystem inspection, authoring, citation browsing, preflight document builds, rendering, copying and hashing have additional nonzero administrative costs; not every interactive call has a process-level meter. We report that unmetered remainder explicitly. No historical allowance wrapper or ledger is used. Failed dependency/acquisition commands retain their measured costs. An unused published-model fit reserve is not an executed experiment. The frozen plan set a 48-hour campaign wall ceiling; the supervising session monitored elapsed time, but the runner does not implement an automatic wall-clock termination timer. Its per-fit epoch and stopping limits are implemented in code. The actual completed runtime, rather than the planning estimate, is reported above.

\section{Source integrity and representation robustness}\label{app:robustness}
This section specifies the follow-up independently of the historical campaign. The original released-label observations, predictions, selected checkpoints and negative results remain unchanged. All new evaluations are retrospective sensitivities. In particular, an observation previously used in model development does not become an independent external test because the present protocol assigns it a new role.

\subsection{Source records, chemical identities and adjudication}
Let $D_R=\{(r_i,x_i,y_i,s_i,g_i)\}$ denote the admitted released benchmark, with persistent record identifier $r_i$, annotated duplex/context $x_i$, released inhibition fraction $y_i$, source label $s_i$, and protected study/sequence component $g_i$. The Bramsen source is $s_B=19282453$. There are 1,972 raw workbook records with this source label and 1,924 admitted records. The latter form one sequence component and contribute $1924/2927=0.657328\ldots$ of the grouped benchmark. The denominator 2,927 excludes APP and Davis S7; it is not a count of independent biological experiments.

The primary identity is an ordered pair of the antisense and sense strand identifiers in Table~1 of \citet{bramsen2009}. For each strand, the comparison key retains the printed base sequence, its length, and the complete list of reported position-specific modification names. Antisense and sense are not interchanged. Printed oligonucleotide order is retained; missing per-row terminal direction labels do not authorize reversal of sequence or chemical positions. The key does not infer unreported linkage stereochemistry, terminal groups or assay metadata. A unique match of the reported fields is therefore narrower than independently verified complete chemical identity.

The primary XML is parsed in two ways. The historical implementation traverses elements while associating each chemical subscript with its preceding nucleotide. The follow-up replaces chemical tags by lexical tokens, removes markup and whitespace, and tokenizes the resulting base/chemical stream. These algorithms agree for all 91 historically parsed strands. The released raw workbook annotations are separately expanded from modification/position lists and checked against the frozen observation objects. Neither implementation reads model predictions to select identities. Outcome agreement is not used to choose among candidate mappings, strand orders or chemical aliases.

The primary Table~1 footnote defines AP as a $2'$-aminopropyl substitution. This supports the sugar alias, not the distinct base alias in the frozen vocabulary. Expanding that literal alias exposes 76 admitted released rows with multiple rows mapping to the same primary pair; these are quarantined rather than resolved by their activity values. The final admitted-source partition is 1,604 uniquely linked reported identities, 201 unmatched identities, 76 rows with multiple released mappings, and 43 rows lacking a unique primary measurement. These categories sum to 1,924. The 320 quarantined rows are not replaced by predictions or synthetic measurements.

Supplemental Table~1 is read by named columns, the eGFP value taken from the source cell in column D with the spreadsheet row retained; direct cell access rechecks the row, both identifiers and the numeric value. Both readers use the XLS backend, so their agreement is a cell-address check, not two independent binary-format implementations. The 91 available Table~1 marginal eGFP means agree with the corresponding supplement values to the precision printed in Table~1. This independently supports the chosen endpoint column. Viability columns and unrelated spreadsheet summaries are not substituted for eGFP activity.

The primary screen used HeLa cells stably expressing eGFP, 10\,nM siRNA and INTERFERin, with confocal readout 72\,h after transfection. eGFP was normalized to a mismatch-siRNA control; viability used a different control. The article describes triplicate assays repeated twice. These facts do not provide all per-run observations or shared-control covariances. The released records do not individually provide a complete assay provenance key. Matching source, reported dose and molecular annotations therefore does not establish the origin of every discrepancy.

The comparison convention is fixed by the documented endpoint definitions:
\begin{equation}
 y_i^{R}=\mathrm{PCT}_i/100,\qquad y_j^{P}=1-\mathrm{eGFP}_j.
\end{equation}
The primary eGFP quantities are fractions, not percentages. The transformation preserves negative inhibition values when relative expression exceeds one, and preserves the reported SD scale. There is no clipping, fitted offset, outcome-selected row shift or fitted permutation. Every one of the 299 historical residuals exceeding $0.05$ is checked at its mapped source cell; 32 additional records are selected in deterministic SHA256 order without reference to their model errors. These 299 discrepancies are conditional on the 1,604 matched identities. They are not an estimated error rate for all 1,924 source records or the complete benchmark, and do not identify whether a disagreement originated in an annotation, transcription, assay convention or measurement.

The reconciliation preserves workbook/sheet/row, persistent ID, raw sequences and annotations, primary identifiers and source cell, both numeric fields, the fixed transformation, match cardinality and unresolved fields; the manifest pins the exact primary XML and workbook bytes. This permits the primary-assay sensitivity below without certifying that the historical release has been repaired.

\subsection{Separately versioned primary-assay sensitivity}
Define $J$ by unique reported identity, unique primary measurement, and the documented nominal dose, before considering any model error. The versioned sensitivity target is
\begin{equation}
 D_P=\{(r_i,x_i,y_i^P,s_i,g_i):s_i=s_B,\ r_i\in J\}
 \ \cup\ \{(r_i,x_i,y_i^R,s_i,g_i):s_i\ne s_B\}.
\end{equation}
It contains 1,604 primary-linked Bramsen rows and 1,003 other-source rows, for 2,607 observations. Its manifest retains each original released value, primary value, source-cell provenance and original graph index. The other-source labels are unchanged and are not declared newly adjudicated. The same frozen graph vocabulary and original graph tensors are indexed; chemical encodings are not selected by efficacy. Protected group identifiers are retained because the operation removes rows without inventing new biological independence.

Released-label and primary-assay outcomes are separate targets. Restricting a released-trained predictor to $J$ and rescoring it against $y^P$ answers a different question from refitting all affected training and selection stages on $D_P$. The matched-subset table reports both operations separately. Methods within each comparison use identical eligible observations. The missing 320 source rows are disclosed alongside every primary-assay coverage statement.

\subsection{Exact calibration decomposition}
For fixed nonnegative weights $w_i$ with $\sum_iw_i=1$, write $\bar y=\sum_iw_iy_i$, $\bar p=\sum_iw_ip_i$, $b=\bar p-\bar y$, $v_y=\sum_iw_i(y_i-\bar y)^2$, $v_p=\sum_iw_i(p_i-\bar p)^2$ and $c=\sum_iw_i(y_i-\bar y)(p_i-\bar p)$. Since both centered sums vanish,
\begin{align}
 \sum_iw_i(p_i-y_i)^2
 &=\sum_iw_i\bigl[(p_i-\bar p)-(y_i-\bar y)+b\bigr]^2\\
 &=v_p+v_y-2c+b^2.
\end{align}
The cross term with $b$ is $2b\sum_iw_i[(p_i-\bar p)-(y_i-\bar y)]=0$. This proves the identity without a sampling or model assumption. Subtracting two models' identities cancels $v_y$. A large difference in squared mean bias therefore describes the observed error gap but does not establish why either model acquired that bias. No held-out intercept correction is applied.

With uniform weights, the reported coefficient of determination is $R^2=1-\sum_i(p_i-y_i)^2/\sum_i(y_i-\bar y)^2$. Equivalently it is $1-\MSE/v_y$, with both quantities normalized by $n$. Replacing $v_y$ by the $n-1$ sample-variance estimate would change the statistic. A pooled source-subset $R^2$ need not equal any average of within-source $R^2$ values because both the center and denominator change. When the denominator vanishes, $R^2$ is undefined rather than assigned a favorable value.

\subsection{Frozen input identities and deterministic cancellation}\label{app:inputcollision}
Let $E_T(x)$ be the actual preprocessing fitted on training partition $T$. It applies the saved node-feature mask, saved varying-context mask, imputation, ordered padding and the model's graph/context convention. The audit hashes named arrays, their shapes and canonical little-endian float32 bytes with signed zeros normalized, without approximate prediction-based matching. The audit signature of a state is the byte content of the tensors each model family receives: $X,A,\mathrm{mask},C$ for the GNN and for the no-message model, whose forward reads only receiver relation counts from $A$, so retaining the full $A$ is conservative; $X,\mathrm{mask},C$ for the token CNN, which reads no adjacency; and the selected feature columns for the tree. Every reported siRNA $H$ was built from this signature, and the splicing predictor's $H$ from its one-hot input (Appendix~\ref{app:mavenn}). On B3 the no-message convention does not matter: for each of the ten seeds its full-$A$ partition equals the partition from $X,\mathrm{mask},C$ alone, and receiver counts are a function of $A$, so they induce the same 90 classes; all four families give that partition. All ten saved preprocessors are checked, not inferred from one favorable seed.

For corrected graph routing, an additional algebraic representation groups equal directional backbone bases and gates unsupported residuals, while retaining the original total-degree denominator. This real-arithmetic quotient is reported separately from literal input identity, since floating-point reassociation is not an exact bitwise equivalence claim. Loss of a named chemistry feature does not imply the intervention disappears entirely, because base identity, modification position and the unmodified indicator can still distinguish its encoded state; conversely, distinct encoded states do not establish adequate expressivity or learned chemical effects.

\begin{proposition}[Finite input cancellation]
Fix preprocessing and a deterministic prediction function $f$. If the multisets $\{E_T(x_{11}),E_T(x_{00})\}$ and $\{E_T(x_{10}),E_T(x_{01})\}$ are equal, then the predicted four-term contrast is zero.
\end{proposition}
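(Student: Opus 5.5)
The plan is to make the prediction depend on the raw state only through its encoded input, and then let the multiset equality cancel the terms. First, since $f$ is deterministic and receives exactly the preprocessed input, each predicted endpoint is $\hat y_{ab}=f(E_T(x_{ab}))$ for $a,b\in\{0,1\}$. Here $E_T$ is the fixed, saved preprocessing, and evaluation is deterministic: dropout is disabled and the parameters are fixed. So two states with the same encoded input receive the same prediction. I will state this as the one modelling premise the argument uses, pointing to the audit-signature definition above, which collects every tensor the model family reads.

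Second, I will rewrite the predicted contrast $\widehat I_{AB}=\hat y_{11}-\hat y_{10}-\hat y_{01}+\hat y_{00}$ as the difference of two sums, $\bigl[f(E_T(x_{11}))+f(E_T(x_{00}))\bigr]-\bigl[f(E_T(x_{10}))+f(E_T(x_{01}))\bigr]$. Each bracket is the sum of $f$ over a two-element multiset of encoded inputs. A sum of function values over a multiset depends only on the multiset, not on how its elements are labelled. The hypothesis says the two multisets are equal, so the two brackets are equal and the contrast is exactly zero.

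The only step needing care is the multiset bookkeeping, and it is also where I expect the main subtlety. I will make it explicit by cases. Equality of two-element multisets means either $E_T(x_{11})=E_T(x_{10})$ and $E_T(x_{00})=E_T(x_{01})$, or $E_T(x_{11})=E_T(x_{01})$ and $E_T(x_{00})=E_T(x_{10})$. The case of repeated elements, for example all four encodings equal, is covered by either pairing. In each case the four terms cancel in pairs. I will also remark on the arithmetic: exactly equal input bytes give bitwise-equal float outputs, so the cancellation also holds in floating point, up to the sign of zero.

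Finally, I will connect this to Proposition~\ref{prop:attain}. In class notation, the contrast's row of $CH$ is $e_{c(11)}-e_{c(10)}-e_{c(01)}+e_{c(00)}$, where $c(\cdot)$ is the encoding class. The multiset condition is precisely the condition that this row is zero. So the statement is the single-rectangle, forced-cancellation special case $u=e_j$ of that proposition. I will note that the converse (a zero row forces multiset equality) follows because the positive and negative index multisets must then coincide; this converse is what justifies the paper's reading of ``no row of $CH$ is zero'' as ``no forced cancellation''.
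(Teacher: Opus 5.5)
Your proof is correct and is essentially the paper's argument: the paper matches the multiplicity of each distinct encoded value $u$ on the two sides and weights it by $f(u)$, which your explicit two-case split of a two-element multiset equality reproduces. Your closing converse (a zero row of $CH$ forces multiset equality) is also right, and it is not the converse the paper calls false, which concerns a single fixed $f$ such as a constant.

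The one remark to temper is the floating-point claim. Bitwise-equal endpoint predictions give an exact zero only if the contrast is evaluated as the difference of your two grouped sums. Left-to-right evaluation $((\hat y_{11}-\hat y_{10})-\hat y_{01})+\hat y_{00}$, or an arbitrary-order product $C\hat y$, can leave a rounding residue in the second pairing, because floating-point addition is not associative. For example, $\hat y_{11}=\hat y_{01}=1$ and $\hat y_{10}=\hat y_{00}=2^{-30}$ in float32 give $2^{-30}$ rather than zero.
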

\begin{proof}
For every distinct encoded value $u$, multiset equality gives equal multiplicities on the two sides. Multiplying each multiplicity by $f(u)$ and summing over the finitely many encoded values gives $f(E_T(x_{11}))+f(E_T(x_{00}))=f(E_T(x_{10}))+f(E_T(x_{01}))$. Rearranging proves the claim. The statement requires no fitted-parameter, smoothness or biological assumption. It applies to deterministic inference, with dropout disabled. Purely algebraic routing equivalence is qualified by floating-point rounding. The converse is false: a constant $f$ gives zero contrast even when all four inputs differ.
\end{proof}

\begin{proposition}[Observed collision error floor]
Partition the observed inputs into exact encoding classes $C$. For squared error, every deterministic function of the encoded input has weighted empirical risk at least $\sum_C\sum_{i\in C}w_i(y_i-\bar y_C)^2$, where $\bar y_C=\sum_{i\in C}w_iy_i/\sum_{i\in C}w_i$ for positive-mass classes.
\end{proposition}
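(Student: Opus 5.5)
The plan is to reduce the risk to a sum of independent one-dimensional problems, one per encoding class. First, a deterministic function of the encoded input takes a single value on each class. So any such predictor is described by one number $c_C$ per class $C$, and its weighted empirical risk is
\[
\sum_C\sum_{i\in C}w_i(y_i-c_C)^2 .
\]
The problem therefore separates across classes. Minimizing the risk over all such predictors is the same as minimizing each class's term separately over $c_C\in\R$. This is the same reduction used in Proposition~\ref{prop:attain}, where a decoder is an arbitrary vector $g\in\R^k$ indexed by classes.

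Second, apply the constant decomposition already proved: for weights $w_i$ restricted to one class, renormalized by the class mass $W_C=\sum_{i\in C}w_i>0$,
\[
\sum_{i\in C}w_i(y_i-c)^2=\sum_{i\in C}w_i(y_i-\bar y_C)^2+W_C(c-\bar y_C)^2 .
\]
The cross term vanishes because $\sum_{i\in C}w_i(y_i-\bar y_C)=0$ by the definition of $\bar y_C$. The second term is nonnegative, so each class term is at least its within-class weighted sum of squares. Summing over classes gives the stated floor. Equality holds for the class-mean decoder $c_C=\bar y_C$, so the bound is attained.

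Third, handle classes with zero total weight. They contribute zero to both the risk and the floor, whatever value the predictor takes on them, so excluding them from the definition of $\bar y_C$ costs nothing.

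I expect no real obstacle. The only point that needs care is the first step: "deterministic function of the encoded input" must mean the prediction is a function of $E_T(x)$ alone, with dropout disabled as in the cancellation proposition. Only then is $c_C$ well defined. I would state that assumption explicitly rather than leave it implicit.
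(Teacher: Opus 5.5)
Your proposal is correct and follows essentially the same argument as the paper. Both use one constant per class, the within-class expansion $p_C-y_i=(p_C-\bar y_C)+(\bar y_C-y_i)$ with vanishing cross term, nonnegativity of the mass-weighted squared deviation, summation over classes, attainment by the class-mean decoder, and zero contribution from zero-mass classes.
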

\begin{proof}
The function assigns one prediction $p_C$ to each class. Expanding $p_C-y_i=(p_C-\bar y_C)+(\bar y_C-y_i)$ gives
\[
 \sum_{i\in C}w_i(p_C-y_i)^2
 =\Bigl(\sum_{i\in C}w_i\Bigr)(p_C-\bar y_C)^2
  +\sum_{i\in C}w_i(y_i-\bar y_C)^2,
\]
because $\sum_{i\in C}w_i(\bar y_C-y_i)=0$. The first term is nonnegative. Summing over classes proves the lower bound. An unrestricted function assigning $p_C=\bar y_C$ attains it on these observed inputs; the particular neural architecture need not do so. Zero-mass classes contribute nothing. This is a finite-data identity, not a population minimax or generalization guarantee.
\end{proof}

\subsection{Shared-reference and leave-margin B3 sensitivities}\label{app:b3sensitivity}
The 140 distinct contrasts cross 14 antisense and 10 sense replacements. They use 165 measured states on one sequence background and share reference/marginal conditions. Writing all endpoint means as a vector $Y$ and the signed contrast matrix as $C$, the contrast covariance would be $C\operatorname{Cov}(Y)C^\top$. Endpoint SDs and the aggregate replication description do not supply this covariance. No independent-rectangle bootstrap or inferred mean-difference standard error is used.

For fixed model contrasts $p_j$ and measured contrasts $I_j$, a common reference perturbation $\delta$ sends every $I_j$ to $I_j+\delta$. With uniform weights, direct expansion gives
\begin{align}
 M(\delta)&=M(0)-2\delta\,\overline{(p-I)}+\delta^2,\\
 Z(\delta)&=Z(0)+2\delta\,\bar I+\delta^2,\\
 M(\delta)-Z(\delta)&=M(0)-Z(0)-2\delta\,\bar p,
\end{align}
where $M$ is model MSE and $Z$ is zero-contrast MSE. The last line follows by subtraction, since $\overline{(p-I)}+\bar I=\bar p$. The audit reports shifts of minus one, zero and plus one reported reference SD solely as descriptive sensitivity values; they are not confidence limits for the reference mean.

Leaving one antisense replacement removes its ten contrasts; leaving one sense replacement removes its fourteen contrasts. Predictions stay fixed and remaining errors are recomputed against the same zero control. These 24 omissions expose margin dependence but are neither new fitted folds nor independent biological replications. Observed assay nonadditivity, a population biological interaction, representability of an input change and the trained model's contrast are different objects.

The reported reference SD is 0.021415097 on the fractional response scale. Shifts retain 165 endpoints/140 contrasts; an antisense omission retains 154/130 and a sense omission 150/126. Predictions remain fixed. Previously reported MSEs and differences are reproduced to $2\times10^{-15}$; added spreads use the same identities. Table~\ref{tab:b3sensrisk} retains the GNN reversal after omitting JC5 and no-message reversals after omitting JC10 or JC5. Tree and CNN remain worse than zero throughout; no method reverses under reference shifts.

For every reduced panel, rebuild $C_J$ over retained endpoints, restrict the input-class indicator to $H_J$, and recompute an orthonormal basis for $\operatorname{col}(C_JH_J)$ using SVD with relative cutoff $10^{-10}$; rational elimination verifies the rank. The full-panel projector is never sliced and reused. The residual $(I-P_J)t_J$ has energy share 13.39--15.89\% under reference shifts, 8.00--15.83\% under antisense omissions and 13.69--16.65\% under sense omissions. These are finite-panel influence values, not sampling uncertainty or causal error allocation. Full reduced maps, bases and outcomes remain in the store.
\begin{table}[H]
\centering\small
\begin{tabular}{@{}lrrrrrr@{}}\toprule
Sensitivity & Method & $m/q$ & Model MSE & Zero MSE & $10^3\Delta$ & Flips\\\midrule
Omit AS & Tree & 154/130 & 0.059628--0.077251 & 0.056062--0.073257 & 2.456--4.031 & 0\\
Omit AS & GNN & 154/130 & 0.056195--0.073399 & 0.056062--0.073257 & 0.015--0.143 & 0\\
Omit AS & No-msg & 154/130 & 0.056160--0.073339 & 0.056062--0.073257 & -0.045--0.098 & 1\\
Omit AS & CNN & 154/130 & 0.056290--0.073475 & 0.056062--0.073257 & 0.007--0.229 & 0\\
Omit SS & Tree & 150/126 & 0.055818--0.079524 & 0.053743--0.075358 & 1.176--4.188 & 0\\
Omit SS & GNN & 150/126 & 0.053882--0.075507 & 0.053743--0.075358 & -0.009--0.151 & 1\\
Omit SS & No-msg & 150/126 & 0.053848--0.075443 & 0.053743--0.075358 & -0.032--0.104 & 1\\
Omit SS & CNN & 150/126 & 0.053966--0.075589 & 0.053743--0.075358 & 0.000--0.231 & 0\\
Ref. shift & Tree & 165/140 & 0.066229--0.078566 & 0.062803--0.074524 & 3.426--4.042 & 0\\
Ref. shift & GNN & 165/140 & 0.062923--0.074664 & 0.062803--0.074524 & 0.120--0.140 & 0\\
Ref. shift & No-msg & 165/140 & 0.062871--0.074608 & 0.062803--0.074524 & 0.069--0.084 & 0\\
Ref. shift & CNN & 165/140 & 0.062985--0.074749 & 0.062803--0.074524 & 0.183--0.225 & 0\\
\bottomrule\end{tabular}
\caption{Ranges over three reference shifts, fourteen antisense omissions or ten sense omissions. $m/q$: retained endpoints/contrasts. Paired differences $\Delta$ are scaled by $10^3$ and computed before taking ranges, not by subtracting range endpoints. Flips count reversals from the unperturbed ordering. These are dependent descriptive diagnostics, not confidence limits.}\label{tab:b3sensrisk}
\end{table}
\begin{table}[H]
\centering\small
\begin{tabular}{@{}lrrrrr@{}}\toprule
Sensitivity & Method & Predicted SD & Recorded SD & Largest influence & Gap change\\\midrule
Omit AS & Tree & 0.008806--0.018376 & 0.204264--0.229500 & JC10 & -0.001278\\
Omit AS & GNN & 0.000252--0.001648 & 0.204264--0.229500 & JC10 & -0.000115\\
Omit AS & No-msg & 0.000299--0.001714 & 0.204264--0.229500 & JC10 & -0.000122\\
Omit AS & CNN & 0.001113--0.002984 & 0.204264--0.229500 & JC10 & -0.000197\\
Omit SS & Tree & 0.007219--0.018653 & 0.203193--0.230994 & JC5 & -0.002558\\
Omit SS & GNN & 0.000327--0.001674 & 0.203193--0.230994 & JC5 & -0.000139\\
Omit SS & No-msg & 0.000348--0.001740 & 0.203193--0.230994 & JC5 & -0.000108\\
Omit SS & CNN & 0.000642--0.003028 & 0.203193--0.230994 & JC5 & -0.000204\\
Ref. shift & Tree & 0.017767 & 0.222442 & 1 & -0.000308\\
Ref. shift & GNN & 0.001589 & 0.222442 & 1 & -0.000010\\
Ref. shift & No-msg & 0.001652 & 0.222442 & -1 & 0.000008\\
Ref. shift & CNN & 0.002879 & 0.222442 & 1 & -0.000021\\
\bottomrule\end{tabular}
\caption{Spread ranges and the omitted margin (or reference multiplier) with largest absolute change of paired error difference. A positive reference multiplier adds one reported SD to the shared activity reference.}\label{tab:b3sensspread}
\end{table}
\subsection{Ranking reconciliation and uncertainty scope}
Historical and current predictions are joined by exact persistent observation and assay-pool identifiers. All 1,839 APP records and all 17 pool memberships agree. Larger measured inhibition is the preferred direction. Every campaign is rescored with the same outcome average-rank percentile and fractional selection mass among exact prediction ties at the top-five boundary. A saved ensemble row is an aggregate, not an extra seed. Ensemble predictions are reconstructed from actual member rows and the historical reported scores are retained alongside the common-definition scores.

The original, frozen-diagnostic, factorial and all-ENsiRNA deployment protocols remain separate. Training populations, support/context handling, selection effort, optimization budget and ensemble sizes changed jointly. The ranking reversal limits protocol-stability claims; it does not make a well-defined descriptive ranking score meaningless or identify one cause of the reversal. Pools share one related patent family and overlap in sequence components. Their number does not justify an independent-pool significance test.

Component resampling conditions on fixed fitted predictions and the observed cohorts. The independent review calculation uses RNG seed 914221; campaign comparisons retain the previously defined seed and 10,000 draws. Differences between valid resampling RNG realizations are distinguished from errors in the point estimate. Ten-seed variability is reported separately. An interval containing zero denotes an unresolved comparison, not proof of equivalence; no post-hoc equivalence margin is introduced.

\section{Complete focused numerical summaries and evidence index}\label{app:focusedresults}
All numbers here derive from completed v4 fits or explicitly reused fixed predictions; no unfavourable seed or failed attempt is removed. The focused v4 cohort results appear in main Table~\ref{tab:activity}, with matched training-constant comparisons in Appendix~\ref{app:constantchecks}. Historical v3 results are retained separately in Appendix~\ref{app:calibrationresults}; the metric identities are given in Appendix~\ref{app:metrics}. The complete per-cohort, per-weighting and per-seed rows are columns of \texttt{activity\_metrics} and \texttt{variance\_decomposition} in the canonical store rather than reprinted pages.
\subsection{Evidence index and complete row-level records}
The separately indexed scientific archive uses paths relative to its root. Every listed table retains its full precision, column schema and SHA256 in the archive manifest. Figures are generated from the same named records.
\begin{longtable}{p{.45\textwidth}p{.45\textwidth}}\toprule Archive path & Preserved evidence\\\midrule
\protect\path{v4/adjudication/row_reconciliation.csv} & All 1,972 raw source rows (1,924 admitted): cells, chemistry, cardinality, original/primary values, categories\\
\protect\path{v4/dataset/primary_reanchored_observations.jsonl} & 2,607 versioned observations, original values and graph indices\\
\protect\path{v4/review_checks/source_scores.csv} & All-source, Bramsen, other-source, every individual source and constant; three weightings\\
\protect\path{v4/evaluation/per_source_metrics.csv} & Every source after focused refitting; exact matched coverage\\
\protect\path{v4/evaluation/ensemble_predictions.csv} & Exact member means and cohort identifiers\\
\protect\path{v4/fits/} & Best/last checkpoints, optimizer states, histories, selected transforms and memberships\\
\protect\path{v4/visibility/B3_rectangle_visibility.csv} & All method/rectangle identities and forced-cancellation flags\\
\protect\path{v3/tables/all_fit_summary.csv} & All original 2,312 fits, preserved without new fitting\\
\protect\path{v3/b3_primary/} & Original primary panel, rectangles, fixed predictions and source-exclusion checks\\
\bottomrule\caption{Indexed detailed evidence; no repetitive per-fit dump is printed in the manuscript.}\\\end{longtable}

\section{Structural attribution audit: complete specification}

\subsection{Exact contrast certificate and its limits}\label{app:structuralcertificate}
Let $s_1,\ldots,s_m$ be the distinct raw endpoint states used by a finite collection of contrasts. An encoding instance includes its fitted training-only preprocessing. Write $e(s_i)$ for the complete inputs consumed by its deterministic decoder, including all ordered slots and ancillary branches. Partition these states into $k$ classes of exactly equal encodings and define $H\in\{0,1\}^{m\times k}$ by $H_{ij}=1$ precisely when state $i$ belongs to class $j$. Let $C\in\mathbb R^{q\times m}$ contain the signed contrast coefficients. Repeated endpoints contribute their coefficients with multiplicity. An unrestricted deterministic decoder on these classes is an arbitrary vector $g\in\mathbb R^k$, and its contrast vector is $CHg$.
\begin{proposition}
$CH=0$ if and only if $CHg=0$ for every unrestricted deterministic decoder $g$ on the observed encoding classes.
\end{proposition}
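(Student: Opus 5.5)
The statement is elementary linear algebra: a matrix is zero exactly when it sends every vector to zero. I would prove the two directions separately, taking as given the setup above: $H\in\{0,1\}^{m\times k}$ is the class indicator, $C\in\R^{q\times m}$ is the signed contrast matrix, and an unrestricted deterministic decoder is an arbitrary $g\in\R^k$ with contrast vector $CHg$.

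For the forward direction, if $CH=0$ then $CHg=0\cdot g=0$ for every $g\in\R^k$. There is nothing more to check.

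For the converse, I would test with the standard basis vectors $e_1,\dots,e_k$ of $\R^k$. Each $e_j$ is an admissible decoder: it assigns value one to class $j$ and zero to every other class, and decoders are unrestricted. By hypothesis $CHe_j=0$. But $CHe_j$ is the $j$th column of $CH$, so every column vanishes and $CH=0$. If one wants to see this entrywise, $(CHe_j)_p=\sum_{i:\,H_{ij}=1}C_{pi}$, which is the total signed coefficient that contrast $p$ places on the states in class $j$. The hypothesis forces each of these sums to be zero.

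The only point needing care is that the decoders really range over all of $\R^k$, so the indicator decoders $e_j$ are admissible. This is given by the definition in the statement: unrestricted means an arbitrary vector $g\in\R^k$. The same definition makes the result a special case of Proposition~\ref{prop:attain}. There the attainable set is $\operatorname{col}(CH)$, and a linear map is zero exactly when its column space is $\{0\}$, that is, when $\operatorname{rank}(CH)=0$. I expect no real obstacle. I would only remark that the statement concerns unrestricted class decoders and says nothing about a fitted family, in keeping with the paper's caveats.
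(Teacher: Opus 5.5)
Your proof is correct and takes the same approach as the paper: the forward direction is immediate, and the converse tests each coordinate decoder $e_j$, whose contrast vector $CHe_j$ is the $j$th column of $CH$. Your entrywise reading of $(CHe_j)_p$ corresponds to the paper's remark that any nonzero entry $(CH)_{ij}$ is witnessed by the decoder that is one on class $j$ and zero elsewhere.
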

\begin{proof}
If $CH=0$, multiplication by any $g$ gives $CHg=0$. Conversely, assume $CHg=0$ for every $g\in\mathbb R^k$. For each $j=1,\ldots,k$, choose the coordinate vector $g=e_j$. Then $CH e_j$ is the $j$th column of $CH$ and is zero. Since every column is zero, $CH=0$. Equivalently, if any entry $(CH)_{ij}$ is nonzero, the decoder taking value one on class $j$ and zero on the other classes witnesses a nonzero $i$th contrast. This proves both implications without continuity, probability or learning assumptions.
\end{proof}
For coefficients $(1,-1,-1,1)$, the condition is exactly equality of the positive and negative encoding multisets. It concerns this contrast, not every explanation functional. Restricted decoders may have additional limitations when $CH\ne0$: the basis-vector decoder used in the converse need not belong to a fixed neural family. The identity is elementary and is not an algorithmic novelty claim. For an ensemble, a sufficient certificate is that each constituent's own complete-input encoding passes the test. A hidden-state coincidence found after fitting is a checkpoint-specific observation, not a pre-fit certificate.

The implementation compares named tensor arrays with their shapes, dtype and exact C-order bytes. It never rounds, reorders slots, substitutes graph isomorphism, or turns an invalid record into a zero input. SHA256 fingerprints locate identities; actual byte payloads decide equality. The tensors compared are those of the single audit signature defined in Appendix~\ref{app:inputcollision}. Original node and edge conventions and their unresolved chemical assumptions remain as specified in the data appendix. The additional real-arithmetic routing quotient remains distinct from literal equality.

For a set $F$ of verified forced-zero contrasts and nonnegative weights $w_q$ with positive total, every decoder covered by this certificate incurs at least $\sum_{q\in F}w_q\widetilde I_q^2/\sum_qw_q$ recorded-label squared error. Each flagged prediction is zero, so its squared residual equals $\widetilde I_q^2$; all other residual squares are nonnegative, proving the inequality. This is a finite recorded-label bound with the full denominator. It neither removes measurement noise nor supplies a biological population-risk lower bound. Here $F$ is empty, so its contribution is zero.

\subsection{Why four coincident corners do not settle other explainers}\label{app:counterexample}
The reference scientific model target is $f(e(s_{11}))-f(e(s_{10}))-f(e(s_{01}))+f(e(s_{00}))$. Its players are the declared antisense and sense replacements, its baseline is the measured $00$ condition, and masking means selecting the corresponding full, re-encoded measured state. Other annotations and assay context are fixed. No continuous interpolation is needed.

For the analytic counterexample let $e(a,b,z)=ab(1-z)$ and $g_\theta(t)=\theta t$. At $z=1$, all four $a,b$ encodings equal zero, and the four-term contrast equals zero for every $\theta$. Define a three-player coalition game with baseline $(0,0,0)$, target $(1,1,1)$ and masking that sets precisely the players in a coalition to one. Re-encoding gives $v(S)=\theta\,1_{\{a,b\}\subseteq S}(1-1_{z\in S})$. For a pair, the Shapley interaction index weights background $S\subseteq\{z\}$ by $|S|!(3-|S|-2)!/(3-1)!$ \citep{grabisch1999}. Both weights are $1/2$. The second difference at $S=\varnothing$ is $\theta$, while that at $S=\{z\}$ is zero. Their weighted sum is $\theta/2$. Thus equality at four fixed-background corners does not fix this interaction index. All eight binary inputs are admissible for this explicitly defined analytic game; they are not synthetic biological outcomes. The implemented checks use $\theta=-2,0,3$ and retain the null case.

For continuous-path explanation, equality at endpoints alone cannot determine intermediate function derivatives or values. A term vanishing at the selected corners can be nonzero between them or in other backgrounds. No integrated-gradient or Hessian biological interpretation is inferred from the four-corner certificate. Our numerical gradient diagnostic instead evaluates $\nabla_X\hat y$ at the supplied encoded query, holding $A$, mask and context fixed \citep{saliency2013}. It is a signed local Euclidean derivative, with no baseline, masking or integration path. It is not an attribution to a feasible molecular edit, and its directions need not preserve one-hotness or joint chemical validity.

\subsection{Training-independent blocks: proof before perturbation}\label{app:trainingindependence}
Fix a finite training collection, its preprocessing, and all dropout masks. For R0 suppose $A_{rij}=0$ for every training record and all nodes $i,j$ for relation $r$. Consider changing only independently parameterized matrices $W_r^\ell$ for these absent relations. Embedding states are unchanged because they do not depend on these matrices. If the states entering layer $\ell$ are unchanged, every absent-relation term $A_{rij}h_j^\ell W_r^\ell$ is zero for every value of $W_r^\ell$. Terms on the other relations are unchanged. The total degree depends only on $A$, and the self transform, bias, residual, fixed dropout mask, LayerNorm and padding mask therefore return unchanged next states. Induction proves identical states through every layer and identical readout predictions on every training example for every value of the selected matrices. This holds for every dropout realization, batch and parameter value of the other blocks, not merely at one zero-gradient checkpoint.

Consequently any data loss that is a function only of these training predictions and fixed labels is independent of the selected blocks, including sums of endpoint losses and pair differences when both endpoints belong to the same covered training collection. Its parameter derivative is identically zero wherever defined. Our diagnostic checkpoints use activity training and do not introduce pair endpoints. The proof does not infer global prediction invariance: a held-out graph with that relation can use the selected block. Nor does it assert invariance of a selection rule that examines additional validation inputs; training-data independence and every possible source of training-procedure information are different statements.

For R1 the effective forward/backward backbone transforms use the respective shared bases $W_2,W_5$ plus supported residuals. Only residuals with a fixed false gate are independently absent; shared bases cannot be included merely because their own relation count is zero. False-gated residuals cancel from every forward, including held-out graphs, so the same perturbation has a stronger, architecture-imposed invariance. The verified original partition has four absent relations $0,1,3,4$, two layers and $32\times32$ matrices: $4\cdot2\cdot32^2=8192$ scalars. The corrected models happen to have 8192 false-gated residual scalars in the inspected partitions; these are not 8192 independently unconstrained effective backbone routes.

A zero data derivative does not mean an unchanged parameter. With zero moments and zero data gradient, AdamW still applies the decoupled factor $1-\eta_t\lambda$ to a parameter represented in the optimizer; regularization can also add an objective derivative. Weight changes do not prove label learning. Conversely an input gradient is not a parameter-loss gradient. The present architecture has no pretrained message matrices or unrecorded shared parameter aliases in these blocks. Structural independence concerns the data term; optimizer/regularization choices remain disclosed.

\subsection{Bounded diagnostic protocol and arithmetic}
The protocol used saved v3 deployment checkpoints for R0 and R1 GNN, seeds 1103, 2207 and 3301; all 2518 training inputs, all 1839 APP inputs and all 165 primary-panel inputs; first sixteen lexicographic APP identifiers for raw input derivatives. For each eligible layer/relation block draw fixed Rademacher signs with RNG seed 20260914 and multiply by that block's original root mean square parameter magnitude. Apply scales $0,-0.25,+0.25$ to this fixed direction. Thus the perturbation Frobenius norm in each block is at most one quarter of its original Frobenius norm. Neither labels, diagnostic outcomes nor rank changes select a direction, scale or query. No optimization occurs.

Predictions use deterministic CPU float32 inference in batches of 128, two Torch threads, disabled dropout, original scale restoration, and no clipping or recalibration. Every training prediction must be bitwise equal to its unperturbed value. All endpoint predictions are reused across rectangles. APP ranks use average ranks among all 1839 rows; these are global diagnostic ranks, not the seventeen assay-pool efficacy scores. Raw-gradient changes cover sixteen queries per checkpoint, not all APP records. B3 diagnostic deployment checkpoints include the source in historical training; they are separate from the source-held-out checkpoints used for the B3 measured performance table. Their zero changes are not a new external-evaluation result. Models are perturbed only in memory, restored exactly, and original checkpoint SHA256 hashes are checked. No modified checkpoint is saved. No training optimizer is constructed or advanced.

\subsection{Experimental support versus the one-hot simplex}
For a four-letter one-hot input $x$, $\sum_c x_c=1$. The normal vector is proportional to $(1,1,1,1)$, so the gradient projection is $G_c-\frac14\sum_dG_d$ \citep{koo2023}. A fixed-G experimental position occupies a single vertex within this simplex; a Y position permits only C and U. Their missing directions include tangent directions of the full simplex, so subtracting its normal component does not resolve experimental-support absence. For example $f(x)=\theta x_A$ at a fixed-G position is zero on all observed inputs, but its projected gradient is $\theta(3/4,-1/4,-1/4,-1/4)$ in A,C,G,U order and remains unconstrained by those observations.

The actual MAVE-NN input release has 30483 RNA sequences of length nine, with G fixed at position four and C/U at position five. There are three absent nucleotide columns at position 4 and two at position 5, totaling five. Using a DNA T channel for this RNA-U release would incorrectly count thirteen absent columns; the audit corrects that adapter error. An active constant G channel is not absent: for preactivation $u=w_Gx_G+b$, $\partial L/\partial w_G=(\partial L/\partial u)x_G$ can be nonzero when $x_G=1$. Its effect can be confounded with the intercept under this design. In an additive position-specific map five absent columns correspond to five $\theta_{lc}$ coefficients; with a dense width-$h$ first layer they can affect $5h$ weights; a shared convolution requires a separate analysis across all receptive-field positions.

MAVE-NN explicitly models genotype-to-phenotype and measurement maps \citep{mavenn2022}. Its BRCA2 exon-17 splice-site assay measures exon inclusion, not siRNA efficacy. The pinned release and layer definitions establish this support calculation. No supplied BRCA2 workshop checkpoint, implementation identifier or claimed attribution percentages could be identified. The categorical input-gradient workshop paper of \citet{majdandzic2021workshop} uses synthetic motif data, so it is not silently substituted as that missing experiment. No workshop percentages or additional fitted-model result are invented.

\subsection{Exact rank of the encoding-class contrast map}\label{app:contrastrank}
We continue the notation above: $C\in\R^{q\times m}$ holds the signed contrast coefficients over the $m$ distinct raw endpoint states of a finite recorded panel, and $H\in\{0,1\}^{m\times k}$ assigns each state to its complete-input equivalence class for one encoding instance. An unrestricted deterministic decoder on those classes is a vector $g\in\R^k$ and realises the contrast vector $CHg$.
\begin{proposition}\label{prop:rank}
The set of contrast vectors attainable by unrestricted deterministic decoders on these classes is exactly the column space of $CH$. A fixed linear functional $u^{\top}$ of the $q$ contrasts is identically zero for every such decoder if and only if $u^{\top}CH=0$, so the space of forced-zero functionals has dimension $q-\operatorname{rank}(CH)$. If in addition $C\1_m=0$, then $\operatorname{rank}(CH)\le k-1$.
\end{proposition}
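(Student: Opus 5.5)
The plan is to treat the three claims in turn as elementary linear algebra about the map $g\mapsto CHg$ from $\R^k$ to $\R^q$. I will reuse the column-by-column argument of the preceding certificate proposition, which proved that $CH=0$ exactly when every decoder gives zero.

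\emph{Attainable set.} By definition an unrestricted decoder is any $g\in\R^k$, and its contrast vector is $CHg$. The attainable set is therefore the image $\{CHg:g\in\R^k\}$, which is by definition $\operatorname{col}(CH)$. For the reverse inclusion I will note that every column $CHe_j$ is attained by the indicator decoder $e_j$ of class $j$. Any linear combination $\sum_j g_j CHe_j$ is attained by $g=\sum_j g_je_j$, since no restriction is placed on $g$.

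\emph{Forced-zero functionals.} Fix $u\in\R^q$. If $u^\top CH=0$, then $u^\top CHg=0$ for all $g$. Conversely, if $u^\top CHg=0$ for all $g$, substitute $g=e_j$ for $j=1,\dots,k$; this shows every entry of the row vector $u^\top CH$ vanishes. This is the same basis-vector witness used in the earlier certificate proposition, now applied to the $1\times k$ matrix $u^\top CH$. The forced-zero functionals are thus exactly the left null space $\{u:(CH)^\top u=0\}$. Rank--nullity for $(CH)^\top\in\R^{k\times q}$ gives its dimension as $q-\operatorname{rank}((CH)^\top)=q-\operatorname{rank}(CH)$.

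\emph{Centred bound.} Every state lies in exactly one class, so $H\1_k=\1_m$, and hence $CH\1_k=C\1_m=0$. The nonzero vector $\1_k$ therefore lies in the kernel of $CH$, whose dimension is at least $1$. Rank--nullity on $CH:\R^k\to\R^q$ then gives $\operatorname{rank}(CH)\le k-1$. The case $k=0$ cannot occur when $m\ge1$, because every state must belong to some class.

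None of the steps is a real obstacle. The only point that needs care is the identity $H\1_k=\1_m$. It relies on the classes being a genuine partition: each row of $H$ contains exactly one $1$. I will state this explicitly from the definition of $H$ as the class-indicator matrix of an equivalence partition, and I will point out that repeated endpoints enter $C$, not $H$, so they do not affect this identity.
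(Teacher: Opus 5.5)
Your proposal is correct and follows essentially the same route as the paper: attainability is the definition of the column space, the basis-vector witnesses $g=e_j$ identify the forced-zero functionals with the left null space, rank--nullity gives its dimension, and $H\1_k=\1_m$ gives $CH\1_k=C\1_m=0$ and hence $\operatorname{rank}(CH)\le k-1$. The paper phrases this last step as the columns of $CH$ satisfying one nontrivial linear relation, rather than as $\1_k$ lying in the kernel, but the argument is the same.
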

\begin{proof}
Attainability is the definition of a column space. If $u^{\top}CH=0$ then $u^{\top}(CHg)=0$ for every $g$; conversely if $u^{\top}CHg=0$ for every $g$, taking $g=e_j$ for each $j$ shows every entry of $u^{\top}CH$ vanishes. The forced-zero functionals are therefore the left null space of $CH$, of dimension $q-\operatorname{rank}(CH)$ by rank--nullity. Finally $H\1_k=\1_m$, so $CH\1_k=C\1_m=0$: the columns of $CH$ satisfy one linear relation and at most $k-1$ of them are independent.
\end{proof}
Proposition~\ref{prop:rank} bounds a dimension. It does not say that any particular row cancels: row $i$ is forced to zero exactly when the $i$th row of $CH$ is the zero vector, which is a strictly stronger requirement than a rank deficit. Both statements were computed, not assumed.

For the primary panel, $q=140$, $m=165$ and every audited encoding instance has $k=90$. The forty instances --- four methods by ten seeds --- induce the same partition of the 165 states, verified by exact label comparison, so the concatenated constituent blocks of an ensemble are identical and the combined column space equals the single constituent space; every ensemble weight is $1/10$ and none is zero. No rank bound is transferred between differing partitions. With integer $C$ and $H$ and rational forward elimination, $\operatorname{rank}(C)=140$ and $\operatorname{rank}(CH)=72$, against the dimension bound $89$. Hence the audited encodings impose $140-72=68$ independent homogeneous linear constraints on the joint \emph{predicted} contrast vector: every $u$ in the $68$-dimensional left null space of $CH$ satisfies $u^{\top}CHg=0$ for every decoder $g$. The recorded contrast vector need not satisfy those constraints, and they are not $68$ vanishing individual interactions nor $68$ independent biological observations. Explicit exact left-null witnesses were verified to annihilate $CH$; the machine-readable witnesses are rows of the canonical store rather than printed coefficient dumps. No row of $CH$ is zero, which is consistent with the complete-input test finding no forced cancellation in any of the 140 rectangles.

Let $P$ be the orthogonal projector onto the column space of $CH$ and let $\widetilde I\in\R^{q}$ be the recorded contrast vector. For equal weights $1/q$ and any decoder $g$,
\[
 \tfrac1q\bigl\|CHg-\widetilde I\bigr\|^2\ \ge\ \tfrac1q\bigl\|(\mathrm{Id}-P)\widetilde I\bigr\|^2 ,
\]
because $CHg$ ranges over the column space and orthogonal projection minimises the distance to it. Moreover, for any fitted contrast vector $p\in S=\operatorname{col}(CH)$ the decomposition $\widetilde I-p=(\mathrm{Id}-P)\widetilde I+(P\widetilde I-p)$ has orthogonal summands, so
\[
 \tfrac1q\bigl\|\widetilde I-p\bigr\|^2=\tfrac1q\bigl\|(\mathrm{Id}-P)\widetilde I\bigr\|^2+\tfrac1q\bigl\|P\widetilde I-p\bigr\|^2 .
\]
For a fitted masked contrast vector $p\in S=\operatorname{col}(CH)$,
write $F=q^{-1}\|(\mathrm{Id}-P)\tilde I\|^2$.
For a restored predictor $p'$ that need not lie in $S$, the improvement is
\[
 \frac{\|\tilde I-p\|^2}{q}
 -\frac{\|\tilde I-p'\|^2}{q}
 =F+\frac{\|P\tilde I-p\|^2}{q}
 -\frac{\|\tilde I-p'\|^2}{q}.
\]
Consequently, $F$ does not upper-bound improvement after refitting.
Only the unrestricted oracle minimum decreases by exactly $F$ when the
restored contrast space contains the recorded target vector.
Numerically $\tfrac1q\|\widetilde I\|^2=0.06820465$ and $\tfrac1q\|(\mathrm{Id}-P)\widetilde I\|^2=0.00997952$, so the encoding constraints exclude 14.63\% of the recorded contrast energy. For the GNN ensemble the split reads $0.06833507\approx0.00997952+0.05835556$: the same residual is 14.6038\% of the fitted squared error, while 85.3962\% of it lies \emph{within} the permitted contrast subspace. These two percentages have different denominators --- recorded contrast energy and fitted squared error --- and must not be conflated. Most fitted-model squared error therefore lies inside the permitted subspace, and the mechanism producing the nearly constant interaction predictions remains unresolved. The residual is a finite-panel representational restriction for these recorded measurements and weights: not a noise-corrected biological error, not a population risk bound, and not a causal allocation of attenuation to representation, architecture, optimisation or data support. The unrestricted decoder used here need not be realizable by the fitted GNN, tree or CNN family. As an implementation check, each fitted ensemble contrast vector lies in $S$ to within $10^{-16}$, as Proposition~\ref{prop:rank} requires. This applies the general contrast-space characterization to one recorded panel on one sequence background. The resulting floor is specific to the recorded measurements, encoding and weights. It provides no population-risk guarantee, and no shared-control interval is constructed without a covariance model.

\subsection{Source decomposition identities and what they do not imply}\label{app:decomposition}
Let $w_i>0$ be normalised evaluation weights, $y_i$ recorded targets and $p_i$ fixed predictions. For a partition into groups $s$ put $a_s=\sum_{i\in s}w_i$, $\mu=\sum_iw_iy_i$, $\nu=\sum_iw_ip_i$, and let $\mu_s,\nu_s,\Var_s(y),\Var_s(p),\operatorname{Cov}_s(y,p)$ be the group means, variances and covariance under the renormalised weights $w_i/a_s$.
\begin{proposition}\label{prop:decomp}
With $V=\sum_iw_i(y_i-\mu)^2$ and $\MSE=\sum_iw_i(p_i-y_i)^2$,
\[
 V=\underbrace{\sum_sa_s\Var_s(y)}_{V_{\mathrm{within}}}+\underbrace{\sum_sa_s(\mu_s-\mu)^2}_{V_{\mathrm{between}}},
\]
\[
 \MSE=\sum_sa_s\bigl[\Var_s(y)+\Var_s(p)-2\operatorname{Cov}_s(y,p)+(\nu_s-\mu_s)^2\bigr],
\]
and $\operatorname{Cov}(y,p)=\sum_sa_s\operatorname{Cov}_s(y,p)+\sum_sa_s(\mu_s-\mu)(\nu_s-\nu)$.
\end{proposition}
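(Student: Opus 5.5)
The plan is to prove all three identities by the same device already used for the calibration decomposition and the collision floor. In each, we add and subtract a group mean, then check that a weighted cross term vanishes because a centered sum inside each group is zero. Throughout, we use $\sum_s a_s=1$, $\sum_{i\in s}w_i=a_s$, and, for each group, $\sum_{i\in s}w_i(y_i-\mu_s)=0$ and $\sum_{i\in s}w_i(p_i-\nu_s)=0$. These hold by the definition of $\mu_s,\nu_s$ as renormalised group means.

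For the variance identity, write $y_i-\mu=(y_i-\mu_s)+(\mu_s-\mu)$ for $i\in s$ and expand the square within each group. The diagonal pieces give $a_s\Var_s(y)$ and $a_s(\mu_s-\mu)^2$. The cross term is $2(\mu_s-\mu)\sum_{i\in s}w_i(y_i-\mu_s)=0$. Summing over $s$ gives $V=V_{\mathrm{within}}+V_{\mathrm{between}}$.

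For the MSE identity, restrict to one group $s$ and apply the already proved calibration decomposition of Appendix~\ref{app:robustness} with the renormalised weights $w_i/a_s$, which are nonnegative and sum to one. This gives
\[
\sum_{i\in s}\tfrac{w_i}{a_s}(p_i-y_i)^2=\Var_s(p)+\Var_s(y)-2\operatorname{Cov}_s(y,p)+(\nu_s-\mu_s)^2 .
\]
Multiplying by $a_s$ and summing over $s$ reassembles $\MSE=\sum_iw_i(p_i-y_i)^2$, because the groups partition the observations.

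The covariance identity is the bilinear analogue of the first. Write $(y_i-\mu)(p_i-\nu)$ as the product $[(y_i-\mu_s)+(\mu_s-\mu)]\,[(p_i-\nu_s)+(\nu_s-\nu)]$ and expand into four terms. The two mixed terms each carry a factor $\sum_{i\in s}w_i(y_i-\mu_s)$ or $\sum_{i\in s}w_i(p_i-\nu_s)$, so both vanish. The remaining terms are $a_s\operatorname{Cov}_s(y,p)$ and $a_s(\mu_s-\mu)(\nu_s-\nu)$.

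There is no real obstacle. The only care needed is bookkeeping of the weights: the group variances use population denominators under $w_i/a_s$, so that multiplying by $a_s$ returns raw weighted sums. I would also note that groups with $a_s=0$ are excluded, which is automatic here since all $w_i>0$. The identities are purely algebraic, so I would close with the remark that they describe how the error splits but carry no causal or between-source explanatory content.
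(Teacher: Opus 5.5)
Your proposal is correct and follows the paper's proof essentially step for step: you add and subtract group means so the within-group centred sums kill the cross terms in the variance and covariance identities, and you obtain the MSE identity from a per-group bias--variance split summed with weights $a_s$. The only cosmetic difference is that you get the per-group split by invoking the calibration decomposition under the renormalised weights $w_i/a_s$, whereas the paper writes it as $\Var_s(p-y)+(\nu_s-\mu_s)^2$ and expands $\Var_s(p-y)$; these are the same identity.
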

\begin{proof}
Write $y_i-\mu=(y_i-\mu_{s(i)})+(\mu_{s(i)}-\mu)$ and expand. The cross term is $2\sum_sa_s(\mu_s-\mu)\sum_{i\in s}(w_i/a_s)(y_i-\mu_s)=0$ because each inner sum vanishes. The same splitting applied to $p$ gives the covariance identity. For the second display, within group $s$ the weighted mean of $(p_i-y_i)^2$ equals $\Var_s(p-y)+(\nu_s-\mu_s)^2$ and $\Var_s(p-y)=\Var_s(y)+\Var_s(p)-2\operatorname{Cov}_s(y,p)$; summing with weights $a_s$ gives the claim.
\end{proof}
Pooled $R^2=1-\MSE/V$ uses the pooled denominator $V$. It is not an average of the per-group $R^2_s=1-\MSE_s/\Var_s(y)$, which use different denominators, and $R^2_s$ is undefined when $\Var_s(y)=0$. Three distinct mechanisms make most per-group values negative while the pooled value is positive: a small $\Var_s(y)$ in the denominator, a group-specific calibration offset $(\nu_s-\mu_s)^2$, and unequal group sizes, since a count of groups is not a row weight.

These identities do not license the inference that between-group variance dominates or that a model has learned group levels. On the released grouped benchmark under equal row weights, $V_{\mathrm{between}}$ is $3.98\%$ of $V$ across the seventeen source categories and $3.33\%$ across the ten study-linked components, so the between-group part is the small part. Moreover, for this GNN evaluation the covariance splits as $+0.00306249$ within and $-0.00055903$ between, so the positive pooled score is carried by the within-source component while the between-source component is negative. That is an observed evaluation covariance for this arm only: the chemistry tree shows a positive between-source covariance on the same rows, and an unfavourable covariance does not establish the absence of source-dependent learning. Source identifiers, study-linked components and assay identifiers are three different partitions of the same rows and are not renamed as one another; the recorded panel does not resolve assay identity, so no assay-level partition is asserted.

Two further qualifications are kept explicit. A group-mean oracle, and any score computed after centring predictions on evaluation-group means, read the evaluation labels themselves; they are retrospective diagnostics, not deployable corrected models and not model-risk lower bounds. And a negative within-group $R^2$ does not prove zero discrimination or the absence of chemistry information: calibration, a small label variance and measurement noise each produce it on their own. All scenarios here reuse the same recorded rows and the same saved model families, so the reported comparisons are dependent re-analyses rather than independent replications.

\subsection{Retained calibration results}\label{app:calibrationresults}
The retained calibration table supplements the main results. Cohort and adjudication counts appear in the data and robustness sections; Appendix~\ref{app:fitledger} gives deduplicated historical fit counts.

\begin{table}[H]
\centering\small
\begin{tabular}{@{}llrrrrr@{}}\toprule
Cohort & Original v3 model & MSE & $R^2$ & $r$ & Pred. SD & Bias\\\midrule
APP & R1 GNN & 0.1625 & -0.001 & -0.018 & 0.0072 & -0.0081\\
 & R1 no-message & 0.1630 & -0.005 & 0.015 & 0.0085 & -0.0286\\
 & Chemistry tree & 0.1896 & -0.168 & -0.021 & 0.0103 & -0.1644\\
 & Token CNN & 0.1644 & -0.013 & 0.080 & 0.0036 & -0.0483\\
S7 & R1 GNN & 0.1093 & -0.215 & 0.112 & 0.0059 & 0.1403\\
 & R1 no-message & 0.1047 & -0.164 & 0.117 & 0.0066 & 0.1231\\
 & Chemistry tree & 0.0919 & -0.021 & 0.017 & 0.0089 & -0.0441\\
 & Token CNN & 0.1017 & -0.130 & 0.108 & 0.0034 & 0.1092\\
\bottomrule\end{tabular}
\caption{Unchanged v3 released-trained activity results on original response scales. Label SD is 0.4028 on APP and 0.3000 on S7. Prediction spread and mean calibration are distinct from discrimination; no test-fitted adjustment is used.}\label{tab:calibration}
\end{table}

\subsection{Current constants and model comparisons}
For each evaluation identity, weights are either one per row or inverse study-component size normalised to average one. Means, variances and losses divide by total weight. We report pooled $R^2=1-\MSE/V$; it is undefined when $V=0$, and it is never averaged across folds or groups. Training constants are fold-specific training-mean predictors; their pooled prediction SD need not be zero. The test-mean constant is retrospective, not deployable and not a universal risk floor. APP and S7 labels and weights are identical across scenarios and each contains a single study component, so their row and equal-component weightings coincide and are reported once. The full identity hash includes identifiers, labels, weights and the declared response convention, and every per-evaluation row remains in the canonical store table \texttt{activity\_metrics}.
Grouped coverage, targets and retrospective oracles for all four cohort/weighting combinations (2,607 and 1,003 rows over 150 and 149 sequence components; oracle MSE 0.062573, 0.075091, 0.059018 and 0.052895) are rows of \texttt{activity\_metrics}, with APP and S7 in main Table~\ref{tab:activity}; each has one study component, so their two weightings coincide. The oracle is the retrospective evaluation mean, not a deployable control; essential controls appear in Appendix~\ref{app:constantchecks}.

\paragraph{Indexed source decompositions.}
The indexed decomposition rows reuse saved predictions; nothing was refitted. Each identity of Proposition~\ref{prop:decomp} was verified to machine precision, and each variance identity was verified a second time by centring rows on their own group means. The summaries concern multi-source grouped cohorts, where the decomposition is informative; APP and S7 each fall in one source family and one study component, so $V_b=0$ there by construction. All 304 decomposition records for every method, weighting and partition, the retrospective group-mean oracle (which equals $V_w$ exactly), the group-centred loss and the within/between covariance split quoted in the main text are rows of \texttt{variance\_decomposition}; neither retrospective quantity is a deployable corrected model.

\paragraph{Per-source denominators.}
Per-source $R^2_s=1-\MSE_s/V_s$ uses the group label variance as its denominator and is undefined when that variance is zero. These are category-level summaries, not independent replications. All 1,144 per-source records for every campaign, cohort, weighting and method, each with its own retained denominator $V_s$ and row count, are rows of \texttt{variance\_sources}; counts of positive and negative ratios do not weight observations.

\subsection{Endpoint and marginal-effect reconstruction}
The source-held-out export is the authoritative input; the source-included deployment checkpoints of the perturbation experiment are not used. The 560 rectangle-corner appearances reduce to 165 measured endpoint states (one shared reference, fourteen antisense, ten sense, 140 combinations) whose repeated labels agree exactly. Ensembles are the mean of the ten actual final seeds, excluding the saved ensemble row and the deterministic seed-0 rows; they reproduce the stored ensemble to $1.3\times10^{-16}$ and the interaction errors of Table~\ref{tab:floor} exactly.
\begin{table}[ht]
\centering\small
\begin{tabular}{@{}lrrrrrrr@{}}\toprule
Method & MSE & $R^2$ & $r$ & Pred. SD & Label SD & Retro. mean & Train const\\\midrule
GNN & 0.077350 & 0.031844 & 0.184788 & 0.040265 & 0.282656 & 0.079894 & 0.080173\\
No-msg & 0.077353 & 0.031810 & 0.185135 & 0.038465 & 0.282656 & 0.079894 & 0.080173\\
Tree & 0.078546 & 0.016873 & 0.167266 & 0.076857 & 0.282656 & 0.079894 & 0.080173\\
CNN & 0.077276 & 0.032767 & 0.190711 & 0.045136 & 0.282656 & 0.079894 & 0.080173\\
\bottomrule\end{tabular}
\caption{Endpoint prediction over the 165 deduplicated states on the original response scale. The retrospective column is the MSE of the evaluation-mean predictor; the final column is the MSE of a genuine training-fitted constant recovered from the ten fit records, never from these evaluation labels.}\label{tab:app_endpoint}
\end{table}
\begin{table}[H]
\centering\small
\begin{tabular}{@{}lrrrrrrr@{}}\toprule
Method & Family & $n$ & MSE & Zero MSE & $r$ & Pred. SD & Label SD\\\midrule
GNN & AS effect & 14 & 0.049856 & 0.051986 & 0.028779 & 0.024352 & 0.170616\\
GNN & SS effect & 10 & 0.003335 & 0.000855 & 0.286065 & 0.034469 & 0.013953\\
GNN & Interaction & 140 & 0.068335 & 0.068205 & -0.090263 & 0.001589 & 0.222442\\
No-msg & AS effect & 14 & 0.050546 & 0.051986 & 0.019753 & 0.021060 & 0.170616\\
No-msg & SS effect & 10 & 0.003236 & 0.000855 & 0.269464 & 0.034474 & 0.013953\\
No-msg & Interaction & 140 & 0.068281 & 0.068205 & -0.034429 & 0.001652 & 0.222442\\
Tree & AS effect & 14 & 0.042903 & 0.051986 & 0.116779 & 0.061459 & 0.170616\\
Tree & SS effect & 10 & 0.008423 & 0.000855 & 0.286572 & 0.064857 & 0.013953\\
Tree & Interaction & 140 & 0.071939 & 0.068205 & -0.177024 & 0.017767 & 0.222442\\
CNN & AS effect & 14 & 0.049311 & 0.051986 & 0.055427 & 0.023715 & 0.170616\\
CNN & SS effect & 10 & 0.004040 & 0.000855 & 0.244159 & 0.041626 & 0.013953\\
CNN & Interaction & 140 & 0.068409 & 0.068205 & -0.046847 & 0.002879 & 0.222442\\
\bottomrule\end{tabular}
\caption{The three effect families with directions and response scale fixed: antisense $f(A,0)-f(0,0)$, sense $f(0,B)-f(0,0)$ and the original reference-based interaction. Seed dispersion is separate and no independent-marginal interval is claimed.}\label{tab:app_marginal}
\end{table}

\subsection{Encoding and rank ablation}

\begin{figure}[H]\centering\includegraphics[width=\textwidth]{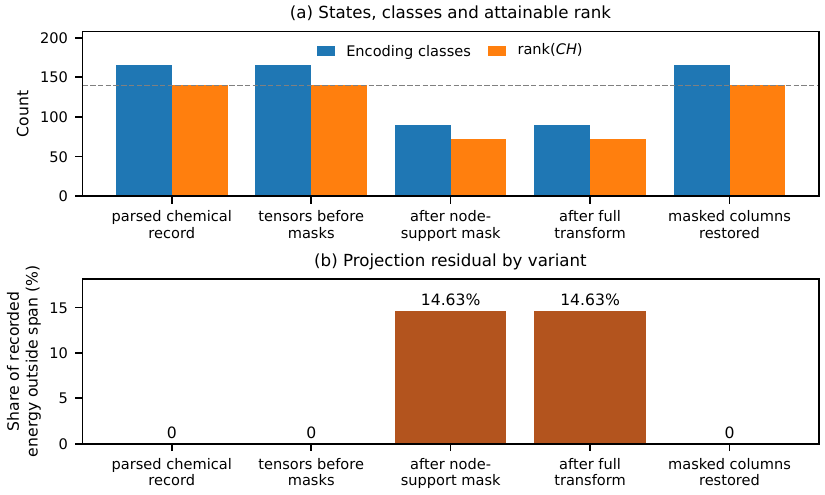}
\caption{Label-independent encoding ablation on the fixed panel, with the raw states, endpoint identities, response convention and $C$ held fixed. (a)~Encoding classes and the exact attainable rank per variant; the dashed line is $\operatorname{rank}(C)=140$. (b)~Share of recorded contrast energy outside the attainable span, linear scale. Full-row-rank variants have exact zero residual in real arithmetic; their computed residuals are roundoff. Parsed chemistry, the pre-mask tensors and the mask-restored diagnostic all keep 165 classes and full rank; applying the training-only node-support mask alone collapses them to 90 classes and rank 72. Higher rank means more representable contrasts, not better prediction; exact class counts, ranks, null dimensions and residual shares per variant are rows of \texttt{encoding\_ablation}.}\label{fig:encoding}\end{figure}

\begin{table}[H]
\centering\small
\begin{tabular}{@{}lp{.48\textwidth}lr@{}}\toprule
Strand & Indistinguishable variants & Positions & Size\\\midrule
AS & JC-A1, JC-F1, JC-S1 & 3, 18 & 3\\
AS & JC-A2, JC-F2, JC-S2 & 4, 18 & 3\\
AS & JC-A3, JC-F3, JC-S3 & 3, 4, 18 & 3\\
SS & DO003, DO004 & 17 & 2\\
\bottomrule\end{tabular}
\caption{All non-singleton strand classes, determined by exact complete inputs across every partner. Other strand states are singletons. Full positional names, raw records and equality witnesses remain in the store.}\label{tab:app_collisions}
\end{table}

\subsubsection{Chemical factorization and complete rank argument}\label{app:chemicalfactor}
The raw panel is the full product of fifteen antisense and eleven sense states, including reference W053/W207. Two antisense states are equivalent only when their complete post-mask inputs coincide for every sense partner; sense equivalence is defined symmetrically. This compares $X,A,\mathrm{mask},C$ with shapes, dtypes and exact bytes. All 27,225 ordered state pairs satisfy the stronger biconditional: complete inputs coincide if and only if both strand classes coincide. There are nine antisense and ten sense classes; both reference classes are singletons.

Within each antisense position pattern in Table~\ref{tab:app_collisions}, JC-A denotes the recorded 2-deoxy-2-N,4-C-ethylene-locked chemistry, JC-F the recorded 2,4-carbocyclic-ethylene-bridged locked chemistry, and JC-S the recorded 2,4-carbocyclic locked chemistry. Their raw names differ but the fitted mask removes their distinguishing columns. Sense DO003 (2-aminoethyl) and DO004 (2-guanidinoethyl), both at position 17, also merge. Modification positions and modified versus unmodified indicators can remain distinguishable; an encoded equality is not chemical equivalence. Restoring masked columns to the encoding recovers rank, but not additively. A minimum of three restored chemistry columns recovers full rank; exhaustive search over all smaller sets establishes that lower bound, and exactly six distinct three-column sets attain it. One of these six minimal triples, the lexicographically first, is the two carbocyclic locked-nucleic-acid columns, indices 39 and 40, and the 2-aminoethyl column, index 42. In isolation those three restore 27, 27 and 8 of the lost ranks, 62 in total, against 68 jointly, and no single column restores more than half, so these marginal rank gains do not provide an additive attribution of the joint rank recovery. Rows are in \texttt{column\_localization}, \texttt{column\_localization\_minimal} and \texttt{column\_localization\_summary}.

Index product classes by $(a,b)$, reference $(0,0)$, and arbitrary decoder values $g_{ab}$. The unique interaction coordinates are $t_{ab}=g_{ab}-g_{a0}-g_{0b}+g_{00}$ for nonreference $a,b$. These $(9-1)(10-1)=72$ coordinates are independent: for any vector $v$, set $g_{ab}=v_{ab}$ on nonreference pairs and every reference-row/reference-column value to zero, yielding $t=v$. Every original rectangle maps to one coordinate, and every coordinate occurs because the panel is a full product. Thus $CH=RK$, with $K$ this surjective coordinate map and $R$ its repetition matrix. The columns of $R$ have nonempty disjoint supports, hence are independent, proving rank 72.

For each identical-row group choose representative $q_0$. Each $e_q-e_{q_0}$ for a nonrepresentative row annihilates $CH$. These vectors are independent because each nonrepresentative coordinate appears in only its own vector. There are $140-72=68$ of them, equal to the left-null dimension, so they span it: all restrictions are duplicate-contrast equalities, with no additional general linear dependencies or individually zero row. Rational elimination independently checks the rank and every witness. The forty audited instances share this partition, not forty unrelated representation designs. Restoring masked columns here is an encoding analysis; the refit that feeds restored features to the predictor is the intervention described below in Appendix~\ref{app:maskrestore}.

\subsubsection{Constructed factorial sweep: what the floor binds}\label{app:floorsynthetic}
The orthogonal split $\lVert y-p\rVert^2=\lVert(\mathrm{Id}-P)y\rVert^2+\lVert Py-p\rVert^2$ holds for any
$p\in S$, and a predictor that consumes only the merged classes has $p=CHg\in S$ by construction, so
its out-of-subspace error equals the analytic floor as algebra. To test whether that equality is a
property of the input restriction or merely of the measurement, the sweep runs two arms per cell. A
protocol whose SHA256 (\texttt{744c6053\ldots}) the fitting stage re-asserts, and which the append-only
execution ledger records before the fits, fixed eight surjective level-merging encoders on two
constructed antisense-by-sense designs ($6\times5$ and $8\times6$), spanning rank deficit 0 to 29 (at
most $19/20$ of $q$), ground truth $y_{ij}=\mu+\alpha_i+\beta_j+\gamma_{ij}$ with $\gamma$ drawn once
per setting and zeroed on the reference row and column, marginal-only training cells so every
interaction is extrapolation, two architectures (a 16-unit MLP and a two-node message-passing model)
and three retained seeds each.

All 48 merged-class fits (96,000 optimizer updates, 0 failures) passed a gate stricter than the
stated $10^{-10}$ floor tolerance: it also requires zero within-class prediction deviation and a
split residual below $10^{-10}$. The largest absolute floor gap is $1.1\times10^{-16}$, the largest
split residual $5.6\times10^{-16}$, within-class prediction deviation is exactly zero in all 48, and
an independent exact-rational floor from the left null space of $CH$ matches the projector floor to
$5.6\times10^{-17}$ in all eight settings. The zero-deficit control has floor exactly zero in the
rational arithmetic, with realized perpendicular residual $q^{-1}\lVert(\mathrm{Id}-P)(y-p)\rVert^2$ below $2\times10^{-31}$; this is distinct from the stored
\texttt{zero\_control} column, which is the no-interaction baseline. Because $\gamma$ is redrawn per
setting, floor magnitudes are not comparable across deficits.

A second pre-registered arm (SHA256 \texttt{7dce9922\ldots}, frozen before any of its fits) changes
only the input: the same 48 fits receive a one-hot over the $a+b$ raw levels instead of the
$a_\ast+b_\ast$ merged classes, so their contrast vectors are not confined to $S$. The registered
prediction was that this arm\textquotesingle s out-of-subspace error would fall below the floor wherever the
deficit is nonzero. \emph{It did not.} The raw-index arm does not enforce equality within the original merged
classes: its within-class prediction deviation ranges from 0.55 to 3.76,
compared with exactly zero in every merged-class fit. Its out-of-subspace
error differs from the original floor, but among the 42 nonzero-deficit
fits only 8 fall below that floor and 34 lie above it, with median ratio
1.33, maximum 15.97 and minimum 0.81. No setting has every seed below
the floor, and four have every seed above it. The raw-index arm is not confined to the original contrast subspace. Paired total prediction MSE on the same settings, models and seeds is higher for the raw-index arm in 28 of the 42 nonzero-deficit fits and lower in 14, with median ratio 1.179 over the range 0.577 to 3.903. Under the reported marginal-only training protocol, removing the original class-equality restriction did not systematically reduce prediction error. The training data contain no observed targets for the nonreference joint cells. Predictions on those cells still depend on the fitted architecture, shared parameters and optimization procedure. The six zero-deficit pairs are identical, as expected where the two inputs coincide. These are synthetic-only fits on constructed data. A smaller
out-of-subspace error does not by itself establish a smaller total
prediction error, and these experiments provide no evidence about
siRNA efficacy. Per-setting rows are in \texttt{floor\_synthetic},
\texttt{floor\_synthetic\_summary}, \texttt{floor\_synthetic\_rawindex} and
\texttt{floor\_synthetic\_rawindex\_outcome}.

\subsubsection{Mask-restoration protocol}\label{app:maskrestore}
The recorded protocol uses the removed oracle floor $F=0.009980$ as a
comparison scale. It labels the ensemble improvement $\Delta$ on all
140 rectangles as near floor ($F/2\leq\Delta\leq F$), near zero
($|\Delta|<F/2$), above the floor ($\Delta>F$), or worse
($\Delta\leq-F/2$). These are descriptive outcome categories, not
bounds on the improvement attainable after refitting. The unmasked arm re-ran the 36-fit
development grid and chose the masked configuration, and ten masked refits reproduced the
historical signatures, epochs and predictions. Training inputs are identical across arms, while 173 of 198 validation rows and 157 of 165 B3 states carry restored columns. Comparing each seed\textquotesingle s selected parameters between the unmasked and masked refits, 5 of the 10 seeds are bit-identical: exactly those whose validation selection chose the same epoch in both arms. The 100 affected rectangles improved by
$+0.000028$ and the 40 unaffected by $-0.000011$, against a zero-interaction control of
0.068205 in both arms; antisense MSE went from 0.049856 to 0.050204 with its 0.018972 floor
removed and sense from 0.003335 to 0.003073. Per-seed rows are in \texttt{mask\_scores} and
\texttt{mask\_paired}.
\subsection{Model versus training constants under both weightings}\label{app:constantchecks}
\begin{table}[H]
\centering\small
\begin{tabular}{@{}lrrrrrrr@{}}\toprule
Scen. & Weights & Model & Constant & Difference & Lower & Upper & Grp\\\midrule
primary & rows & GNN & row const & -0.001217 & -0.004579 & 0.020462 & 10\\
primary & rows & GNN & grp const & -0.006226 & -0.010077 & 0.021086 & 10\\
primary & rows & Tree & row const & -0.006784 & -0.013080 & 0.003109 & 10\\
primary & rows & Tree & grp const & -0.011792 & -0.016886 & 0.000805 & 10\\
primary & equal study & GNN & row const & 0.015532 & -0.007321 & 0.038432 & 10\\
primary & equal study & GNN & grp const & 0.015232 & -0.007620 & 0.038511 & 10\\
primary & equal study & Tree & row const & -0.007124 & -0.025018 & 0.008232 & 10\\
primary & equal study & Tree & grp const & -0.007423 & -0.024537 & 0.007632 & 10\\
excluded & rows & GNN & row const & 0.007004 & -0.005002 & 0.040376 & 9\\
excluded & rows & GNN & grp const & 0.009191 & -0.001189 & 0.038475 & 9\\
excluded & rows & Tree & row const & -0.004148 & -0.013327 & 0.009386 & 9\\
excluded & rows & Tree & grp const & -0.001961 & -0.008954 & 0.006106 & 9\\
excluded & equal study & GNN & row const & 0.019636 & -0.001169 & 0.042041 & 9\\
excluded & equal study & GNN & grp const & 0.021490 & 0.003128 & 0.042072 & 9\\
excluded & equal study & Tree & row const & -0.001319 & -0.019699 & 0.014955 & 9\\
excluded & equal study & Tree & grp const & 0.000535 & -0.012901 & 0.012998 & 9\\
\bottomrule\end{tabular}
\caption{Fixed-ensemble error minus each training-fitted constant, with paired conditional 95\% percentile intervals from resampling whole study-linked components under both weightings at the existing fixed analysis seed over 10,000 draws, predictions and selection held fixed. Constants are training-fitted and never an evaluation mean. The tree has lower observed MSE than both constants under row weighting, but not against the group-mean predictor under equal-component weighting on the excluded cohort; all four source-excluded tree-versus-constant intervals include zero. Ten and nine components respectively: conditional, few-component intervals whose endpoints are not subtracted.}\label{tab:app_intervals}
\end{table}
Leave-one-study-component-out rescoring omits one evaluation component and renormalises the stated weights, leaving the fitted models and their training-partition constants unchanged. Across 152 omissions the descriptive ordering changes in 8 cases and no retrospective denominator was near zero. These are influence diagnostics, not new independent studies.

\subsection{Practical size of the bounded perturbation}
\begin{table}[H]
\centering\small
\begin{tabular}{@{}lrrrrrrr@{}}\toprule
Method & Seed & Scale & Changed & Mean ovl. & Min ovl. & Med. disp. & Max disp.\\\midrule
GNN & 1103 & -0.250000 & 0 & 1.000000 & 1.000000 & 0.000000 & 0.000000\\
GNN & 1103 & 0.250000 & 0 & 1.000000 & 1.000000 & 0.000000 & 0.000000\\
GNN & 2207 & -0.250000 & 0 & 1.000000 & 1.000000 & 0.000000 & 0.000000\\
GNN & 2207 & 0.250000 & 0 & 1.000000 & 1.000000 & 0.000000 & 0.000000\\
GNN & 3301 & -0.250000 & 0 & 1.000000 & 1.000000 & 0.000000 & 0.000000\\
GNN & 3301 & 0.250000 & 0 & 1.000000 & 1.000000 & 0.000000 & 0.000000\\
original\_gnn & 1103 & -0.250000 & 4 & 0.952941 & 0.800000 & 0.009615 & 0.093750\\
original\_gnn & 1103 & 0.250000 & 3 & 0.964706 & 0.800000 & 0.009615 & 0.089888\\
original\_gnn & 2207 & -0.250000 & 0 & 1.000000 & 1.000000 & 0.004808 & 0.077778\\
original\_gnn & 2207 & 0.250000 & 4 & 0.952941 & 0.800000 & 0.004808 & 0.093750\\
original\_gnn & 3301 & -0.250000 & 4 & 0.952941 & 0.800000 & 0.009615 & 0.067416\\
original\_gnn & 3301 & 0.250000 & 0 & 1.000000 & 1.000000 & 0.009615 & 0.062500\\
\bottomrule\end{tabular}
\caption{Within each of the seventeen exact APP assay pools, top-five selection mass under the existing fractional boundary-tie rule, and average-rank displacement normalised by $n-1$. The pools overlap and come from one patent family, so these are not independent trials.}\label{tab:app_selection}
\end{table}
\begin{table}[H]
\centering\small
\begin{tabular}{@{}lrrrrr@{}}\toprule
Arm & Seed & Scale & Max coordinate & Max $L_2$ & Max relative $L_2$\\\midrule
R1 & 1103 & -0.250000 & 0.000000 & 0.000000 & 0.000000\\
R1 & 1103 & 0.250000 & 0.000000 & 0.000000 & 0.000000\\
R1 & 2207 & -0.250000 & 0.000000 & 0.000000 & 0.000000\\
R1 & 2207 & 0.250000 & 0.000000 & 0.000000 & 0.000000\\
R1 & 3301 & -0.250000 & 0.000000 & 0.000000 & 0.000000\\
R1 & 3301 & 0.250000 & 0.000000 & 0.000000 & 0.000000\\
R0 & 1103 & -0.250000 & 0.005795 & 0.071476 & 0.240187\\
R0 & 1103 & 0.250000 & 0.005870 & 0.088127 & 0.336519\\
R0 & 2207 & -0.250000 & 0.004085 & 0.055905 & 0.212644\\
R0 & 2207 & 0.250000 & 0.005719 & 0.075019 & 0.294461\\
R0 & 3301 & -0.250000 & 0.003599 & 0.054309 & 0.288555\\
R0 & 3301 & 0.250000 & 0.002808 & 0.019713 & 0.108521\\
\bottomrule\end{tabular}
\caption{Raw input-gradient changes on the same sixteen fixed APP queries. Each statistic maximizes across those queries; the coordinate statistic also maximizes across coordinates. Relative change divides each query gradient-change norm by its baseline norm, with threshold $10^{-8}$; no near-zero references occur. Other graph inputs are fixed. These are local encoded sensitivities, not feasible chemical interventions, and R1 is the expected structural null control.}\label{tab:app_gradient}
\end{table}

\subsection{Official published-route preflight}
The official repository documents a Docker route and an \texttt{ENsiRNA-mod/easy\_run.py} entrypoint. The manifest for the documented tag resolves to digest \path{sha256:1a9c8b80a2d5b5943770fb5e736264cb5234997181df98d7704bde673f09167f} with 35 layers and 20.60\,GiB of compressed layers, against 11.12\,GiB free on the working filesystem, and no container runtime is installed; no pull was started because the known requirement does not fit. The documented Linux route additionally requires a licensed Rosetta installation and pre-folded PDB geometry, and the released inference code shells out to Rosetta binaries and consumes atom identities and RNA sequence embeddings. The training partition of the released checkpoint is not declared, so overlap with Bramsen is unknown and no held-out predictive claim is made. The earlier pinned partial-branch chemistry audit is unchanged.

\begin{table}[H]
\centering\small
\begin{tabular}{@{}lrrrr@{}}\toprule
Protocol & GNN & No-message & Tree & CNN\\\midrule
v1 original & 0.521 & 0.497 & 0.565 & 0.453\\
v2 factorial & 0.538 & 0.498 & 0.574 & 0.530\\
v2 deployment & 0.397 & 0.367 & 0.631 & 0.510\\
v3 group selected & 0.534 & 0.577 & 0.445 & 0.583\\
Source-excluded & 0.400 & 0.374 & 0.696 & 0.421\\
Primary-assay & 0.565 & 0.588 & 0.514 & 0.563\\
\bottomrule\end{tabular}
\caption{Common tie-aware APP mean top-five percentile across the identical seventeen assay pools; larger is preferred. The pools overlap and come from one patent family, so they are not seventeen independent experiments.}\label{tab:ranking}
\end{table}
\begin{table}[H]
\centering\small
\begin{tabular}{@{}lrrrrr@{}}\toprule
Method & Row MSE & Comp. MSE & $r$ & Pred. SD & Sign\\\midrule
Activity GNN & 0.2351 & 0.3560 & 0.083 & 0.036 & 111/137\\
Pair GNN & 0.2221 & 0.3211 & 0.248 & 0.038 & 111/137\\
Pair no-message & 0.2220 & 0.3196 & 0.228 & 0.043 & 111/137\\
Pair tree & 0.2438 & 0.3101 & 0.120 & 0.167 & 109/137\\
Pair ridge & 0.2193 & 0.2888 & 0.226 & 0.148 & 105/137\\
Zero & 0.2808 & 0.4342 & -- & 0.000 & 0/137\\
Training mean & 0.2638 & 0.3336 & -0.076 & 0.064 & 111/137\\
\bottomrule\end{tabular}
\caption{Unchanged held-out measured B2 errors. Comp.\ gives each of twelve sequence components equal mass. Sign counts use 137 non-ties under the operational 0.02 band; the majority control also reaches 111/137.}\label{tab:pairranking}
\end{table}

\paragraph{Exact algebra records.} The exact matrix rows, all left-null witnesses and full-precision projection components remain in \texttt{chemical\_exact\_matrices}, \texttt{chemical\_left\_null\_witnesses} and \texttt{contrast\_projection}. Section~\ref{app:contrastrank} gives the complete proof; the main text gives the numerical error decomposition.

\paragraph{Complete perturbation records.}
Per-method, per-scale perturbation outcomes -- maximum training change, APP change, rank range, gradient change and B3 contrast change -- are rows of \texttt{perturbation\_summary} in the canonical store.

\subsection{Published preprocessing: exact scope}
ENsiRNA-mod commit \path{028824341635903f3c661f5d1cc737de106493d5} uses radius-two 512-bit Morgan chemistry fingerprints. Its partial branch admits 140/140 B3 rectangles with no cancellation; complete inference also requires atom IDs, geometry and RNA-FM. MEG-mod commit \path{c335a4c69d56ef73754677da8bfe627c8112b350} supplies the UniMol dictionary: 117/140 rectangles pass its modification branch without cancellation, 23 have unsupported annotations. Undeclared strand-token construction/dimensions block its complete forward pass. Unknown inputs are never zero-filled.

Eight conformance pairs per method include an unchanged control, sugar/phosphate edits and unsupported names. ENsiRNA has three partial collisions, three distinct pairs, one control and one unsupported pair; MEG has two, two, one and three. ENsiRNA fingerprints coincide for locked/altritol nucleic acid, 2-fluoro/fluoroarabino and uracil/pseudouridine; MEG embeddings coincide for the first and third. Other branches can distinguish them: these are neither chemical-equivalence claims nor whole-predictor certificates. ModMapper exposes a PHP form but no local parser/ontology release, blocking all 140 rectangles and eight conformance inputs; no job is submitted. Raw names, downloaded bytes, hashes and witnesses remain in the store.
\subsection{Historical fit accounting}\label{app:fitledger}
Preserved campaigns contain 37, 210, $2{,}312$ and 792 unique siRNA fit records ($3{,}351$ total and $1{,}934{,}535$ optimizer updates); the separate v1 twenty-update profile is not an efficacy fit. This work adds 56 siRNA mask-restoration fits ($21{,}513$ optimizer updates; Appendix~\ref{app:maskrestore}), twelve MAVE-NN support-intervention attempts, six with retained per-seed results (Appendix~\ref{app:support}), and 96 synthetic fits on constructed factorial designs, 48 in each of the merged-class and raw-index arms, each arm with $96{,}000$ optimizer updates and no failures (no siRNA checkpoint read or modified); the raw-index arm ran outside the pipeline's execution ledger, and its counts come from that run's own record. The reported $62{,}424$ MAVE-NN updates cover the retained invocation, not a complete total across both invocations. Historical fits, new attempts, retained outcomes and failed workflow invocations are distinct accounting units. Source, training/selection, response and common-row changes remain identified in \texttt{protocol\_comparisons} and \texttt{protocol\_matched\_changes}; prior ranking corrections and source discrepancies are retained above.
\subsection{Canonical execution and evidence mapping}
Set \texttt{PYTHONDONTWRITEBYTECODE=1} and run \texttt{python} \path{analysis/structural_attribution_audit/audit.py} \texttt{all}, then replace \texttt{all} with \texttt{verify-resume} to check unchanged reuse. The same directory holds the manifest, SQLite results and single report with exact historical dependency paths. No scientific input is copied or modified. Styles and builds remain outside the four-entry source; the source ZIP independently compiles. Recorded diagnostic and build costs exclude additional nonzero administrative work.

\subsection{Independent-predictor fidelity and coverage}\label{app:independentattempt}
Two candidates were allowed: ENsiRNA-mod at commit \texttt{028824341635903f3c661f5d1cc737de106493d5} and MEG-mod at \texttt{c335a4c69d56ef73754677da8bfe627c8112b350}, both still at those commits. Each recorded failure was reclassified against a compatible environment supplying Torch 2.11 with CUDA, PyTorch Geometric, ViennaRNA 2.6.4 and \texttt{rna-fm} with cached weights. The Python dependencies listed above were available in the compatible environment. The remaining obstacles included missing native structure-generation software and unresolved release semantics.

MEG-mod's released \texttt{BAN\_graph.py} contains no import statements at all, which is why its documented entry point stops at line 10 with \texttt{NameError: torch}: a released-source defect, not a missing package. Restoring the omitted block is additive, verified by deleting it again and recovering the released bytes exactly. The repaired module then executes, and its forward's only remaining unbound globals are \texttt{s\_tokens}, \texttt{a\_tokens}, \texttt{max\_L\_mod\_sense} and \texttt{max\_L\_mod\_anti}, which build the padded modification-token tensors consumed by \texttt{bcn\_mod}, one of the two halves entering the output block. The released checkpoint holds sixteen \texttt{bcn\_mod} tensors, so that branch is required under the strict load its own prediction script performs, and their shapes fix \texttt{embed\_dim}${=}1536$, an admissible binding. No shape states what a modification token contains, in what order, or how many there may be; that semantics appears in no released file, so the route stopped rather than guessed. The release also omits the \texttt{requirements.txt} its README directs users to install.

The earlier record that ENsiRNA-mod has no local checkpoint is corrected: five released checkpoints ship inside the repository at 9,523,522 bytes each, and \texttt{checkpoint\_1.ckpt} deserialises unrepaired into \texttt{model.mask\_model.RNAmaskModel} with 2,365,808 parameters exposing the documented \texttt{test} interface. Exactly one input is missing. \texttt{X}, the per-residue coordinates, is that interface's first argument and is consumed twice, once to build the nine-nearest-neighbour graph and once as coordinates for the equivariant network, so without real structures there is no graph and substituted coordinates would be invented geometry. The release obtains them only from Rosetta \texttt{rna\_denovo}, absent here and distributed behind a licence-and-registration page; the linked pre-folded archive covers the authors' duplexes, not these cohorts. The container route was not retried, since no Docker or Podman is installed and the recorded 22.12 GB compressed image still exceeds the free space.

Zero of 156 B2 pairs (312 endpoints, 12 sequence components, 26 distinct base duplexes), zero of 165 B3 endpoints and zero of 140 B3 contrasts therefore received a complete external prediction, and 477 endpoint records would have needed native inputs. Support was fixed from native input requirements, never from observed error, and nothing was zero-filled: these are unassessed cases, not model errors. Supervised efficacy-training and selection overlap stays undocumented for both checkpoints, so neither could have supported a held-out claim even had it run. The frozen cap was two candidates, 1,200 seconds of setup, two CPU threads, no GPU and at most six fitting attempts only after fidelity and a runtime estimate; with no executable forward and no geometry the fitting budget stayed zero and all six attempts are unspent. No optimizer was constructed, no container pulled and no repository file added. Pins, diffs, provenance, checkpoint identities, executed commands, denominators and a frozen but never executed external evaluation rule are in the \texttt{recovery} tables of the canonical store. Independent predictive validation remains incomplete.

\subsection{Does the audit identify unreliable predictions?}\label{sec:utility}
Two fixed scores test usefulness: training-support absence among chemical entries relevant to an input or edit (A), and maximum ensemble-effect change over the two existing bounded probes (B); lower scores retain supposedly more reliable cases. Three fixed seeds define each predictor, its disagreement score and its probes, and R0 deployment diagnostics are never mixed with source-held-out R1 errors. Novelty is nearest-training positional-chemistry Jaccard distance, coverage is 20, 40, 60, 80 and 100\% with fractional ties and equal sequence-component base weights, and a constant score fractionally retains every case, supplying no ranking information (Appendix~\ref{app:utility}).
\begin{table}[H]\centering\small
\begin{tabular}{@{}lrrr@{}}\toprule
Retention score & Model MSE & Zero MSE & Excess MSE\\\midrule
Support absence & 0.434741 & 0.434166 & +0.000575\\
Probe sensitivity & 0.356843 & 0.356330 & +0.000513\\
Ensemble SD & 0.363586 & 0.362827 & +0.000759\\
Chemical novelty & 0.434741 & 0.434166 & +0.000575\\
Random & 0.451729 & 0.451135 & +0.000594\\
\bottomrule\end{tabular}
\caption{R0 B2 at 60\% observation mass: 93.6 fractional pairs out of 156, twelve sequence components. Each zero control uses identical retained membership and weights. Random averages 200 frozen selections. These three-seed activity-only predictions are distinct from historical pair-supervised estimates.}\label{tab:utility}
\end{table}
The primary comparison, B minus ensemble disagreement in excess-over-zero MSE at 60\% coverage, is -0.000246 with paired conditional 95\% interval [-0.001059, +0.000561]. It does not establish a utility advantage. Lower raw pair error under B mostly follows lower zero-effect error on the retained pairs, which does not demonstrate learned sequence-specific chemistry. A and novelty are constant across B2, B is constant for R1, and on source-held-out B3 it remains zero despite poor interaction recovery. Secondary APP and within-assay comparisons establish no general diagnostic superiority.
\begin{figure}[H]\centering\includegraphics[width=\textwidth]{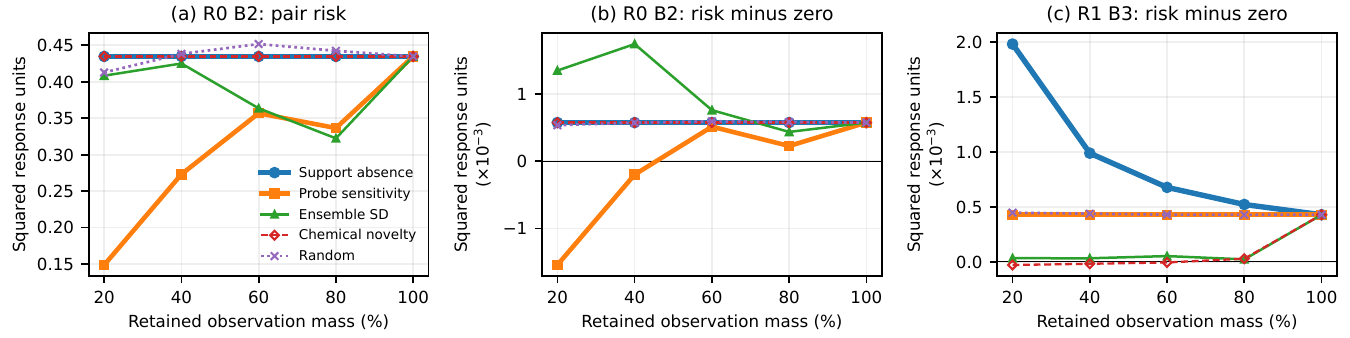}
\caption{Retrospective utility at all frozen coverage levels. Raw and excess-over-zero risk differ because selection changes the measured-effect distribution. Panels (a,b) use the identical R0 deployment ensemble on B2; (c) uses the separate source-held-out R1 ensemble on B3. B3 has one background; these curves are descriptive, and the frozen primary comparison is at 60\%. Constant scores coincide: support absence with chemical novelty on B2 (a,b), probe sensitivity with random retention for R1 (c).}\label{fig:utility}
\end{figure}

\subsection{Diagnostic utility: complete protocol and estimands}\label{app:utility}
All cohorts were previously inspected: this is a frozen retrospective analysis, not preregistration. R0 and R1 deployment ensembles use seeds 1103, 2207 and 3301 with identical members for predictions, disagreement and probes; APP's 1,839 endpoints and B2's 156 bundles were excluded from their supervised training and selection, and one patent family with existing sequence components does not certify external generalization. Deployment B3 is source-exposed and descriptive, while a separate R1 outer0 ensemble excludes Bramsen and scores one previously inspected B3 background. Checkpoints, memberships, masks, response conventions and evaluation identities are recorded jointly; no deployment score is paired with an outer0 error.

Let $p_{ij}$ denote seed $j$'s endpoint prediction, $J=3$, $q_{cj}=\sum_i a_{ci}p_{ij}$ and $\bar q_c=J^{-1}\sum_jq_{cj}$. Endpoints use one coefficient one, B2 changed minus reference, and B3 its marginal and four-corner coefficients; the target is the same signed combination of measured activities, response values are unchanged and no calibration reads evaluation labels.

\paragraph{Support absence (A).} Use the actual schema's half-open chemical columns $[39,84)$ and original adjacency. For an endpoint the relevant set is its active chemistry occurrences and directed edges. For a contrast it is the union of active entries that differ across corners. If $r_{ce}$ indicates this set and $u_e$ is one when the corresponding node-feature or relation type never occurs in the exact training partition, define $A_c=\sum_e r_{ce}u_e/\sum_e r_{ce}$, or zero for an empty set. Entries count once; this is an occurrence score, not a learned effect size, and all evaluated names are in the fixed schema. A preliminary offset $[36,81)$ was corrected against the manifest and encoders; superseded scores and protocol remain preserved, and the correction was not chosen for better outcomes.

\paragraph{Probe sensitivity (B).} The existing RNG 20260914 generates fixed Rademacher directions at each block's original RMS. With $s\in\{-0.25,+0.25\}$ define $B_c=\max_s|J^{-1}\sum_jq_{cj}^{(s)}-\bar q_c|$. This is a maximum over two tested ensemble probes, not a certified worst case. R0 perturbs only absent independent transforms and R1 only false-gated residuals, never shared bases (Appendix~\ref{app:trainingindependence}). Existing deployment zero/probe checks were reused, and three outer0 checkpoints received full-training checks at zero and both scales with bitwise-invariant predictions, unchanged B3 outputs, exactly restored parameters and retained checkpoint hashes. No perturbed checkpoint is saved.

\paragraph{Comparators.} Disagreement is $[J^{-1}\sum_j(q_{cj}-\bar q_c)^2]^{1/2}$ using actual per-seed contrasts, preserving endpoint covariance. Novelty is nearest-training Jaccard distance between sets of (strand-slot, pre-mask chemistry-column) pairs with empty/empty distance zero, a contrast taking its largest endpoint distance, and the same training partition defines novelty and support. Higher always means less reliable; no reversal, combination or threshold is optimized. Random retention averages 200 fixed uniform-score vectors per model/cohort block at RNG 20260916.

\paragraph{Retention and risk.} For $n$ objects and $f\in\{.2,.4,.6,.8,1\}$ let $t$ be the smallest score threshold reaching $fn$ objects. Set $r_i=1$ below $t$, zero above it, and $(fn-n_{<t})/n_{=t}$ at equality. Thus $\sum_i r_i=fn$ without ID or outcome tie-breaking and a constant score gives $r_i=f$ everywhere. Base weights $w_i$ are inverse sequence-component counts (equal object weights for B3); renormalize $v_i=w_ir_i/\sum_kw_kr_k$ and compute $R=\sum_iv_i(\bar q_i-y_i)^2$, $R_0=\sum_iv_i(b_i-y_i)^2$ and $E=R-R_0$. Endpoint $b_i$ averages the actual checkpoint training means; contrast $b_i=0$. Report fractional count, positive-membership count, retained base-weight mass and effective sample size $(\sum_iw_ir_i)^2/\sum_i(w_ir_i)^2$. A smaller $R$ may just retain smaller measured effects, hence the matched $E$ comparison.

The primary estimand is $E_B(.6)-E_{\rm SD}(.6)$ for R0 B2, negative favoring B; all other coverages, scores and comparators are secondary. Within-assay checks require at least ten objects and the same fractional rule, and overlapping pools are descriptive, not independent replication. No candidate-selection improvement is inferred from global rank movement.

\paragraph{Uncertainty.} Draw 2,000 multinomial resamples of observed sequence components at RNG 20260916, applying identical multiplicities to the retained numerator and denominator for each score. Fits and original selection memberships stay fixed; recompute paired risk and excess-risk differences and take the 2.5th/97.5th percentiles. No-retained-mass resamples are undefined and counted, not zero-filled (up to four in a secondary comparison); all primary resamples are defined, and random averages its defined frozen selections within each resample. Twelve B2 components from one family limit interpretation; no B3 interval or rectangle bootstrap is used.

\paragraph{Secondary checks on endpoints and B3.}
The conclusion also depends on its measured target. On source-held-out B3 novelty selects contrasts with much smaller measured squared effects, its lower raw MSE tracking its lower zero-effect risk (rows of \texttt{utility\_curves}), and constant B retains every interaction at fractional mass, so it cannot identify poor recovery. These are 140 dependent contrasts on one background, not 140 independent validations.

On APP endpoints the probe-minus-disagreement excess-risk difference is +0.006818, conditional 95\% interval [-0.008608, +0.023869]. Within the 17 overlapping APP assay pools, the probe score has lower excess risk in 7 and higher in 10; on the 6 B2 assay contexts it is lower in 5 and higher in 1. These descriptive counts preserve context without creating independent biological replications; they do not overturn the primary null, and no candidate-selection improvement follows.

\paragraph{Results and limits.} The primary interval includes zero. B2 support absence and novelty are constant and R1 probe sensitivity is zero even on the poorly predicted source-held-out B3 contrasts; the historical ten-seed B3 result is preserved separately from this three-seed ensemble. All predictions, controls, seed contrasts, risks, denominators, paired intervals and within-assay outcomes are in \texttt{utility\_observations}, \texttt{utility\_curves}, \texttt{utility\_comparisons} and \texttt{utility\_within\_assay}. The primary null does not prove every diagnostic useless; it rejects a claimed advantage unsupported by this test.
\begin{table}[ht]
\centering\small
\begin{tabular}{@{}lrrrrrr@{}}\toprule
Coverage & Pair mass & Probe MSE & Its zero MSE & SD MSE & $\Delta E$ & 95\% interval\\\midrule
20\% & 31.2 & 0.148697 & 0.150237 & 0.408344 & -0.002887 & [-0.004963,-0.000467]\\
40\% & 62.4 & 0.273094 & 0.273292 & 0.425092 & -0.001938 & [-0.003812,-0.000147]\\
60\% & 93.6 & 0.356843 & 0.356330 & 0.363586 & -0.000246 & [-0.001059,+0.000561]\\
80\% & 124.8 & 0.336510 & 0.336282 & 0.322389 & -0.000208 & [-0.000668,+0.000103]\\
100\% & 156.0 & 0.434741 & 0.434166 & 0.434741 & +0.000000 & [+0.000000,+0.000000]\\
\bottomrule\end{tabular}
\caption{R0 B2 across all frozen coverage levels. Delta E is probe minus disagreement in excess-over-identical-subset-zero risk; 60 percent is primary. Equal-component weights are renormalized after fractional retention. Different methods retain different zero risks.}\label{tab:utilitycoverage}
\end{table}

\subsection{Complete published predictor: MAVE-NN measured effects}\label{app:mavenn}
\paragraph{Native reference comparison and provenance.}
The intended split holds out sequences within one assay and gene context;
it is not a new-gene-context test or independent biological validation.
The native evaluation gave $I_{\mathrm{var}}=0.305865$ and
$I_{\mathrm{pred}}=0.342436$, versus tutorial values $0.306635$ and
$0.357971$, respectively. The differences are $-0.000770$ and
$-0.015535$. The tutorial reports $0.357971\pm0.013754$ for
$I_{\mathrm{pred}}$,\footnote{\url{https://mavenn.readthedocs.io/en/v1.1.3/tutorials/3_splicing_mpra_multiple_gpmaps.html}}
so the present estimate is not inside that displayed
interval. This observation alone does not establish a statistically
significant discrepancy. No combined-uncertainty criterion or
prediction-level numerical tolerance is established here; this is a
reference comparison, not bitwise reproduction.
Identity enumeration preceded scoring: 2083 rectangles whose endpoints all belong to the released test split over 28 position pairs and 1853 backgrounds, and 12486 single substitutions, with position 4 fixed G and position 5 restricted to C and U, so no globally unmeasured substitution is scored.

Implementation, data and checkpoint are pinned: mavenn 1.1.3, TensorFlow 2.21.0, Keras 3.10.0 (newer Keras dereferences an argument specification that mavenn\textquotesingle s error-handling decorator hides) and Python 3.13.12, outside the repository. The released weights are \texttt{mpsa\_ge\_pairwise.weights.h5} (SHA256 \texttt{90e955f8\ldots}) with metadata \texttt{.pickle} (\texttt{d41a069a\ldots}); the data are \texttt{mpsa\_data.csv.gz} (\texttt{df125142\ldots}, 30,483 rows) and \texttt{mpsa\_replicate\_data.csv.gz} (\texttt{c18a8247\ldots}, 30,697 rows). The first hash equals the independently pinned GitHub copy used for the input-support calculation. Settings are the released ones: pairwise G-P map, GE regression, SkewedT noise, heteroskedasticity order two, 50 hidden nodes, $\theta$ and $\eta$ regularisation 0.1. The checkpoint metadata record $N=24{,}405$ and aggregate training
statistics, but not the full sequence memberships. The author training
notebook removes released test rows before setting data and saves the
corresponding model name.
This supports the intended split; count
equality alone does not establish the exact source--data--checkpoint
provenance or exclude test use from every model-selection step. The released response values use a consensus-sequence assay normalizer
before the random split.
This shared assay-scale normalization is
distinct from model-fitted preprocessing. Its common additive offset
on the log scale cancels from within-library contrasts whose
coefficients sum to zero, but remains relevant to endpoint-scale
interpretation.

Predictions use the complete native observable map \texttt{x\_to\_yhat}; the latent $\phi$ is never substituted for a measurement. An additive latent map can still produce nonadditive observed contrasts through the nonlinear measurement map, so observed nonadditivity is not evidence of a coding error. No calibration transform is fitted on test or replicate labels.

A rectangle is eligible only when all four sequences that differ at exactly two declared positions are released test rows; a single substitution needs both of its sequences there. Orientation is lexicographic in both position indices and both state pairs, so identities are label-independent, and duplicates are removed. Weights are three nested equal levels: each of the 28 eligible position pairs carries equal total weight, each of its backgrounds equal weight within the pair, and each eligible rectangle equal weight within its background; the weights sum to one. Single substitutions use the analogous rule over positions, backgrounds and substitution pairs. Enumeration precedes scoring, and 2083 of 2083 enumerated rectangles are supported, with zero unsupported and zero unresolved rectangle identities, because eligibility is decided by membership of the released test split alone.

\paragraph{Encoding rank and floor for this pipeline.} The diagnostic of Appendix~\ref{app:contrastrank} was applied to the splicing encoder on its own terms. The complete consumed input is \texttt{x\_ohe} as produced by MAVE-NN's \texttt{x\_to\_stats}, float32, 36 coordinates per sequence, and classes are formed by byte equality of that tensor, the same rule used for the siRNA encoder. All 3,696 endpoint states of the primary rectangle population remain distinct, so $H$ restricts to the identity on them and $\operatorname{rank}(CH)=\operatorname{rank}(C)=1{,}986$ against $q=2{,}083$ contrasts. The 97 forced-zero directions are therefore dependencies of the rectangle enumeration rather than of the encoding. Because $\operatorname{col}(CH)\subseteq\operatorname{col}(C)$ and the two ranks are equal, the column spaces coincide; the recorded contrast vector is $C$ applied to the recorded endpoint means and so lies in $\operatorname{col}(CH)$, making the floor exactly zero. The computed value is $6.2\times10^{-30}$, at float rounding. Library~2 gives 3,521 states, $\operatorname{rank}(CH)=\operatorname{rank}(C)=1{,}856$ against $q=1{,}947$, 91 design dependencies and floor zero on the same argument. Ranks were obtained by exact sparse rational elimination and cross-checked against GF($p$) elimination at two primes and against the SVD rank. This is the second instance of the diagnostic and the one in which it does not fire; per-pipeline rows, certificates and encoding descriptions are in \texttt{rank\_floor\_generalization}.

The additive comparator checkpoint was not available in the installed example-model directory used for this evaluation. Historical additive assets exist elsewhere in the author repository, but a compatible evaluation of those assets was not established here. The additive model was not refitted; the authors' tutorial values ($I_{\mathrm{var}}=0.216834$, $I_{\mathrm{pred}}=0.219266$) are external context, not measurements from our evaluation.

The replicate library is matched by exact sequence only, never by its own set column: 5884 of 6078 test sequences recur, 1947 of 2083 rectangles and 11937 of 12486 single substitutions survive, and the losses are reported rather than back-filled. It is measured assay replication of the same library design, not independent biological replication of each sequence or rectangle. No replicate-agreement threshold was chosen after seeing model errors and replicate agreement is not treated as an exact noise ceiling. Full identities, weights and per-contrast values are in \texttt{mavenn\_fidelity}, \texttt{mavenn\_coverage}, \texttt{mavenn\_endpoints}, \texttt{mavenn\_contrasts}, \texttt{mavenn\_replicate} and \texttt{mavenn\_comparator}.

\subsection{Encoding breadth audit: further measured panels}\label{app:breadth}
A separate audit applied the rank and floor calculation of Appendix~\ref{app:contrastrank} beyond the primary panel. Its candidate routes, eligibility rules and size caps were frozen before any new rank, floor or prediction error was computed (protocol SHA-256 \texttt{8a3c05d2\ldots}), and every selected case is reported, including zeros. It constructed no model or optimizer and ran no checkpoint inference. Classes use the byte-equality rule of Appendix~\ref{app:inputcollision}: for the GNN, all four forward arguments $X,A,\mathrm{mask},C$ after the historical training-support preprocessing, with no audit-chosen threshold; for MAVE-NN, the released one-hot input. The restriction added by an encoding is counted as $d_{\mathrm{enc}}=\operatorname{rank}(C)-\operatorname{rank}(CH)$, because $q-\operatorname{rank}(CH)$ also counts dependencies already present in the contrast design. Ranks are exact, by rational elimination or, where $C$ has full row rank, a signed identity minor. Floors use equal weights $1/q$ on each panel\textquotesingle s recorded response scale. Every value in Table~\ref{tab:breadth} was recomputed independently from the saved $C$, $H$ and measured contrasts.
\begin{table}[h]
\centering\footnotesize\setlength{\tabcolsep}{3.5pt}
\begin{tabular}{@{}lrrrrrrrrr@{}}\toprule
Panel & $m$ & $k$ & $q$ & $\operatorname{rank}C$ & $\operatorname{rank}CH$ & $d_{\mathrm{enc}}$ & Floor & Energy & Share\\\midrule
B3 (reproduced) & 165 & 90 & 140 & 140 & 72 & 68 & 0.009980 & 0.068205 & 14.63\%\\
Bramsen, expanded & 1932 & 1183 & 1845 & 1845 & 1115 & 730 & 0.007565 & 0.055180 & 13.71\%\\
\quad new contrasts only & 1792 & 1111 & 1705 & 1705 & 1043 & 662 & 0.007367 & 0.054111 & 13.61\%\\
MAVE-NN splicing & 3696 & 3696 & 2083 & 1986 & 1986 & 0 & 0 & 0.412839 & 0\%\\
GB1 binding & 17689 & 17689 & 15856 & 15856 & 15856 & 0 & 0 & 0.549741 & 0\%\\
\bottomrule\end{tabular}
\caption{Encoding rank and empirical floor on further measured panels, equal weights $1/q$. $d_{\mathrm{enc}}$ counts restrictions added by the encoding; $q-\operatorname{rank}C$ counts design dependencies (97 for splicing, none elsewhere). Energy is $q^{-1}\lVert\widetilde I\rVert^2$ and Share is the floor as a percentage of it, a different denominator from the fitted-error shares of Table~\ref{tab:floor}. Zero floors are exact: $H$ is injective, so $\operatorname{rank}(CH)=\operatorname{rank}(C)$ and the recorded contrasts lie in $\operatorname{col}(CH)$. The new-contrasts row is a follow-up check run after the frozen audit. Response scales differ between panels, so floors are compared within a panel, not across panels.}\label{tab:breadth}
\end{table}

\paragraph{Bramsen screen, expanded.} The expanded panel uses the primary spreadsheet of \citet{bramsen2009}, the same assay context (HeLa eGFP, 10\,nM, 72\,h), response ($1-$relative eGFP, unclipped) and W053/W207 anchor as B3. Of 2,208 parsed rows, 2,158 are unique finite measured pairs once empty rows and duplicated identities are removed. Excluding 181 unresolved or missing strand identities and 45 pairs that fail the original pairing-admission rule leaves 1,932 endpoints, and 1,845 anchored rectangles have all four corners measured (221 candidates lack one). All 165 original states are retained, and their raw tensors, labels and 90-class partition reproduce. The 1,767 added states change strand sequence or length, so only the original 165 satisfy the chemistry-only restriction of B3: the expanded contrasts measure anchored strand replacement, a broader estimand. No row of $CH$ is zero, so the 730 restrictions are joint, as in B3. After the frozen audit, a follow-up check removed the 140 original contrasts, each matched exactly once by its signed endpoint identities; the remaining 1,705 contrasts alone lose 662 directions with floor 0.007367, so the positive floor is not carried by the original panel. All these contrasts share the assay, controls and reference state of B3. They are within-study coverage, not an independent encoding or biological replication, and because the populations differ, the lower expanded floor is not an encoding improvement.

\paragraph{GB1 under the same encoder.} The released protein-G binding assay of \citet{olson2014}, distributed with MAVE-NN 1.1.3 on its log2 enrichment scale, is encoded by the same one-hot rule with a 20-letter alphabet over 55 residues. A frozen size cap selects the first 50 of 1,485 position-pair blocks in lexicographic order, residue 2 against residues 3--52, so coverage is deliberately limited rather than random. Within each block, rectangles are anchored at the first measured amino-acid pair and need four measured double mutants; no wild-type or single-mutant corner is imputed. Of 17,907 rows in the selected blocks, 17,689 form 15,856 complete rectangles (218 candidates lack a corner). $H$ is injective, so the floor is exactly zero. The selected rows mix the released training, validation and test splits and no saved predictions exist for them, so this is a descriptive encoding diagnostic, not held-out prediction. It adds an assay under an existing encoder, not an independent encoder. The splicing panel of Appendix~\ref{app:mavenn} reproduces with floor zero under both equal and nested weights.

\paragraph{Blocked routes and scope.} ENsiRNA-mod and MEG-mod were rechecked at unchanged commits and remain blocked (Appendix~\ref{app:independentattempt}): ENsiRNA-mod builds its graph from per-duplex Rosetta coordinates that are not available for these states, and the forward pass of MEG-mod uses per-example modification tokens that its release never defines. No geometry, token meaning or partial branch was substituted, so no independent complete modified-siRNA encoding passed the eligibility gate. The audit therefore supports a reproducible diagnostic and a same-study extension, not prevalence across independent encoders. It fits nothing, so it cannot say when restoring representability improves accuracy; a controlled design crossing encoding restriction with training coverage, at matched training sizes and with a correctly specified interaction model, would address that question.

\subsection{Reference-independent B3 decomposition}\label{app:b3decomp}
The verified 165-state panel is a complete 15 by 11 matrix with antisense rows and sense columns, including the shared reference state. For measured or predicted $Y$ set $\mu=\overline{Y}$, $a_i=\overline{Y_{i\cdot}}-\mu$, $b_j=\overline{Y_{\cdot j}}-\mu$ and $R_{ij}=Y_{ij}-\mu-a_i-b_j$, with equal weights over all 165 cells. Reconstruction is exact: $\sum_ia_i=\sum_jb_j=0$, every row and column sum of $R$ vanishes, the three centred components are mutually orthogonal and total centred energy splits exactly, the largest violation over all nine checks, a residual column sum, being 2.7e-15 against a $10^{-12}$ tolerance. The split is linear, so prediction error decomposes exactly as $\mathrm{MSE}=(\Delta\mu)^2+\overline{(\Delta a_i)^2}+\overline{(\Delta b_j)^2}+\overline{(\Delta R_{ij})^2}$, and the realised identity gap is at most $10^{-17}$.

Only saved full-precision endpoint predictions are used; no B3 fit or inference was performed. The historical ten-seed ensemble and its seeds are analysed separately from the three-seed utility ensemble, and seed spread is optimisation variability, not biological replication: the GNN\textquotesingle s nonadditive error component has mean 0.018185 and standard deviation 0.000067 across ten seeds. Two columns of the historical export, the \texttt{original} arms, reproduce the \texttt{corrected} arms bitwise over all 165 states and ten seeds although their checkpoints differ; this unresolved anomaly is retained in \texttt{b3decomp\_export\_anomalies} and carries no reported result, since every published B3 number uses the four audited families. Absolute amplitudes are kept beside fractions, because a large fraction of a tiny spread is not accurate recovery: predicted nonadditive energies are 1.6e-06 to 1.5e-04 against a measured 0.018138. Reference-based contrasts, their weighting and the 140-rectangle results are unchanged, and no projection fitted to evaluation labels is reported as a predictor.

\subsection{Training-support intervention: design and accounting}\label{app:support}
\begin{figure}[ht]\centering\includegraphics[width=0.7\textwidth]{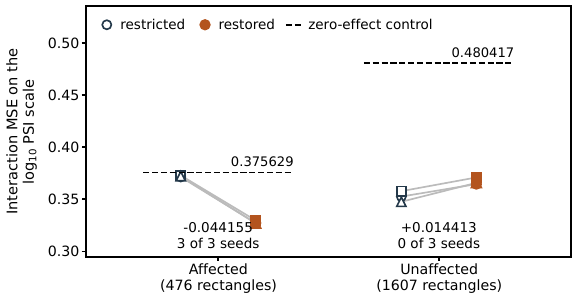}
\caption{Fixed-size training-support comparison, three paired seeds individually.
Restoration lowers interaction MSE on the affected rectangles and not the unaffected ones;
the zero-effect control is identical across arms within a subset and differs between them,
so it is drawn per subset. This is one controlled training-data comparison on one feature
(position 9, base G, smallest of 28 four-state training counts), only 2000 training rows are
replaced and the shared validation set carries none of it. It does not show that missing
support always causes error, nor that support alone makes an explanation
valid.}\label{fig:support}
\end{figure}

The feature is selected from training input counts: among positions admitting all four nucleotides, choose the smallest trainval count, breaking ties by position index and then nucleotide. That selects position 9, G, with 5286 trainval occurrences, which varies in the library and has measured restoration examples, so no globally unmeasured substitution is used. Both arms train on $14{,}238$ observations and share a core of $12{,}238$. The restricted arm adds $2{,}000$ feature-free observations to that core; the restored arm instead adds the first $2{,}000$ feature-carrying trainval sequences in lexicographic order. The removed feature-free observations are the last $2{,}000$ restricted-arm training rows in that order. Selection uses identities only, not response labels. Backgrounds are not individually matched, so other aspects of the training distribution can differ along with feature support. A shared validation set of 4881 feature-free rows keeps early stopping blind to the withheld feature, test identities are identical and excluded from every selection step, and representation and vocabulary are unchanged, so observations are removed, not the feature channel. Affected rectangles are those with the feature in at least one corner, fixed by identity rather than observed effect or error size, and the unaffected set is a specificity check. Three paired seeds per arm supplied six retained fits, with $62{,}424$ reported optimizer updates, no recorded training failure in that retained invocation and no GPU use. An earlier invocation completed six fits but failed while serializing an empirical-gauge coefficient undefined for a zero-support feature, before its per-seed results were persisted. The recovery therefore consumed twelve attempts against a six-attempt authorization. The same feature, design, identities, weights and seeds were reused; this was a repeated execution, not a second specification. The $62{,}424$-update count covers the retained invocation; a complete two-invocation update total is not established here. The attempt ledger and retained per-seed results are recorded separately in \texttt{support\_attempt\_ledger} and \texttt{support\_seed\_results}; unpersisted first-invocation outcomes are not represented as available per-seed evidence. The zero-effect control is identical across arms on identical identities, so an improvement in training loss alone would not count.\label{appendix:end}

%% file: references.bib
@article{huesken2005,
  author  = {Dieter Huesken and Joerg Lange and Craig Mickanin and Jan Weiler and Fred Asselbergs and Justin Warner and Brian Meloon and Sharon Engel and Avi Rosenberg and Dalia Cohen and Mark Labow and Mischa Reinhardt and François Natt and Jonathan Hall},
  title   = {Design of a genome-wide {siRNA} library using an artificial neural network},
  journal = {Nature Biotechnology},
  volume  = {23},
  number  = {8},
  pages   = {995--1001},
  year    = {2005},
  doi     = {10.1038/nbt1118},
  url     = {https://www.nature.com/articles/nbt1118}
}

@article{huesken2005correction,
  author  = {Dieter Huesken and Joerg Lange and Craig Mickanin and Jan Weiler and Fred Asselbergs and Justin Warner and Brian Meloon and Sharon Engel and Avi Rosenberg and Dalia Cohen and Mark Labow and Mischa Reinhardt and François Natt and Jonathan Hall},
  title   = {Corrigendum: Design of a genome-wide {siRNA} library using an artificial neural network},
  journal = {Nature Biotechnology},
  volume  = {23},
  number  = {10},
  pages   = {1315},
  year    = {2005},
  doi     = {10.1038/nbt1005-1315a},
  url     = {https://www.nature.com/articles/nbt1005-1315a}
}

@article{huesken2006correction,
  author  = {Dieter Huesken and Joerg Lange and Craig Mickanin and Jan Weiler and Fred Asselbergs and Justin Warner and Brian Meloon and Sharon Engel and Avi Rosenberg and Dalia Cohen and Mark Labow and Mischa Reinhardt and François Natt and Jonathan Hall},
  title   = {Corrigendum: Design of a genome-wide {siRNA} library using an artificial neural network},
  journal = {Nature Biotechnology},
  volume  = {24},
  number  = {8},
  pages   = {1033},
  year    = {2006},
  doi     = {10.1038/nbt0806-1033e},
  url     = {https://www.nature.com/articles/nbt0806-1033e}
}

@article{gneiting2007,
  author  = {Tilmann Gneiting and Adrian E. Raftery},
  title   = {Strictly Proper Scoring Rules, Prediction, and Estimation},
  journal = {Journal of the American Statistical Association},
  volume  = {102},
  number  = {477},
  pages   = {359--378},
  year    = {2007},
  doi     = {10.1198/016214506000001437},
  url     = {https://doi.org/10.1198/016214506000001437}
}

@article{elbashir2001,
  author  = {Sayda M. Elbashir and Jens Harborth and Winfried Lendeckel and Abdullah Yalcin and Klaus Weber and Thomas Tuschl},
  title   = {Duplexes of 21-nucleotide {RNAs} mediate {RNA} interference in cultured mammalian cells},
  journal = {Nature},
  volume  = {411},
  number  = {6836},
  pages   = {494--498},
  year    = {2001},
  doi     = {10.1038/35078107},
  url     = {https://doi.org/10.1038/35078107}
}

@article{reynolds2004,
  author  = {Angela Reynolds and Devin Leake and Queta Boese and Stephen Scaringe and William S. Marshall and Anastasia Khvorova},
  title   = {Rational {siRNA} design for {RNA} interference},
  journal = {Nature Biotechnology},
  volume  = {22},
  number  = {3},
  pages   = {326--330},
  year    = {2004},
  doi     = {10.1038/nbt936},
  url     = {https://doi.org/10.1038/nbt936}
}

@article{uitei2004,
  author  = {Kumiko Ui-Tei and Yuki Naito and Fumitaka Takahashi and Takeshi Haraguchi and Hiroko Ohki-Hamazaki and Aya Juni and Ryu Ueda and Kaoru Saigo},
  title   = {Guidelines for the selection of highly effective {siRNA} sequences for mammalian and chick {RNA} interference},
  journal = {Nucleic Acids Research},
  volume  = {32},
  number  = {3},
  pages   = {936--948},
  year    = {2004},
  doi     = {10.1093/nar/gkh247},
  url     = {https://doi.org/10.1093/nar/gkh247}
}

@article{khvorova2003,
  author  = {Anastasia Khvorova and Angela Reynolds and Sumedha D. Jayasena},
  title   = {Functional {siRNAs} and {miRNAs} Exhibit Strand Bias},
  journal = {Cell},
  volume  = {115},
  number  = {2},
  pages   = {209--216},
  year    = {2003},
  doi     = {10.1016/s0092-8674(03)00801-8},
  url     = {https://doi.org/10.1016/s0092-8674(03)00801-8}
}

@article{schwarz2003,
  author  = {Dianne S. Schwarz and György Hutvágner and Tingting Du and Zuoshang Xu and Neil Aronin and Phillip D. Zamore},
  title   = {Asymmetry in the Assembly of the {RNAi} Enzyme Complex},
  journal = {Cell},
  volume  = {115},
  number  = {2},
  pages   = {199--208},
  year    = {2003},
  doi     = {10.1016/s0092-8674(03)00759-1},
  url     = {https://doi.org/10.1016/s0092-8674(03)00759-1}
}

@article{jackson2003,
  author  = {Aimee L. Jackson and Steven R. Bartz and Janell Schelter and Sumire V. Kobayashi and Julja Burchard and Mao Mao and Bin Li and Guy Cavet and Peter S. Linsley},
  title   = {Expression profiling reveals off-target gene regulation by {RNAi}},
  journal = {Nature Biotechnology},
  year    = {2003},
  volume  = {21},
  number  = {6},
  pages   = {635--637},
  doi     = {10.1038/nbt831},
  url     = {https://doi.org/10.1038/nbt831}
}

@article{birmingham2006,
  author  = {Birmingham, Amanda and Anderson, Emily M. and Reynolds, Angela
             and Ilsley-Tyree, Diane and Leake, Devin and Fedorov, Yuriy
             and Baskerville, Scott and Maksimova, Elena and Robinson, Kathryn
             and Karpilow, Jon and Marshall, William S. and Khvorova, Anastasia},
  title   = {3' {UTR} seed matches, but not overall identity, are associated
             with {RNAi} off-targets},
  journal = {Nature Methods},
  year    = {2006},
  volume  = {3},
  number  = {3},
  pages   = {199--204},
  doi     = {10.1038/nmeth854},
}

@article{allerson2005,
  author  = {Charles R. Allerson and Namir Sioufi and Russell Jarres and Thazha P. Prakash and Nishant Naik and Andres Berdeja and Lisa Wanders and Richard H. Griffey and Eric E. Swayze and Balkrishen Bhat},
  title   = {Fully 2'-Modified Oligonucleotide Duplexes with Improved in Vitro Potency and Stability Compared to Unmodified Small Interfering {RNA}},
  journal = {Journal of Medicinal Chemistry},
  year    = {2005},
  volume  = {48},
  number  = {4},
  pages   = {901--904},
  doi     = {10.1021/jm049167j},
  url     = {https://doi.org/10.1021/jm049167j}
}

@article{bramsen2009,
  author  = {Bramsen, Jesper B. and Laursen, Maria B. and Nielsen, Anne F.
             and Hansen, Thomas B. and Bus, Claus and Langkjær, Niels
             and Babu, B. Ravindra and Højland, Torben and Abramov, Mikhail
             and Van Aerschot, Arthur and Odadzic, Dalibor and Smicius, Romualdas
             and Haas, Jens and Andree, Cordula and Barman, Jharna
             and Wenska, Malgorzata and Srivastava, Puneet and Zhou, Chuanzheng
             and Honcharenko, Dmytro and Hess, Simone and Müller, Elke
             and Bobkov, Georgii V. and Mikhailov, Sergey N. and Fava, Eugenio
             and Meyer, Thomas F. and Chattopadhyaya, Jyoti and Zerial, Marino
             and Engels, Joachim W. and Herdewijn, Piet and Wengel, Jesper
             and Kjems, Jørgen},
  title   = {A large-scale chemical modification screen identifies design rules
             to generate {siRNAs} with high activity, high stability and low toxicity},
  journal = {Nucleic Acids Research},
  year    = {2009},
  volume  = {37},
  number  = {9},
  pages   = {2867--2881},
  issn    = {0305-1048},
  doi     = {10.1093/nar/gkp106},
}

@article{nair2014,
  author  = {Nair, Jayaprakash K. and Willoughby, Jennifer L. S. and Chan, Amy
             and Charisse, Klaus and Alam, Md. Rowshon and Wang, Qianfan
             and Hoekstra, Menno and Kandasamy, Pachamuthu and Kel'in, Alexander V.
             and Milstein, Stuart and Taneja, Nate and O'Shea, Jonathan
             and Shaikh, Sarfraz and Zhang, Ligang and van der Sluis, Ronald J.
             and Jung, Michael E. and Akinc, Akin and Hutabarat, Renta
             and Kuchimanchi, Satya and Fitzgerald, Kevin and Zimmermann, Tracy
             and van Berkel, Theo J. C. and Maier, Martin A.
             and Rajeev, Kallanthottathil G. and Manoharan, Muthiah},
  title   = {Multivalent {N}-Acetylgalactosamine-Conjugated {siRNA} Localizes in
             Hepatocytes and Elicits Robust {RNAi}-Mediated Gene Silencing},
  journal = {Journal of the American Chemical Society},
  year    = {2014},
  volume  = {136},
  number  = {49},
  pages   = {16958--16961},
  issn    = {0002-7863},
  doi     = {10.1021/ja505986a},
}

@inproceedings{gilmer2017,
  author    = {Justin Gilmer and Samuel S. Schoenholz and Patrick F. Riley and Oriol Vinyals and George E. Dahl},
  title     = {Neural Message Passing for Quantum Chemistry},
  year      = {2017},
  booktitle = {International Conference on Machine Learning},
  pages     = {1263--1272},
  series    = {Proceedings of Machine Learning Research},
  volume    = {70},
  url       = {https://proceedings.mlr.press/v70/gilmer17a.html}
}

@misc{battaglia2018,
  author = {Peter W. Battaglia and Jessica B. Hamrick and Victor Bapst and Alvaro Sanchez-Gonzalez and Vinicius Zambaldi and Mateusz Malinowski and Andrea Tacchetti and David Raposo and Adam Santoro and Ryan Faulkner and Caglar Gulcehre and Francis Song and Andrew Ballard and Justin Gilmer and George Dahl and Ashish Vaswani and Kelsey Allen and Charles Nash and Victoria Langston and Chris Dyer and Nicolas Heess and Daan Wierstra and Pushmeet Kohli and Matt Botvinick and Oriol Vinyals and Yujia Li and Razvan Pascanu},
  title  = {Relational inductive biases, deep learning, and graph networks},
  year   = {2018},
  note   = {arXiv:1806.01261},
  url    = {https://arxiv.org/abs/1806.01261}
}

@inproceedings{xu2019,
  author    = {Keyulu Xu and Weihua Hu and Jure Leskovec and Stefanie Jegelka},
  title     = {How Powerful are Graph Neural Networks?},
  year      = {2019},
  booktitle = {International Conference on Learning Representations},
  url       = {https://openreview.net/forum?id=ryGs6iA5Km}
}

@inproceedings{zaheer2017,
  author    = {Manzil Zaheer and Satwik Kottur and Siamak Ravanbakhsh and Barnabas Poczos and Ruslan Salakhutdinov and Alexander Smola},
  title     = {Deep Sets},
  year      = {2017},
  booktitle = {Advances in Neural Information Processing Systems},
  volume    = {30},
  pages     = {3391--3401},
  url       = {https://papers.nips.cc/paper/6931-deep-sets.pdf}
}

@article{cmsirna2026,
  author  = {Sicheng He and Cheng Chen and Xianrun Pan and Gaogao Xue and Yu Yang and Juan Feng and Hasan Zulfiqar and Yang Zhang and Kejun Deng},
  title   = {{CMsiRNAdb}: a database of chemically modified {siRNA} silencing efficiency for nucleic acid drug design},
  year    = {2026},
  journal = {BMC Bioinformatics},
  volume  = {27},
  pages   = {33},
  doi     = {10.1186/s12859-025-06359-y},
  url     = {https://doi.org/10.1186/s12859-025-06359-y}
}

@article{davis2025,
  author  = {Davis, Sarah M. and Hildebrand, Samuel and MacMillan, Hannah J.
             and Monopoli, Kathryn R. and Buchwald, Julianna and Sousa, Jacquelyn
             and Cooper, David and Ly, Socheata and Echeverria, Dimas
             and McHugh, Nicholas and Ferguson, Chantal and Coles, Andrew
             and Hariharan, Vignesh N. and O'Reilly, Daniel and Tang, Qi
             and Furgal, Raymond and Yamada, Ken and Alterman, Julia F.
             and Gilbert, James W. and Knox, Emily and Pineda, Yamilett
             and Weston, Caitlyn N. and Baer, Christina E. and Pai, Athma A.
             and Khvorova, Anastasia},
  title   = {Systematic analysis of {siRNA} and {mRNA} features impacting fully
             chemically modified {siRNA} efficacy},
  journal = {Nucleic Acids Research},
  year    = {2025},
  volume  = {53},
  number  = {12},
  pages   = {gkaf479},
  issn    = {1362-4962},
  doi     = {10.1093/nar/gkaf479},
}

@article{meg2026,
  author  = {Yuanting Chen and Mengyu Tong and Long Chen and Weihua Li and Keyun Zhu and Jiaying Li and Yun Tang and Guixia Liu},
  title   = {{MEG-mod}: A Multiview Enhanced Graph Neural Network for Knockdown Efficiency Prediction of Chemically Modified {siRNA}},
  year    = {2026},
  journal = {Journal of Medicinal Chemistry},
  volume  = {69},
  pages   = {13434--13451},
  doi     = {10.1021/acs.jmedchem.6c00411},
  url     = {https://doi.org/10.1021/acs.jmedchem.6c00411}
}

@article{ridge1970,
  author  = {Arthur E. Hoerl and Robert W. Kennard},
  title   = {Ridge Regression: Biased Estimation for Nonorthogonal Problems},
  year    = {1970},
  journal = {Technometrics},
  volume  = {12},
  pages   = {55--67},
  doi     = {10.1080/00401706.1970.10488634},
  url     = {https://doi.org/10.1080/00401706.1970.10488634}
}

@article{extratrees2006,
  author  = {Pierre Geurts and Damien Ernst and Louis Wehenkel},
  title   = {Extremely randomized trees},
  year    = {2006},
  journal = {Machine Learning},
  volume  = {63},
  pages   = {3--42},
  doi     = {10.1007/s10994-006-6226-1},
  url     = {https://doi.org/10.1007/s10994-006-6226-1}
}

@article{numpy2020,
  author  = {Charles R. Harris and K. Jarrod Millman and Stéfan J. van der Walt and others},
  title   = {Array programming with {NumPy}},
  year    = {2020},
  journal = {Nature},
  volume  = {585},
  pages   = {357--362},
  doi     = {10.1038/s41586-020-2649-2},
  url     = {https://doi.org/10.1038/s41586-020-2649-2}
}

@article{scipy2020,
  author  = {Pauli Virtanen and Ralf Gommers and Travis E. Oliphant and others},
  title   = {{SciPy} 1.0: fundamental algorithms for scientific computing in {Python}},
  year    = {2020},
  journal = {Nature Methods},
  volume  = {17},
  pages   = {261--272},
  doi     = {10.1038/s41592-019-0686-2},
  url     = {https://doi.org/10.1038/s41592-019-0686-2}
}

@article{matplotlib2007,
  author  = {John D. Hunter},
  title   = {{Matplotlib}: A {2D} Graphics Environment},
  year    = {2007},
  journal = {Computing in Science \& Engineering},
  volume  = {9},
  pages   = {90--95},
  doi     = {10.1109/MCSE.2007.55},
  url     = {https://doi.org/10.1109/MCSE.2007.55}
}

@misc{adam2015,
  author = {Diederik P. Kingma and Jimmy Ba},
  title  = {{Adam}: A Method for Stochastic Optimization},
  year   = {2014},
  note   = {arXiv:1412.6980; author manuscript},
  url    = {https://arxiv.org/abs/1412.6980}
}

@misc{adamw2019,
  author = {Ilya Loshchilov and Frank Hutter},
  title  = {Decoupled Weight Decay Regularization},
  year   = {2017},
  note   = {arXiv:1711.05101; author manuscript},
  url    = {https://arxiv.org/abs/1711.05101}
}

@misc{layernorm2016,
  author = {Jimmy Lei Ba and Jamie Ryan Kiros and Geoffrey E. Hinton},
  title  = {Layer Normalization},
  year   = {2016},
  note   = {arXiv:1607.06450; author manuscript},
  url    = {https://arxiv.org/abs/1607.06450}
}

@misc{rgcn2018,
  author = {Michael Schlichtkrull and Thomas N. Kipf and Peter Bloem and Rianne van den Berg and Ivan Titov and Max Welling},
  title  = {Modeling Relational Data with Graph Convolutional Networks},
  year   = {2017},
  note   = {arXiv:1703.06103; author manuscript},
  url    = {https://arxiv.org/abs/1703.06103}
}

@misc{gcn2017,
  author = {Thomas N. Kipf and Max Welling},
  title  = {Semi-Supervised Classification with Graph Convolutional Networks},
  year   = {2016},
  note   = {arXiv:1609.02907; author manuscript},
  url    = {https://arxiv.org/abs/1609.02907}
}

@misc{graphsage2017,
  author = {William L. Hamilton and Rex Ying and Jure Leskovec},
  title  = {Inductive Representation Learning on Large Graphs},
  year   = {2017},
  note   = {arXiv:1706.02216; author manuscript},
  url    = {https://arxiv.org/abs/1706.02216}
}

@misc{kim2014,
  author = {Yoon Kim},
  title  = {Convolutional Neural Networks for Sentence Classification},
  year   = {2014},
  note   = {arXiv:1408.5882; author manuscript},
  url    = {https://arxiv.org/abs/1408.5882}
}

@misc{ensemble2017,
  author = {Balaji Lakshminarayanan and Alexander Pritzel and Charles Blundell},
  title  = {Simple and Scalable Predictive Uncertainty Estimation using Deep Ensembles},
  year   = {2016},
  note   = {arXiv:1612.01474; author manuscript},
  url    = {https://arxiv.org/abs/1612.01474}
}

@misc{torch2019,
  author = {Adam Paszke and Sam Gross and Francisco Massa and others},
  title  = {{PyTorch}: An Imperative Style, High-Performance Deep Learning Library},
  year   = {2019},
  note   = {arXiv:1912.01703; author manuscript},
  url    = {https://arxiv.org/abs/1912.01703}
}

@article{ensi2025,
  author  = {Wenchong Tan and Mingshu Dai and Shimin Ye and Xin Tang and Dawei Jiang and Dong Chen and Hongli Du},
  title   = {{ENsiRNA}: A Multimodality Method for {siRNA-mRNA} and Modified {siRNA} Efficacy Prediction Based on Geometric Graph Neural Network},
  journal = {Journal of Molecular Biology},
  volume  = {437},
  pages   = {169131},
  year    = {2025},
  doi     = {10.1016/j.jmb.2025.169131},
  url     = {https://pubmed.ncbi.nlm.nih.gov/40194620/}
}

@article{rf2001,
  author  = {Leo Breiman},
  title   = {Random Forests},
  journal = {Machine Learning},
  volume  = {45},
  pages   = {5--32},
  year    = {2001},
  doi     = {10.1023/A:1010933404324},
  url     = {https://doi.org/10.1023/A:1010933404324}
}

@article{leakage2023,
  author  = {Sayash Kapoor and Arvind Narayanan},
  title   = {Leakage and the reproducibility crisis in machine-learning-based science},
  journal = {Patterns},
  volume  = {4},
  pages   = {100804},
  year    = {2023},
  doi     = {10.1016/j.patter.2023.100804},
  url     = {https://par.nsf.gov/servlets/purl/10513990}
}

@article{bootstrap1979,
  author  = {Bradley Efron},
  title   = {Bootstrap Methods: Another Look at the Jackknife},
  journal = {The Annals of Statistics},
  volume  = {7},
  pages   = {1--26},
  year    = {1979},
  doi     = {10.1214/aos/1176344552},
  url     = {https://doi.org/10.1214/aos/1176344552}
}

@article{dropout2014,
  author  = {Nitish Srivastava and Geoffrey Hinton and Alex Krizhevsky and Ilya Sutskever and Ruslan Salakhutdinov},
  title   = {Dropout: A Simple Way to Prevent Neural Networks from Overfitting},
  journal = {Journal of Machine Learning Research},
  volume  = {15},
  pages   = {1929--1958},
  year    = {2014},
  url     = {https://www.jmlr.org/papers/v15/srivastava14a.html}
}

@article{sklearn2011,
  author  = {Fabian Pedregosa and Gaël Varoquaux and Alexandre Gramfort and others},
  title   = {{Scikit-learn}: Machine Learning in {Python}},
  journal = {Journal of Machine Learning Research},
  volume  = {12},
  pages   = {2825--2830},
  year    = {2011},
  url     = {https://jmlr.org/papers/v12/pedregosa11a.html}
}

@inproceedings{wilds2021,
  author    = {Pang Wei Koh and Shiori Sagawa and Henrik Marklund and others},
  title     = {{WILDS}: A Benchmark of in-the-Wild Distribution Shifts},
  booktitle = {International Conference on Machine Learning, PMLR},
  pages     = {5637--5664},
  year      = {2021},
  url       = {https://proceedings.mlr.press/v139/koh21a.html}
}

@inproceedings{oligogym2025,
  author    = {Rachapun Rotrattanadumrong and Carlo {De Donno}},
  title     = {{OligoGym}: Curated Datasets and Benchmarks for Oligonucleotide Drug Discovery},
  booktitle = {Advances in Neural Information Processing Systems},
  volume    = {38},
  year      = {2025},
  url       = {https://proceedings.neurips.cc/paper_files/paper/2025/hash/4927dd556ff6e49671e3a77ac1f12487-Abstract-Datasets_and_Benchmarks_Track.html}
}

@inproceedings{chen2023harsanyi,
  author    = {Lu Chen and Siyu Lou and Keyan Zhang and Jin Huang and Quanshi Zhang},
  title     = {{HarsanyiNet}: Computing Accurate Shapley Values in a Single Forward Propagation},
  booktitle = {International Conference on Machine Learning, PMLR},
  pages     = {4804--4825},
  year      = {2023},
  url       = {https://proceedings.mlr.press/v202/chen23s.html}
}

@inproceedings{xia2023ncm,
  author    = {Kevin Xia and Yushu Pan and Elias Bareinboim},
  title     = {Neural Causal Models for Counterfactual Identification and Estimation},
  booktitle = {International Conference on Learning Representations},
  year      = {2023},
  url       = {https://arxiv.org/abs/2210.00035}
}

@article{Wong2018SpliceSites,
  author  = {Wong, Mandy S. and Kinney, Justin B. and Krainer, Adrian R.},
  title   = {Quantitative Activity Profile and Context Dependence of All Human 5' Splice Sites},
  journal = {Molecular Cell},
  year    = {2018},
  volume  = {71},
  number  = {6},
  pages   = {1012--1026.e3},
  doi     = {10.1016/j.molcel.2018.07.033},
  url     = {https://pmc.ncbi.nlm.nih.gov/articles/PMC6179149/}
}

@inproceedings{tan2024consistency,
  author    = {Jiyuan Tan and Jose Blanchet and Vasilis Syrgkanis},
  title     = {Consistency of Neural Causal Partial Identification},
  booktitle = {Advances in Neural Information Processing Systems},
  year      = {2024},
  note      = {Updated arXiv version 3, June 2025},
  url       = {https://arxiv.org/abs/2405.15673}
}

@inproceedings{lengerich2020pure,
  author    = {Benjamin Lengerich and Sarah Tan and Chun-Hao Chang and Giles Hooker and Rich Caruana},
  title     = {Purifying Interaction Effects with the Functional {ANOVA}: An Efficient Algorithm for Recovering Identifiable Additive Models},
  booktitle = {International Conference on Artificial Intelligence and Statistics},
  pages     = {2402--2412},
  year      = {2020},
  url       = {https://proceedings.mlr.press/v108/lengerich20a.html}
}

@article{kuskova2026real,
  author  = {Valentina Kuskova and Dmitry Zaytsev and Michael Coppedge},
  title   = {When Are Neural Interaction Discoveries Real? Identifiability, Recoverability, and a Pre-Fit Diagnostic},
  journal = {arXiv preprint arXiv:2606.08390},
  year    = {2026},
  url     = {https://arxiv.org/abs/2606.08390}
}

@article{bilodeau2024,
  author  = {Blair Bilodeau and Natasha Jaques and Pang Wei Koh and Been Kim},
  title   = {Impossibility theorems for feature attribution},
  journal = {PNAS},
  volume  = {121},
  number  = {2},
  pages   = {e2304406120},
  year    = {2024},
  doi     = {10.1073/pnas.2304406120},
  url     = {https://doi.org/10.1073/pnas.2304406120}
}

@article{koo2023,
  author  = {Antonio Majdandzic and Chandana Rajesh and Peter K. Koo},
  title   = {Correcting gradient-based interpretations of deep neural networks for genomics},
  journal = {Genome Biology},
  volume  = {24},
  pages   = {109},
  year    = {2023},
  doi     = {10.1186/s13059-023-02956-3},
  url     = {https://doi.org/10.1186/s13059-023-02956-3}
}

@article{mavenn2022,
  author  = {Ammar Tareen and Mahdi Kooshkbaghi and Anna Posfai and William T. Ireland and David M. McCandlish and Justin B. Kinney},
  title   = {{MAVE-NN}: learning genotype-phenotype maps from multiplex assays of variant effect},
  journal = {Genome Biology},
  volume  = {23},
  pages   = {98},
  year    = {2022},
  doi     = {10.1186/s13059-022-02661-7},
  url     = {https://doi.org/10.1186/s13059-022-02661-7}
}

@article{azzolin2026,
  author  = {Steve Azzolin and Stefano Teso and Bruno Lepri and Andrea Passerini and Sagar Malhotra},
  title   = {{GNN} Explanations that do not Explain and How to find Them},
  journal = {arXiv preprint arXiv:2601.20815},
  year    = {2026},
  url     = {https://arxiv.org/abs/2601.20815}
}

@inproceedings{azzolin2025,
  author    = {Steve Azzolin and Antonio Longa and Stefano Teso and Andrea Passerini},
  title     = {Reconsidering Faithfulness in Regular, Self-Explainable and Domain Invariant {GNNs}},
  booktitle = {International Conference on Learning Representations (ICLR)},
  year      = {2025},
  url       = {https://arxiv.org/abs/2406.15156}
}

@article{chen2020,
  author  = {Hugh Chen and Joseph D. Janizek and Scott Lundberg and Su-In Lee},
  title   = {True to the Model or True to the Data?},
  journal = {arXiv preprint arXiv:2006.16234},
  year    = {2020},
  url     = {https://arxiv.org/abs/2006.16234}
}

@article{saliency2013,
  author  = {Karen Simonyan and Andrea Vedaldi and Andrew Zisserman},
  title   = {Deep Inside Convolutional Networks: Visualising Image Classification Models and Saliency Maps},
  journal = {arXiv preprint arXiv:1312.6034},
  year    = {2013},
  url     = {https://arxiv.org/abs/1312.6034}
}

@article{boxhunter1961,
  author  = {G. E. P. Box and J. S. Hunter},
  title   = {The {$2^{k-p}$} Fractional Factorial Designs Part {I}},
  journal = {Technometrics},
  volume  = {3},
  number  = {3},
  pages   = {311--351},
  year    = {1961},
  doi     = {10.1080/00401706.1961.10489951},
  url     = {https://doi.org/10.1080/00401706.1961.10489951}
}

@article{wuchen1992,
  author  = {Wu, C. F. J. and Chen, Youyi},
  title   = {A graph-aided method for planning two-level experiments when
             certain interactions are important},
  journal = {Technometrics},
  year    = {1992},
  volume  = {34},
  number  = {2},
  pages   = {162--175},
  doi     = {10.1080/00401706.1992.10484905},
  url     = {https://doi.org/10.1080/00401706.1992.10484905},
}

@article{grabisch1999,
  author  = {Michel Grabisch and Marc Roubens},
  title   = {An axiomatic approach to the concept of interaction among players in cooperative games},
  journal = {International Journal of Game Theory},
  volume  = {28},
  pages   = {547--565},
  year    = {1999},
  doi     = {10.1007/s001820050125},
  url     = {https://doi.org/10.1007/s001820050125}
}

@article{damour2022,
  author  = {Alexander D'Amour and Katherine Heller and Dan Moldovan and others},
  title   = {Underspecification Presents Challenges for Credibility in Modern Machine Learning},
  journal = {Journal of Machine Learning Research},
  year    = {2022},
  volume  = {23},
  number  = {226},
  pages   = {1--61},
  url     = {https://jmlr.org/papers/v23/20-1335.html}
}

@inproceedings{majdandzic2021workshop,
  author    = {Antonio Majdandzic and Peter K. Koo},
  title     = {Statistical correction of input gradients for black box models trained with categorical input features},
  booktitle = {ICML 2021 Workshop on Computational Biology},
  year      = {2021},
  url       = {https://icml-compbio.github.io/2021/papers/WCBICML2021_paper_56.pdf}
}

@article{olson2014,
  author  = {C. Anders Olson and Nicholas C. Wu and Ren Sun},
  title   = {A comprehensive biophysical description of pairwise epistasis throughout an entire protein domain},
  journal = {Current Biology},
  year    = {2014},
  volume  = {24},
  number  = {22},
  pages   = {2643--2651},
  doi     = {10.1016/j.cub.2014.09.072},
  url     = {https://doi.org/10.1016/j.cub.2014.09.072}
}
